\pgfplotsset{compat=1.18}
\newcommand{\tikzfontsize}{\fontsize{6pt}{7pt}\selectfont}
\tikzstyle{startstop} = [rectangle, rounded corners, minimum width=3cm, minimum height=1cm, text centered, draw=black, fill=cyan!30]
\tikzstyle{process} = [rectangle, minimum width=3cm, minimum height=1cm, text centered, draw=black, fill=orange!30]
\tikzstyle{miniprocess} = [rectangle, minimum width=2cm, minimum height=1cm, text centered, draw=black, fill=orange!30]
\tikzstyle{decision} = [diamond, minimum width=3cm, minimum height=1cm, text centered, draw=black, fill=red!30]
\tikzstyle{arrow} = [thick,->,>=stealth]
\definecolor{annotateblue}{RGB}{0,0,128}
\definecolor{annotategreen}{RGB}{0,100,0}
\definecolor{lightgray}{gray}{0.9}
\definecolor{lightred}{rgb}{1.0,0.6,0.6}
\newtheorem{lemma}{Lemma}  
\newtheorem{corollary}{Corollary}
\lstdefinelanguage{PDDL}{
  keywords={
    define, domain, problem, :requirements, :typing, :constants, :predicates,
    :action, :parameters, :precondition, :effect, :objects, :init, :goal,
    :functions, :durative-action, :duration, :condition, :metric, minimize,
    maximize, and, or, not, forall, exists, when, assign, increase, decrease,
    at, start, end, over, all, number, object
  },
  keywordstyle=\color{blue}\bfseries, 
  keywords=[2]{
    =, >=, <=, <, >, +, -, *, /
  }, 
  keywordstyle=[2]\color{orange}, 
  identifierstyle=\color{black}, 
  sensitive=false, 
  morecomment=[l]{;}, 
  commentstyle=\color{purple}\itshape, 
  morestring=[b]", 
  stringstyle=\color{red},
}[keywords,comments,strings] 
\lstdefinestyle{pddlstyle}{
    language=PDDL, 
    basicstyle=\small\ttfamily,
    breaklines=true,
    showstringspaces=false,
    frame=single,
    numbers=left,
    numberstyle=\tiny\color{gray}
}
\DeclareMathOperator*{\argmin}{arg\,min}
\definecolor{codegreen}{rgb}{0,0.6,0}
\definecolor{codegray}{rgb}{0.5,0.5,0.5}
\definecolor{codepurple}{rgb}{0.58,0,0.82}
\definecolor{backcolour}{rgb}{0.95,0.95,0.92}
\lstdefinestyle{protobuf}{
    backgroundcolor=\color{backcolour},   
    commentstyle=\color{codegreen},
    keywordstyle=\color{blue},
    numberstyle=\tiny\color{codegray},
    stringstyle=\color{codepurple},
    basicstyle=\ttfamily\small,
    breakatwhitespace=false,         
    breaklines=true,                 
    captionpos=b,                    
    keepspaces=true,                 
    numbers=left,                    
    numbersep=5pt,                  
    showspaces=false,                
    showstringspaces=false,
    showtabs=false,                  
    tabsize=2,
    frame=single
}
\lstdefinestyle{python}{
    language=Python,
    backgroundcolor=\color{backcolour},
    commentstyle=\color{codegreen},
    keywordstyle=\color{blue},
    numberstyle=\tiny\color{codegray},
    stringstyle=\color{codepurple},
    basicstyle=\ttfamily\small,
    breakatwhitespace=false,         
    breaklines=true,                 
    captionpos=b,                    
    keepspaces=true,                 
    numbers=left,                    
    numbersep=5pt,                  
    showspaces=false,                
    showstringspaces=false,
    showtabs=false,                  
    tabsize=4,
    frame=single
}
\newcommand{\ALAS}{\mathsf{Alas}}
\newcommand{\MP}{\mathsf{Alas}}
\renewcommand\paragraph{\@startsection{paragraph}{4}{\z@}%
  {0.5ex \@plus0.5ex \@minus0.2ex}%
  {-1em}%
  {\normalfont\normalsize\bfseries}}
\title{ALAS: A Stateful Multi-LLM Agent Framework for Disruption-Aware Planning}
\author{%
  Edward Y. Chang\thanks{echang@cs.stanford.edu} \\
  Department of Computer Science\\
  Stanford University\\
   \And
  Longling Gerng\\
  Department of Computer Science\\
  Stanford University\\
}
\begin{document}
\maketitle

\begin{abstract}
Large language models (LLMs) excel at rapid generation of text and 
multimodal content, yet they falter on transaction‐style planning that 
demands \emph{ACID-like guarantees} and real-time disruption recovery. 
We present \emph{Adaptive LLM Agent System} ($\ALAS$), a framework that tackles four fundamental LLM deficits:
(i) absence of self-verification, (ii) context erosion, 
(iii) next-token myopia, and (iv) lack of persistent state. 
$\ALAS$ decomposes each plan into role-specialized agents, equips them with
automatic state tracking, and coordinates them through a lightweight
protocol. When disruptions arise, agents apply \emph{history-aware local 
compensation}, avoiding costly global replanning and containing cascade 
effects. 
On real-world, large-scale job-shop scheduling benchmarks, 
$\ALAS$ sets new best results for static sequential planning and
excels in dynamic reactive scenarios with unexpected disruptions. 
These gains show that principled modularization plus targeted
compensation can unlock scalable and resilient planning with LLMs.
\end{abstract}


\needspace{-0.1in}
\vspace{-.1in} 
\section{Introduction} \label{sec:ALAS-intro}
\vspace{-.1in} 

Large Language Models (LLMs) have revolutionized AI, demonstrating remarkable capabilities across a wide range of natural language tasks~\cite{LLMSurvey2025, matarazzo2025SurVeyLLMs, minaee2025llmssurvey, wan2023efficientLLMSurVey}. However, despite their fluency and versatility, standalone LLMs remain fundamentally limited when applied to planning and decision-making~\cite{lecun2022pathways}, particularly in scenarios that demand long-range consistency, coordination, or adaptive behavior. Even in seemingly simple tasks, such as generating travel itineraries, constructing execution sequences, or resolving scheduling constraints, LLMs often produce incomplete, inconsistent, or logically invalid outputs.

These shortcomings stem from architectural limitations intrinsic to transformer-based models. LLMs lack internal mechanisms for verifying their own outputs~\cite{godel1931english, hong2024verificationabilities}, 
exhibit solution space bias from maximum-likelihood decoding~\cite{SocraSynthChangCSCI2023, holtzman2020curious, radford2019language}, 
suffer from attention drift and information loss in long contexts~\cite{hsieh2024lostinthemiddle, liu-etal-2024-lost, vaswani2017attention, xiao2024attentionsink}, 
and accumulate cascading errors across reasoning chains~\cite{chu2024COTsurvey,patel2024multiLogiEval,xiong2024largelanguagemodelslearn}. 
Without persistent memory, they also fail to track commitments, causal dependencies, or temporal constraints in planning stages, leading to hallucinations, constraint violations, and incoherent updates.

Such limitations are particularly acute in dynamic environments where plans must respond to runtime disruptions, e.g., last-minute cancelations, emergent constraints, or unexpected failures. Domains like logistics, event coordination, and industrial operations demand systems that can revise partial plans while preserving consistency and feasibility. In these reactive settings, global replanning with traditional optimization methods can be counterproductive: even small disruptions may trigger wholesale rescheduling, resulting in excessive job movement, increased latency, and degraded performance. Conversely, the stateless and myopic behavior of LLMs often leads to invalid responses.

To address these challenges, we propose $\ALAS$ (Adaptive LLM Agent System), a multi-agent architecture designed for structured planning and adaptive execution. Rather than treating the LLM as a monolithic planner, $\ALAS$ orchestrates a network of lightweight agents, each tailored to mitigate a specific structural limitation:

\begin{enumerate}[leftmargin=1.2em, topsep=0pt, itemsep=-.1pt, label=\arabic*.]
\item \emph{Validation Agents} verify feasibility and enforce hard constraints.
\item \emph{Domain Agents} explore low-probability, high-utility alternatives to reduce solution bias.
\item \emph{Context Agents} preserve coherence by operating within semantically scoped subcontexts.
\item \emph{Monitoring Agents} detect anomalies and trigger local corrective actions.
\item \emph{Memory Modules} maintain evolving state and dependency graphs for rollback and consistency.
\end{enumerate}

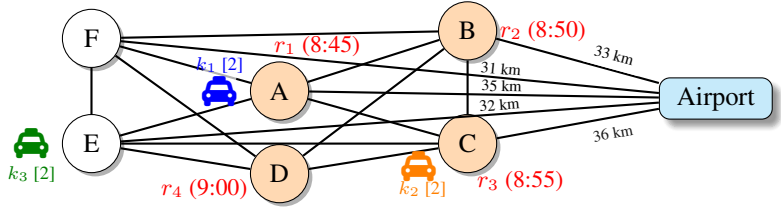
\begin{figure}[t!]
\vspace{-.15in}
    \centering
    \begin{tikzpicture}[
        location/.style={
            circle,
            draw=black,
            fill=yellow!20,
            minimum size=0.4cm, 
            text centered,
            font=\normalsize, 
            drop shadow={shadow xshift=0.5mm, shadow yshift=-0.5mm, opacity=0.6}
        },
        airport/.style={
            rectangle,
            rounded corners,
            draw=black,
            fill=cyan!20,
            minimum width=1.5cm, 
            minimum height=0.5cm, 
            text centered,
            font=\normalsize, 
            blur shadow={shadow xshift=1mm, shadow yshift=-1mm, shadow blur steps=5}
        },
        passenger/.style={
            font=\small\color{red} 
        },
        vehicle/.style={
            font=\small\color{blue} 
        },
        urban_path/.style={thick},
        airport_path/.style={thick},
        taxi/.style={scale=1.3, text opacity=1} 
    ]
        \node[location, fill=orange!30, draw, inner sep=2pt, minimum size=22pt] (A) at (0.0,0.9) {A}; 
        \node[location, fill=orange!30, draw, inner sep=2pt, minimum size=22pt] (B) at (2.5,1.7) {B}; 
        \node[location, fill=orange!30, draw, inner sep=2pt, minimum size=22pt] (C) at (2.5,0.2) {C};
        \node[location, fill=orange!30, draw, inner sep=2pt, minimum size=22pt] (D) at (0,-0.2) {D};
        \node[location, fill=white, draw, inner sep=2pt, minimum size=22pt] (E) at (-2.5,0.2) {E};
        \node[location, fill=white, draw, inner sep=2pt, minimum size=22pt] (F) at (-2.5,1.6) {F}; 
            
        \node[airport] (G) at (5.8,0.8) {Airport}; 
        
        \foreach \from/\to in {A/B, B/C, C/D, D/E, E/F, F/A, A/C, B/D, C/E, D/F, E/A, F/B}
            \draw[urban_path] (\from) -- (\to) node[midway,fill=none] {};
            
        \draw[airport_path] (A) -- (G) node[pos=0.55,sloped,above,yshift=-0.5ex,font=\tiny,opacity=1.0] {35 km};
        \draw[airport_path] (B) -- (G) node[pos=0.7,sloped,above,fill=white,font=\tiny,opacity=1.0] {33 km};
        \draw[airport_path] (C) -- (G) node[pos=0.7,sloped,below,font=\tiny,opacity=0.9] {36 km};
        \draw[airport_path] (E) -- (G) node[pos=0.705,sloped,above,yshift=-0.5ex,font=\tiny,opacity=1.0] {32 km};
        \draw[airport_path] (F) -- (G) node[pos=0.705,sloped,above,yshift=-0.5ex,font=\tiny,opacity=1.0] {31 km};
        
        \node[circle, fill=white, opacity=0.7, minimum size=13pt] at ($(A) + (-0.8,0.2)$) {};
        \node[blue, taxi] at ($(A) + (-0.8,0.0)$) {\faIcon{taxi}};
        \node[blue, font=\scriptsize] at ($(A) + (-0.8,0.35)$) {$k_1$ [2]}; 
        
        \node[circle, fill=white, opacity=0.7, minimum size=13pt] at ($(C) + (-0.6,-0.5)$) {};
        \node[orange, taxi] at ($(C) + (-0.65,-0.3)$) {\faIcon{taxi}};
        \node[orange, font=\scriptsize] at ($(C) + (-0.6,-0.6)$) {$k_2$ [2]}; 
        
        \node[circle, fill=white, opacity=0.7, minimum size=13pt] at ($(E) + (-0.8,0)$) {};
        \node[green!50!black, taxi] at ($(E) + (-0.8,0)$) {\faIcon{taxi}};
        \node[green!50!black, font=\scriptsize] at ($(E) + (-0.8,-0.4)$) {$k_3$ [2]}; 
        
        \node[passenger] at ($(A)+(0.5,0.6)$) {$r_1$ (8:45)};
        \node[passenger] at ($(B)+(1.0,0.0)$) {$r_2$ (8:50)};
        \node[passenger] at ($(C)+(0.7,-0.5)$) {$r_3$ (8:55)};
        \node[passenger] at ($(D)+(-1.0,-0.2)$) {$r_4$ (9:00)};
    \end{tikzpicture}
    \vspace{-.05in}
    \caption{Network $G=(V,E)$ with urban travel times $\tau_{ij}=10$ minutes and airport routes distance specified on the figure. Static scenarios can be solved by MILP or Column Generation. Dynamic scenarios (e.g., an accident, a cancellation, new passenger requests) must be addressed by $\ALAS$.}
    \label{fig:URS-DirectedGraph}
    \vspace{-.2in}
\end{figure} 

To concretely illustrate these challenges, we adopt the \emph{Urban Ride Sharing} (URS) task as a running example throughout the paper. The URS problem, depicted in Figure~\ref{fig:URS-DirectedGraph}, involves coordinating multiple vehicles to transport passengers amid traffic delays and last-minute requests. Unlike the classic NP-hard traveling salesman problem (TSP) \cite{lawler1985traveling}, URS demands concurrent execution, inter-agent coordination, and adaptation to disturbances, underscoring the need for persistent state tracking and local rescheduling. More complex problems will be examined in Section~\ref{sec:ALAS-evaluation} after the basic concepts have been established.

$\ALAS$  excels in dynamic environments where reactive decision-making, incremental adjustments, and local consistency deliver superior results compared to global recomputation approaches. Whereas traditional optimization methods remain valuable for static combinatorial problems, $\ALAS$ shines uniquely by providing explicit reasoning throughout its process, generating comprehensive audit trails, and maintaining full transparency for real-time decisions and post-mortem analysis. This empowers human operators to understand agent rationales, trace the origin of any issues, and verify compliance with business constraints, which is invaluable in complex and rapidly changing scenarios.

$\ALAS$ contributions are:
\begin{enumerate}[leftmargin=1.2em, topsep=0pt, itemsep=-.05pt, label=\arabic*.]
\item \textit{Modular Agent Architecture:} Decomposes planning into specialized agents, overcoming LLM limitations in validation, context retention, error handling, and state tracking.
\item \textit{Persistent Execution Memory:} Introduces a lightweight memory abstraction to track state transitions, enabling rollback, compensation, and causal consistency.
\item \textit{Cross-Domain Generalization:} Demonstrates robust, adaptive planning in structurally diverse domains, including transportation, event scheduling, and industrial operations.
\item \textit{Disruption-Aware Replanning:} Enables localized corrections using persistent memory, avoiding costly global recomputation in response to runtime failures.
\end{enumerate}

\vspace{-.1in}
\section{Related Work} \label{sec:ALAS-related}
\vspace{-.05in}
We organize related work into three parts: 1) structural limitations of LLMs in planning, 2) comparisons with existing multi-agent systems, and 3) LLM-based planning frameworks and benchmarks.

\vspace{-.1in}
\subsection{Structural Limitations of LLMs} 
\label{sec:ALAS-related-llmlimits}
\vspace{-.05in}

Transformer-based LLMs~\cite{brown2020language,vaswani2017attention} have demonstrated impressive capabilities across NLP tasks. However, their architecture imposes limitations when applied to planning domains requiring reasoning over evolving states, enforcing global constraints, and maintaining consistency throughout execution.

\textit{Lack of Self-Validation.} LLMs lack internal mechanisms to assess constraint adherence \cite{bommasani2022foundationmodels}
and reasoning consistency \cite{SocraticIEEECCWC2023}, consistent with Gödel's incompleteness theorem~\cite{godel1931english} and confirmed empirically~\cite{hong2024verificationabilities}. Techniques such as Self-Ask~\cite{madaan2022}, Chain-of-Thought (CoT) dissection~\cite{li2023dissectingCOT}, and error detection~\cite{jiang2024selfincorrect} offer partial gains, constrained by model capacity~\cite{huang2024large}. Even structured validators for math~\cite{feng2023towards} and logic~\cite{gou2024tora} cannot solve open-domain validation challenges~\cite{chen-etal-2024-llmarena, huang2024planningllmsurvey}.

\textit{Solution Space Bias.} Maximum-likelihood training biases LLMs toward 
frequent, high-probability completions~\cite{SocraSynthChangCSCI2023, holtzman2020curious, radford2019language}, limiting diversity~\cite{EVINCEChang2024} and creativity~\cite{ismayilzada2024evaluatingcreativeshortstory}.
This hampers performance in domains where edge-case reasoning is essential.

\textit{Context Degradation.} As context length increases, LLMs exhibit attention dilution and positional confusion~\cite{liu-etal-2024-lost, park2023context, wei2023larger, xiao2024attentionsink}. These effects are particularly pronounced for mid-context tokens, where relevant information is often ignored.

\textit{Error Propagation.} 
Stepwise reasoning strategies such as Chain-of-Thought like reasoning~\cite{besta2024graph, WeiCOT3600270, yao2024tree} amplifies early mistakes, leading to logical drift in multi-step problems~\cite{prystawski2023why, stechly2024COTLimits}. Without explicit state tracking, LLMs struggle to recover from such inconsistencies.

\textit{Absence of Persistent State.} LLMs are stateless by design, lacking mechanisms to recall history or enforce temporal consistency. In dynamic tasks like Urban Ride Sharing (Figure~\ref{fig:appDURSSpec}), this leads to failures when previously valid plans become infeasible due to disruptions, e.g., a delay after the first pickup may invalidate a previously computed route that the LLM cannot revalidate without rollback.

These limitations motivate our architectural design. \(\ALAS\) addresses these issues through specialized agents, persistent memory, and runtime monitoring.

\vspace{-.1in}
\subsection{Multi-Agent Planning Systems}
\vspace{-.05in}
Multi-agent systems (MAS) have long been studied for distributed planning and coordination~\cite{wooldridge2009introduction}. Recent LLM-based MAS frameworks—such as AutoGen~\cite{wu2024autogen}, LangGraph~\cite{langgraph2024}, and CAMEL~\cite{li2023camel}—leverage LLM agents in conversational workflows, but often lack persistent memory, formal plan validation, and mechanisms for recovery from disruption. 

We collaborated with the AutoGen team to incorporate persistent memory into their April 2024 release. However, most LLM-MAS platforms still struggle to maintain consistency over time, particularly under dynamic conditions.

An independent MAS evaluation~\cite{cemri2025multiagentllmsystemsfail}, which includes AppWorld~\cite{trivedi2024appworld}, ChatDev~\cite{qian2023chatdev}, HyperAgent~\cite{phan2024hyperagent}, and MetaGPT~\cite{hong2023metagpt}, found no reliable improvement over single-agent systems across 150+ tasks. Common failures included miscoordination, unsatisfied preconditions, and invalid outputs. These issues are especially harmful in domains like URS, where agents must synchronize commitments and recover gracefully from partial failures.

In contrast, \(\ALAS\) emphasizes recovery, state tracking, and local error containment, enabling robust collaboration under uncertainty. Its persistent memory modules and compensatory agent design fill critical gaps overlooked by prior systems.

\vspace{-.1in} 
\subsection{LLM-Based Planning and Benchmarks} \label{sec:ALAS-relatedBench} 
\vspace{-.05in} 

\paragraph{LLM-Based Planning.}
Recent frameworks have applied LLMs to planning via language-driven decomposition (PLASMA~\cite{PLASMA2024procedural}), heuristic search (LLM-MCTS~\cite{zhao2023LLM-MCTS}), and multi-agent orchestration (ADAS~\cite{hu2024ADAS}, AFlow~\cite{zhang2024aflow}). Systems such as StarJob~\cite{abgaryan2025starjobdatasetllmdrivenjob} and LLM-guided scheduling~\cite{abgaryan2024llmsschedule} attempt to use LLMs as direct solvers for optimization problems. However, this overlooks a critical fact: LLMs are not competitive with classical combinatorial solvers in producing globally optimal static plans.

A survey by Huang et al.~\cite{huang2024planningllmsurvey} categorizes these LLM-agent planning frameworks and highlights the open challenges in state tracking, consistency, and reactivity. $\ALAS$ distinguishes itself by addressing these exact weaknesses through persistent memory, agent modularity, and reactive compensation strategies.
LLMs should instead be leveraged for their unique strengths: interpretability, adaptability, and contextual reasoning. Their potential lies in fine-tuning and repairing partial plans under disruption. This principle aligns with ReAct~\cite{yao2022react}, which combines reasoning and acting in LLMs, but lacks persistent state tracking or rollback mechanisms.

Classical methods such as the Shifting Bottleneck Procedure~\cite{adams1988shifting}, Genetic Algorithms~\cite{kirkpatrick1983optimization, bierwirth1995generalized}, Simulated Annealing~\cite{aarts1989simulated}, and CP/Tabu hybrids~\cite{nowicki1996fast} consistently outperform LLMs on standard static scheduling benchmarks. However, these solvers often falter in dynamic settings, where disruptions require real-time adjustment and context-aware reasoning.

Our experiments confirm that even basic planning tasks expose structural limitations in LLMs—particularly in validation, memory persistence, and execution consistency—as detailed in Section~\ref{sec:ALAS-related-llmlimits}.

\vspace{-.05in} 
\paragraph{Planning Benchmarks.}
We extend the classical Demirkol-DMU Job Shop Scheduling Problem (JSSP-DMU)~\cite{DEMIRKOL1998137, shylo2018DMU} by introducing execution-time disruptions such as machine failures, delayed jobs, and late-order changes. JSSP, a foundational operations research problem~\cite{XIONG2022105731}, involves resource allocation under precedence constraints and is highly sensitive to perturbations. These properties make it a strong testbed for evaluating persistent memory, rollback, and adaptive repair.

In contrast, common LLM benchmarks like HotPotQA~\cite{yang2018hotpotqa}, ALFWorld~\cite{shridhar2020alfworld}, and BIG-Bench~\cite{srivastava2022beyond} focus on static reasoning. More recent data sets, PlanBench~\cite{valmeekam2023planbench}, TimeBench~\cite{chu2024timebench}, and ACPBench~\cite{abdin2024acpbench} introduce temporal structure, but rarely include disruptions, revalidation, or compensation.

To address these limitations, we turn to the Job Shop Scheduling Problem (JSSP), a widely studied problem in operations research~\cite{XIONG2022105731} that offers valuable characteristics for evaluating planning under realistic conditions. JSSP involves scheduling $J$ jobs across $M$ machines with complex dependencies and inherently requires handling dynamic constraints, interruptions, and recovery, key concerns emphasized in Industry 5.0~\cite{destouet2023flexible} but underrepresented in existing LLM planning evaluations.

Although recent studies have begun to formulate JSSP-like benchmarks~\cite{abgaryan2025starjobdatasetllmdrivenjob} or evaluate static scenarios using LLMs~\cite{abgaryan2024llmsschedule}, the work remains preliminary. Our study uses the classical, large-scale Demirkol-DMU dataset~\cite{DEMIRKOL1998137, shylo2018DMU} and TA benchmark to evaluate disruption-aware planning at scale.


\vspace{-.1in}
\section{The ALAS Three-Layer Architecture for Adaptive and Reactive Planning} 
\label{sec:ALAS-architecture}
\vspace{-.1in}
Existing work reveals critical limitations in LLM-based planning and coordination, particularly under dynamic and stateful conditions. In response, we introduce $\ALAS$ (Adaptive LLM Agent Scheduler), a general-purpose architecture for structured planning and adaptive execution.
$\ALAS$ adopts a three-layer architecture: \emph{workflow blueprint}, \emph{agent factory}, and \emph{runtime monitor}, working with \emph{persistent memory} to transform specifications into robust and verifiable execution workflows.

\begin{algorithm}[h!]
\caption{Workflow Template Construction by $\mathcal{T}$ (Summary). Details in Appendix~\ref{app:ALAS-Algorithm-Full}.}
\label{alg:metaplanner}
\vspace{-.2in}
{\fontsize{7.8}{9.2}\selectfont
\begin{multicols}{2}
\begin{algorithmic}[1]
\Require Task specification $\mathcal{O}$, constraint set $D$
\Ensure Validated $\mathcal{W}_{\text{template}} = (\mathcal{N}, \mathcal{E})$

\State Extract abstract roles $\mathcal{R}$ from $\mathcal{O}$
\State Map roles to nodes $\mathcal{N}$ with profiles $\mathcal{P}_{n_i}$
\State Determine inter-role dependencies $\mathcal{E}$ under constraints $D$
\State Annotate each $n_i \in \mathcal{N}$ with agent specification $\alpha_i$
\State Annotate each $n_i$ with compensation spec $\alpha_i^{\text{comp}}$
\State Assemble initial template $\mathcal{W}_{\text{template}} = (\mathcal{N}, \mathcal{E})$
\While{$\mathcal{W}_{\text{template}}$ fails validation}
  \State Check structure, constraints, and agent specs
  \State Refine nodes, edges, or agent specs
\EndWhile
\State \Return validated workflow template $\mathcal{W}_{\text{template}}$
\end{algorithmic}
\end{multicols}}
\vspace{-.12in}
\end{algorithm}
\vspace{-.12in}

\vspace{-.05in}
\subsection{Workflow Blueprinting Layer: Template Construction}
\label{sec:ALAS-Layer1}
\vspace{-.1in}
The Template Construction Layer translates a planning input $\mathcal{O}$ into a structured workflow template $\mathcal{W}_{\text{template}} = (\mathcal{N}, \mathcal{E})$, where nodes $\mathcal{N}$ represent abstract roles and contexts (e.g., locations or coordination points), and edges $\mathcal{E}$ define execution dependencies between roles. This template, constructed by module $\mathcal{T}$, serves as an abstract blueprint for downstream agent synthesis.

\textbf{Phase 1: Workflow Template Construction.}  
Given $\mathcal{O}$, $\ALAS$ defines a directed graph in which each node may embed unresolved roles. In the URS scenario (Figure~\ref{fig:URS-DirectedGraph}), nodes correspond to locations (A–F). A passenger awaiting pickup at location D is expressed as $\text{Role}_{\text{ToBePickedUp}}(r_j, D)$ and becomes resolved when a real individual (e.g., Emily) submits a ride request. Similarly, a pickup role such as $\text{Role}_{\text{Pickup}}(k_i, \text{E})$ is filled when a driver (e.g., J. Doe from location E) is assigned. At scheduling time, these role bindings are resolved as:
$\text{Role}_{\text{Pickup}}(\text{J. Doe}, \text{E}) \rightarrow \text{Role}_{\text{ToBePickedUp}}(\text{Emily}, \text{D})$, which means driver J. Doe at location E is assigned to pick up Emily at location D.
This separation between role specification and runtime resolution enables flexible scheduling and dynamic adaptation.

\textbf{Phase 2: Agent Role Specification.}  
Each unresolved role is annotated with (i) an execution agent profile that defines the required capabilities, protocols, timing, states, logging schema onto persistent memory, and (ii) a compensation profile that prescribes recovery mechanisms for local failures. For example, a driver role agent's logic might include driver resolution, driver state, ETA updates, and fare processing, while its compensation logic handles rerouting or fallback assignment. Edge constraints are embedded into adjacent node logic and later enforced at execution. Appendix~\ref{app:ALAS-Layer2} presents details of the agent design process, including the state space model, formal agent specifications, and the implementation of different agent categories. 

\textbf{Phase 3: Validation and Refinement.}  
To ensure correctness, $\mathcal{T}$ delegates verification to an independent validator agent to check the soundness of the graph, temporal consistency and compensation coverage. Inspired by Gödel's incompleteness principle (Section~\ref{sec:ALAS-related-llmlimits}), this validator is external to $\mathcal{T}$ and avoids circular self-verification. If violations are found, $\mathcal{T}$ refines the workflow by adjusting node capabilities, rewiring dependencies, or modifying role assignments until $\mathcal{W}_{\text{template}}$ is in compliance.

Algorithm~\ref{alg:metaplanner} summarizes this three-phase construction; full implementation details are provided in Appendix~\ref{app:AlasArchitecture} as well as its flow chart in Figure~\ref{fig:ALAS-Layer1}.


\vspace{-.1in}
\subsection{Agent Factory Layer: Agent Sourcing and Compensation Logic}
\label{sec:ALAS-Layer2}
\vspace{-.1in}

The Agent Factory layer implements the abstract roles defined in $\mathcal{W}_{\text{template}}$ by instantiating executable agents that perform task-specific logic. For each role,
$\mathcal{W}_{\text{template}}$ provides the following specs:
the agent's capability requirements and execution context;
a logging schema \( \mathcal{L}_i \) describing what state to persist and when; and a compensation profile with recovery triggers and fallback actions.

These specifications are stored in \( \mathcal{W}_{\text{template}} \) and form the blueprint for constructing both the execution and compensation agents.

\textbf{Agent Signatures.}  
Each primary agent is defined as:
{\setlength{\abovedisplayskip}{2pt} \setlength{\belowdisplayskip}{2pt} 
\begin{equation}
\alpha_i = \langle \beta_i, \mathbf{c}_i, w_i, e_i, \mathcal{L}_i \rangle,
\label{eq:agent-signature}
\end{equation}}
where \( \beta_i \) is the inter-agent dependencies and communication their protocol, \( \mathbf{c}_i \) the capability profile, \( w_i \) the context, \( e_i \) the efficiency constraint, and \( \mathcal{L}_i \) the trace logging schema.

Each agent is paired with a fault-tolerant compensator:
{\setlength{\abovedisplayskip}{2pt} \setlength{\belowdisplayskip}{2pt} 
\begin{equation}
\alpha_i^{\text{comp}} = \langle \beta_i, t^{\text{comp}}, \mathbf{c}^{\text{comp}}, w_i, e_i, \mathcal{L}_i, \Gamma_i \rangle,
\label{eq:comp-agent-signature}
\end{equation}}
where \( t^{\text{comp}} \) defines the recovery protocol, and \( \Gamma_i \) tracks exception states and mitigation history.

\vspace{.5ex}
\noindent
The code generation process from these specifications is handled via prompting a selected LLM (e.g., Gemini, OpenAI, or Claude). Once the agents are successfully instantiated and their logic compiled, the workflow template is finalized as an executable plan, denoted by $\mathcal{W}_{\text{exec}}$.

Technical details and examples are provided in Appendix~\ref{app:ALAS-Layer2}.


\vspace{-.1in}
\subsection{Runtime Layer: Execution and Reactive Adaptation}
\label{sec:ALAS-Layer3}
\vspace{-.05in}

Given a fully instantiated execution plan \( \mathcal{W}_{\text{exec}} \) from the Agent Factory, the Runtime Layer activates agents in dependency order under temporal constraints. Each agent \( \alpha_i \) executes its assigned role, logs status to persistent memory, and signals readiness to downstream agents. Execution proceeds as a distributed, event-driven process, with global alerts mediating inter-agent communication.

\paragraph{Disruption Handling.} 
To maintain feasibility under execution-time perturbations—e.g., machine failures, delayed arrivals, or last-minute task insertions—$\ALAS$ activates corresponding compensation agents \( \alpha_i^{\text{comp}} \) and applies three classes of reactive strategies:

\begin{enumerate}[leftmargin=1.2em, topsep=-.1pt, parsep=-.1pt, itemsep=0pt, label=\arabic*.]
  \item \textit{Local Compensation}: Attempt retry, rollback, or local delay, scoped to the impacted agent and its immediate neighborhood.
  \item \textit{Queue Reordering}: Modify local execution queues to maximize slack or shift lower-priority operations. Notably, terminal tasks can be safely delayed, and initial tasks may be advanced without risk of WIP violation.
  \item \textit{Minimizing Re-optimization Costs}: Global replanning—while theoretically optimal—often triggers massive WIP movement and invalidates prior executions. To prevent this, $\ALAS$ imposes a disruption penalty (e.g., WIP cost \( t_{\text{WIP}} \)) and halts reordering if cost exceeds projected gain.
\end{enumerate}

\noindent
The Local Reactive Compensation Protocol (LRCP) prioritizes low-overhead, localized recovery. In most cases, a single round of compensation and minimal queue reordering suffices to restore feasibility. If recovery is infeasible or too costly, execution terminates to prevent cascading disruption.

Assuming $M \approx J$ and each machine queue evaluates up to $S$ swaps (bounded by the $t_{\text{WIP}}$ tradeoff), the message complexity is $\mathcal{O}(S J O_{\max} + J M O_{\max})$. Since $J > M$ and $S < O_{\max}$ in practice, the effective complexity reduces to $\mathcal{O}(J^2 O_{\max})$—scalable for real-time use.

Algorithm~\ref{alg:ALAS-LCSR-Full} presents the detailed steps of LRCP, with full procedural details and convergence lemmas, appears in Appendix~\ref{app:lcsr}.

\paragraph{Master Coordinator and Memory Logging.}
The LLM that synthesized \( \mathcal{W}_{\text{exec}} \) serves as \emph{master coordinator}, overseeing global progress:
(1) keeping constraints, validating local execution, and bookkeeping global states,
(2) monitoring disruptions and issuing alerts to affected agents, and
(3) receiving updates from agents.

Persistent memory stores all state transitions, dependencies, agent logs, and compensation actions. This supports rollback, post-hoc diagnostics, and statistical monitoring. While the current coordinator is single-threaded, failover extensions are possible.


\vspace{-.15in}
\section{Experimental Evaluation}
\label{sec:ALAS-evaluation}
\vspace{-.1in}
Our experimental evaluation assesses $\ALAS$ across three domains of increasing complexity. We designed experiments to demonstrate: (1) how $\ALAS$ overcomes LLM limitations and (2) its scalability in large real-world settings. Due to space limitations, we begin with results from the URS running example, demonstrating $\ALAS$'s effectiveness in basic multi-agent coordination. We then extend to \textit{Event Coordination}: the Family Reunion scenario, which reveals standalone LLMs' limitations in handling complex interdependencies while showing $\ALAS$'s capabilities in both initial planning and disruption response. Finally, we evaluate \textit{Job Shop Planning (JSP)}: this industrial benchmark tests $\ALAS$'s scalability and effectiveness in reactive planning for large, tightly constrained, disruption-prone environments with critical time dependencies.

\vspace{-.05in}
\paragraph{Metrics:} The evaluation metrics include: (1) static sequential planning and dynamic reactive adjustment accuracy, (2) planning efficiency measured by, e.g., the shortest travel distance, common sense consideration, and makespan (total time required to complete all jobs).

\vspace{-.05in}
\paragraph{Experiment Setup:} 
We compare $\ALAS$ with four leading LLMs: GPT-4o-Task \cite{openai2024gpt4o}, DeepSeek R1 \cite{deepseekai2025deepseekr1}, Claude 3.7 Sonnet \cite{anthropic2024claude}, and Gemini 2.5 Pro \cite{GeminiV25-2005}. All methods use online and API interfaces with default parameters (temperature=1.0). The total cost is less than US$200$ through service subscriptions and corporate-sponsored credits. Each experiment is repeated \textbf{ten} times with fresh context threads. While we summarize results in this section, detailed methodology, prompts, and complete results appear in Appendices~\ref{app:ALAS-URAP}--\ref{app:ALAS-JSSP}.

\begin{table}[t!]
\vspace{-.25in}
\begin{scriptsize}
\begin{minipage}[t]{0.45\textwidth}
\renewcommand{\arraystretch}{1.1}
\setlength{\fboxsep}{4pt}
\caption{Dynamic Urban Ride-Sharing}
\fbox{
\begin{minipage}{\textwidth}
       \fontsize{8pt}{9pt}\selectfont
\textbf{Objectives:} Schedule vehicles to deliver passengers to airport during [8:45, 9:00], minimizing vehicle travel distance while ensuring on-time arrivals and maximizing passenger satisfaction. \\
\textbf{Locations:} Seven locations: $V = \{A, B, \cdots, F\}$, where $G$ is Boston Airport (BOS). Urban locations $A$–$F$ are all 10 km apart, airport distances 30+ km. 
\fontsize{7pt}{9pt}\selectfont
    \[
    \begin{bmatrix}
        & A  & B & C & D & E & F \\
    \rightarrow G & 35 & 33 & 36 & 34 & 32 & 31
    \end{bmatrix}
    \]
\fontsize{8pt}{9pt}\selectfont
    \textbf{Travel Speed:} ($A$–$F$) 60, ($A$–$F \rightarrow G$) 100 km/h
    
\textbf{Passenger Requests:} with BOS arrival deadlines:  
    \begin{itemize}[leftmargin=1em, topsep=1.2pt, itemsep=0.7pt, label=-]
    \item $r_1$: $A$, to $G$ by 08:45 \hspace{0.5em} - $r_2$: $B$, to $G$ by 08:50
    \item $r_3$: $C$, to $G$ by 08:55 \hspace{0.5em} - $r_4$: $D$, to $G$ by 09:00
    \end{itemize}
\textbf{Available Vehicles} (Capacity 2 passengers): 
    \begin{itemize}[leftmargin=1em, topsep=1.2pt, itemsep=0.7pt, label=-]
        \item $k_1$: at $A$, $k_2$: at $C$, and $k_3$: at $E$
        \item Battery levels: $k_1$: 90\%, $k_2$: 75\%, $k_3$: 60\%
    \end{itemize}
    
\textbf{Potential Disruptions:} New passenger requests, vehicle availability changes (battery levels/breakdowns at 0.05/hour), and traffic delay, etc. Replanning may require rolling back promised pickup time to the existing passengers and replan for all.
    \end{minipage}
    } 
    \label{fig:appDURSSpec}
\end{minipage}
\hspace{0.03\textwidth}
\begin{minipage}[t]{0.45\textwidth}
        \caption{Family Reunion Planning Problem}
        \renewcommand{\arraystretch}{1.1}
        \setlength{\fboxsep}{4.pt}
        \fbox{
        \begin{minipage}{\textwidth}
        \fontsize{8.0pt}{9pt}\selectfont
        \textbf{Objectives:} On time family reunion dinner at 6:00 PM \\
        \textbf{Family Members and Arrivals:}
        \begin{itemize}[leftmargin=1em, topsep=-0.2pt, itemsep=-0.2pt, label=-]
        \item Sarah (Mom): Host, at home
        \item James (Dad): Lands at BOS 1:00 PM from SF
        \item Emily (Sister): Lands at BOS 2:30 PM from Chicago
        \item Michael (Bro): Driving, arrives 3:00 PM from NY
        \item Grandma: Needs pickup from suburban Boston
        \end{itemize}
        \textbf{Cooking Requirements:}
        \begin{itemize}[leftmargin=1em, topsep=-.25pt, itemsep=-.25pt, label=-]
        \item Turkey: 4 hours cooking time; side dishes: 2 hours
        \item Someone must stay home during oven baking time
        \end{itemize}
        \textbf{Transportation Constraints:}
        \begin{itemize}[leftmargin=1em, topsep=-.25pt, itemsep=-.25pt, label=-]
        \item James must rent car after landing
        \item Emily requires airport pickup
        \item Travel times: Home to BOS Airport: 60 min
        \item Travel times: BOS Airport to Grandma's: 60 min
        \item Travel times: Home to Grandma's: 30 min
        \end{itemize}
        \textbf{Key Constraints:}
        \begin{itemize}[leftmargin=1em, topsep=-.25pt, itemsep=-.25pt, label=-]
        \item All family members home before 6:00 PM
        \item Turkey and sides ready by 6:00 PM
        \item All pickups completed with available drivers
        \item Oven baking supervision maintained
        \end{itemize}
        \end{minipage}
}
        \label{fig:FamilyReunionSpec}
    \end{minipage}
    \end{scriptsize}
    \vspace{-.1in}
\end{table}

\vspace{-.1in}  
\subsection{Case Study 1: Transportation Scheduling}  
\vspace{-.05in}  
The purpose of this case study is to conclude the illustrative example and to demonstrate that even in a simple problem, standalone LLMs can perform poorly and fail to meet basic planning requirements.

\vspace{-.1in}  
\subparagraph{Problem Specification:}  
The Urban Ride Sharing (URS) problem (Table~\ref{fig:appDURSSpec}) involves coordinating vehicles to transport passengers between locations before deadlines. The objective is to minimize total travel distance while satisfying all temporal constraints and handling mid-execution disruptions.

\vspace{-.1in}  
\subparagraph{Prompts:}  
Initial prompt: ``Create an optimal schedule for this ride-sharing scenario that minimizes total travel distance while meeting all passenger deadlines: [Table~\ref{fig:appDURSSpec} specification].''  
Reactive prompt: ``Passenger $r_2$ cancels at 8:05 AM; a new passenger $r_5$ at location F requests pickup with a 9:30 AM airport deadline. Update the schedule accordingly.''  
$\ALAS$ augments these prompts with structured workflow templates (Algorithm~\ref{alg:metaplanner}) and role-based agent instantiation. Each method is evaluated in 10 independent trials with means and 
standard deviations reported.


\vspace{-.05in}  
\paragraph{\#1. Experimental Results of Sequential Planning:}  
All models met deadlines; however, $\ALAS$ achieved superior efficiency: average distance of 95.1 km (SD 13.04 km) vs. 118.9 km (SD 16.6 km) for baseline LLMs, a 20\% improvement ($p < 0.01$). Figure~\ref{fig:combined-URS} illustrates the optimal schedule proposed by $\ALAS$.

\input{Figure-Case1-TransportationScheduling}

\vspace{-.05in}  
\paragraph{\#2. Results of Reactive Planning:}  
When passenger $r_2$ canceled at 8:05 AM and $r_5$ appeared at 8:10 AM, $\ALAS$ replanned successfully in all trials. In contrast, baseline LLMs failed to maintain consistent internal states, frequently losing track of vehicle locations, duplicating assignments, or ignoring updated deadlines. These breakdowns are due to the structural limitations and statelessness
of LLMs (Section~\ref{sec:ALAS-related-llmlimits}).
(The complete results and discussion are provided in Appendix~\ref{app:ALAS-URAP}.)

\vspace{-.1in}
\subsection{Case Study 2: Event Coordination}
\label{sec:ALAS-exp-case2}
\vspace{-.05in}

This case study highlights the underlying reasons why most standalone LLMs often fail, even in static sequential planning, and exhibit inconsistent behavior in reactive scenarios. In contrast, $\ALAS$ consistently performs well on both static and dynamic settings.
\vspace{-.05in}
\paragraph{Problem Specification:} 
The Family Reunion problem (Table~\ref{fig:FamilyReunionSpec}) requires coordinating family members for interconnected tasks including airport pickups and food preparation before a 6:00 PM dinner deadline, while respecting travel times and capabilities.

\vspace{-.05in}
\paragraph{Prompts:} 
All methods received: ``Create a detailed schedule for this family reunion that satisfies all constraints: [Table~\ref{fig:FamilyReunionSpec} specification].'' For reactive planning: ``At noon, James' flight is delayed until 4:00 PM. Update the schedule to meet the deadline at 6:00 pm while satisfying all constraints.'' $\ALAS$ added workflow templates via Algorithm~\ref{alg:metaplanner} with each method tested on 10 independent runs.



\vspace{-0.05in}
\paragraph{\#1. Experimental Results in \emph{Sequential} Planning.}
Across ten independent trials, $\ALAS$ produced a \emph{feasible} schedule in every run, whereas standalone LLM baselines violated hard constraints and often required multiple retries before converging (see Tables~\ref{tab:reunion-Claude} and~\ref{tab:reunion-DeepThink} in Appendix~\ref{app:ALAS-MeetingProblem}).  
Typical errors included travel–time miscalculations—treating 60-minute journeys as 45 or 30 minutes (highlighted in shaded red)—and common-sense violations such as starting turkey preparation at 10 AM (four hours earlier than necessary) or allowing James to depart the airport without Emily.
The primary cause of these failures is \emph{context loss} in long prompts, which degrades arithmetic accuracy and consistency~\cite{liu-etal-2024-lost,xiao2024attentionsink}.  
$\ALAS$ avoids this degradation via its compartmentalised architecture: each agent receives only task-relevant facts, and an independent validator checks temporal and resource constraints before plan finalisation.
Missing commonsense was further mitigated by a dedicated \emph{commonsense agent} that augments the problem specification with realistic slack (e.g., luggage retrieval, rental-car pickup, turkey preparation).  Appendix~\ref{app:ThanksGiving-common-sense} details this augmentation process.
\vspace{-0.05in}
\paragraph{\#2. Experimental Results in \emph{Reactive} Planning.}
We next introduced a disruption—James's flight is delayed to \textbf{16{:}00}.  In ten trials, DeekSeek and Claude failed in \(7/10\) cases by assigning Emily to a taxi (violating family pickup constraints), postponing Grandma's pickup beyond 17{:}30, or missing the 18{:}00 dinner deadline (see Figure~\ref{tab:failed-reactive-planning}).  
These failures stem from greedy, one-shot rescheduling and enlarged prompt length during replanning, both of which exacerbate context loss~\cite{wei2023larger,zhang2023careful}.  
By contrast, $\ALAS$ succeeded in all ten runs (see Figure~\ref{tab:ALASReactivePlan}), through its reactive LRCP loop: persistently updating state (Michael's driving state and James's new arrival), detecting downstream conflicts (Emily/Grandma pickups), evaluating alternatives (Michael's direct airport route instead of home), and validating constraints (temporal/resource bounds). Crucially, $\ALAS$'s persistent state history enables targeted \emph{undo/redo} of affected tasks already scheduled, allocating more slack and ensuring on-time convergence.  Full logs and additional discussion appear in Appendix~\ref{app:ALAS-MeetingProblem}.
\vspace{-0.1in}
\subsection{Case Study 3: Job Shop Scheduling with Disruptions}
\label{sec:ALAS-exp-case3}
\vspace{-0.05in}
\paragraph{Problem Specification:} 
We evaluate our framework on JSSP as explained at the start. We selected two complementary benchmarks, and also extended them by introducing machine breakdowns that require schedule adjustments that minimize makespan and work-in-progress movement. 
\begin{itemize}[leftmargin=1.2em, topsep=-.05pt, itemsep=-.05pt]
    \item \textit{Demirkol-DMU}~\cite{demirkol1998computational} Designed for stress-testing algorithms with varying job priorities and routing requirements. These larger instances (20×15 to 50×20 job-machine configurations) evaluate scalability and robustness under complex constraints.
    \item \textit{Taillard (TA)} is a widely adopted set of job-shop scheduling problem instances characterized by medium to large-scale problem sizes (15×15, 20×20, 30×20, 100x20), with uniform job–machine mappings and tight constraints. This standard benchmark enables direct comparison with state-of-the-art optimization techniques.
\end{itemize}

\vspace{-0.1in}
\subsubsection{Experimental Results}

Based on the results reported in Tables II and III of the SeEvo (self-evolutionary) work~\cite{huang2024JSSPLLMs}, we compare $\ALAS$ against the same set of baselines (plus SeEvo). All algorithms evaluated in SeEvo are included in our experiments, namely: Gene Expression Programming (GEP)~\cite{NIE2013389}, Multi-Tree Genetic Programming (MTGP)~\cite{Zhang2018MTGP}, a suite of heuristic dispatching rules \cite{pinedo2012scheduling} (Random selection, Longest Processing Time (LPT), Shortest Processing Time (SPT), Shortest Total Processing Time (STPT), Most Process-Sequence Remaining (MPSR), Longest Subsequent (LSO), SPT$\times$TWK, SPT/TWKR), the deep-reinforcement-learning methods DRL-Chen~\cite{Chen2023}, DRL-Liu~\cite{Liu2024}, and DRL-Zhang~\cite{Zhang2020PDR}.

\vspace{-0.05in}
\paragraph{Static, Sequential Planning, both benchmarks}
Figure~\ref{fig:ALAS-JSSP-exp1} reports $\ALAS$+LRCP against the above methods on both Demirkol-DMU and TA benchmarks. In the Demirkol-DMU benchmark, $\ALAS$+LRCP ran on Claude-3.7 achieved the lowest mean gap to the theoretical optimal result, outperforming SPT (36\%), LPT (53\%), and DRL-Liu (21\%). 

While Figure~\ref{fig:ALAS-JSSP-exp1} reports mean makespan values across benchmarks, Figure~\ref{fig:JSSP-DMU-TA-trendline} in Appendix~\ref{app:ALAS-JSSP} provides a detailed comparison of performance at the individual JSSP instance level. As shown, $\ALAS$+LRCP consistently outperforms all other methods in virtually every case, demonstrating the robustness of our approach across varying problem sizes and complexities.

To assess cross-domain generalization, we tested $\ALAS$-dynamic on the TA benchmark consisting of seven real-world industrial instances. Here, $\ALAS$+LRCP again outperformed all baselines with a mean gap of only 0.86\%, compared to DRL-Chen (25\%), DRL-Zhang (18\%), and SeEvo-GPT3.5 (15\%). Classical heuristics such as LSO showed  substantially larger gaps of up to 34\%. These results demonstrate the effectiveness of $\ALAS$-dynamic scheduling in both classical heuristics and trained reinforcement learning approaches in different structured domains.

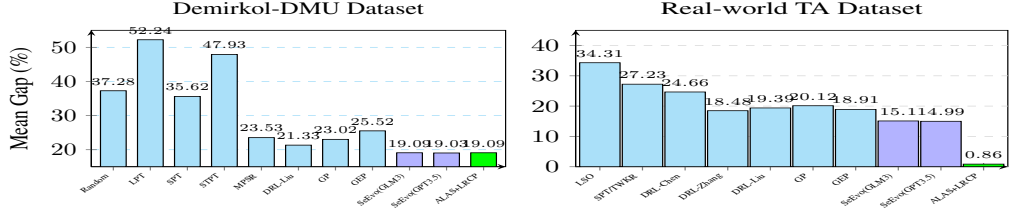
\begin{figure}[t!]
\centering

\begin{minipage}{0.5\textwidth}
\centering
\resizebox{\textwidth}{0.66\height}{ 
\begin{tikzpicture}
    \begin{axis}[
        ybar,
        bar width=0.35cm,           
        width=\textwidth,
        height=4.2cm,
        ylabel={Mean Gap (\%)},
        ymin=15, ymax=55,
        axis x line=bottom,         
        axis y line=left,
        title={Demirkol-DMU Dataset},
        symbolic x coords={Random, LPT, SPT, STPT, MPSR, DRL-Liu, GP, GEP, SeEvo(GLM3), SeEvo(GPT3.5), ALAS+LRCP},
        xtick=data,
        x tick label style={rotate=45, anchor=east, font={\fontsize{4pt}{5pt}\selectfont}},
        nodes near coords,
        grid=both,
        bar shift=0pt,
        minor grid style={draw=none},
        major grid style={dashed, cyan!30},
        ymajorgrids=true,
        xmajorgrids=false,
        ytick={0,10,20,30,40,50},
        axis background/.style={fill=none, draw=black},
        x post scale=0.65,           
        y post scale=0.5,           
        enlarge x limits={abs=0.3cm},
        nodes near coords style={font=\tiny},
        scale=1.05,                 
    ]
        \addplot[ybar, fill=cyan!30, forget plot] coordinates {
            (Random, 37.28)
            (LPT, 52.24)
            (SPT, 35.62)
            (STPT, 47.93)
            (MPSR, 23.53)
            (DRL-Liu, 21.33)
            (GP, 23.02)
            (GEP, 25.52)
        };
        
        \addplot[ybar, fill=blue!30, forget plot] coordinates {
            (SeEvo(GLM3), 19.09)
            (SeEvo(GPT3.5), 19.03)
        };
        
        \addplot[ybar, fill=green, forget plot] coordinates {
            (ALAS+LRCP, 19.09)
        };
    \end{axis}
\end{tikzpicture}
}
\end{minipage}
\hspace{-.1in}
\begin{minipage}{0.48\textwidth}
\centering
\resizebox{\textwidth}{0.66\height}{ 
\begin{tikzpicture}
    \begin{axis}[
        ybar,
        bar width=0.50cm,          
        width=\textwidth,           
        height=4.2cm,                 
        ymin=0, ymax=45,
        axis x line=bottom,         
        axis y line=left,           
        title={Real-world TA Dataset},
        symbolic x coords={LSO, SPT/TWKR, DRL-Chen, DRL-Zhang, DRL-Liu, GP, GEP, SeEvo(GLM3), SeEvo(GPT3.5), ALAS+LRCP},
        xtick=data,
        nodes near coords,          
        x tick label style={rotate=45, anchor=east, font={\fontsize{4pt}{5pt}\selectfont}},
        enlarge x limits={abs=0.3cm},
        bar shift=0pt,
        axis line style={draw},
        grid=both,
        minor grid style={draw=none},
        major grid style={dashed, gray!30},
        ymajorgrids=true,
        xmajorgrids=false,
        ytick={0,10,20,30,40,50},
        axis background/.style={fill=none, draw=black},
        x post scale=0.55,          
        y post scale=0.5,           
        nodes near coords style={font=\tiny},
        scale=1.05,                 
    ]
        \addplot[ybar, fill=cyan!30, forget plot] coordinates {
            (LSO, 34.31)
            (SPT/TWKR, 27.23)
            (DRL-Chen, 24.66)
            (DRL-Zhang, 18.48)
            (DRL-Liu, 19.39)
            (GP, 20.12)
            (GEP, 18.91)
        };
        \addplot[ybar, fill=blue!30, forget plot] coordinates {
            (SeEvo(GLM3), 15.10)
            (SeEvo(GPT3.5), 14.99)
        };
        \addplot[ybar, fill=green, forget plot] coordinates {
            (ALAS+LRCP, 0.86)
        };
    \end{axis}
\end{tikzpicture}
}
\end{minipage}%
\vspace{-.1in}
\caption{Mean Gap to Upper Bound comparison across two benchmark datasets}
\label{fig:ALAS-JSSP-exp1}
\vspace{-.15in}
\end{figure}

\vspace{-0.05in}
\paragraph{Dynamic, Reactive Planning under random disruptions}
We simulate 20 random machine failures on the DMU instances and compare LRCP ($\ALAS$'s reactive planner) with other baselines. LRCP adapts to disruptions  and maintains the lowest average gap (19.8\%), outperforming the SeEvo-GPT3.5/GLM3, GP, and DRL variants.

\vspace{-0.1in}
\subsubsection{LRCP Mechanism: A White Box Analysis}
\vspace{-0.05in}
Let us use a 5×3 JSSP example to illustrate how LRCP works, demonstrating its effectiveness, efficiency, and guaranteed convergence. Furthermore, unlike other methods, LRCP explicitly accounts for rescheduling overhead, a crucial practical consideration in manufacturing environments.

\paragraph{Phase 1: Local Compensation.} 
Figure~\ref{fig:combined-scheduling} shows how $\ALAS$'s LCSR protocol makes local adjustments to repair schedules after machine breakdowns. In Figure~\ref{fig:combined-scheduling}(a), the original CP/Tabu hybrid schedule has a 19-unit makespan. When M1 breaks down from $t=5-8$ (Figure~\ref{fig:combined-scheduling}(b)), its agent pushes operation J2(3) to start at $t=8$, causing a 3-unit delay. Each machine's agent $\alpha_i$ maintains information about scheduled jobs and their connected machines and timing. This delay triggers agent $\alpha_1$ to notify $\alpha_2$ on M2 about downstream impacts (Figure~\ref{fig:combined-scheduling}(c)). M2's compensation agent $\alpha^{comp}_2$ then reschedules J3(2) and J1(3) to start at $t=19$ and $t=22$. The cascade continues when $\alpha_2$ communicates with $\alpha_0$ about J(3)'s delayed arrival, prompting $\alpha^{comp}_0$ to delay J3(3) to $t=22$ (Figure~\ref{fig:combined-scheduling}(d)). This coordination extends the makespan from 19 to 26 units, reflecting both the 3-unit downtime and 4-unit delay propagation. During compensation, all operations are delayed rather than advanced, avoiding costly work-in-progress (WIP) movement that contemporary approaches often overlook \cite{park2023context, zhang2023careful}.

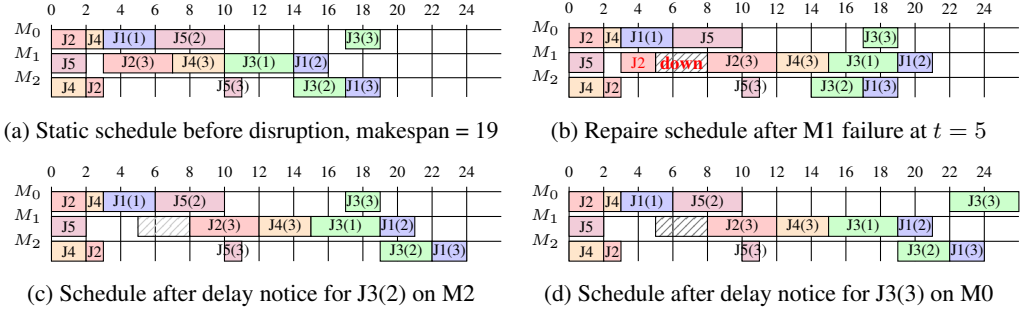
\begin{figure}[t!]
\vspace{-.1in}
\begin{subfigure}{0.47\textwidth}
\centering
\begin{tikzpicture}[yscale=0.32, xscale=0.23, font=\tikzfontsize] 
\foreach \y in {0,1,2} \draw (0,-\y) -- (26,-\y);
\node[left] at (0,0) {$M_0$};
\node[left] at (0,-1) {$M_1$};
\node[left] at (0,-2) {$M_2$};
\foreach \x in {0,2,...,24} {
  \draw (\x,0.2) -- (\x,-2.8);
  \node[above] at (\x,0.2) {\x};
}
\filldraw[fill=red!20] (0,0) rectangle (2,-0.8) node[midway] {J2};
\filldraw[fill=orange!20] (2,0) rectangle (3,-0.8) node[midway] {J4};
\filldraw[fill=blue!20] (3,0) rectangle (6,-0.8) node[midway] {J1(1)};
\filldraw[fill=purple!20] (6,0) rectangle (10,-0.8) node[midway] {J5(2)};
\filldraw[fill=green!20] (17,0) rectangle (19,-0.8) node[midway] {J3(3)};
\filldraw[fill=purple!20] (0,-1) rectangle (2,-1.8) node[midway] {J5};
\filldraw[fill=red!20] (3,-1) rectangle (7,-1.8) node[midway] {J2(3)};
\filldraw[fill=orange!20] (7,-1) rectangle (10,-1.8) node[midway] {J4(3)};
\filldraw[fill=green!20] (10,-1) rectangle (14,-1.8) node[midway] {J3(1)};
\filldraw[fill=blue!20] (14,-1) rectangle (16,-1.8) node[midway] {J1(2)};
\filldraw[fill=orange!20] (0,-2) rectangle (2,-2.8) node[midway] {J4};
\filldraw[fill=red!20] (2,-2) rectangle (3,-2.8) node[midway] {J2};
\filldraw[fill=purple!20] (10,-2) rectangle (11,-2.8) node[midway] {J5(3)};
\filldraw[fill=green!20] (14,-2) rectangle (17,-2.8) node[midway] {J3(2)};
\filldraw[fill=blue!20] (17,-2) rectangle (19,-2.8) node[midway] {J1(3)};
\end{tikzpicture}
\vspace{-.15in}
\caption{Static schedule before disruption, makespan = 19}
\label{fig:static-plan}
\end{subfigure}
\hspace{0.01\textwidth}
\begin{subfigure}{0.47\textwidth}
\centering
\begin{tikzpicture}[yscale=0.33, xscale=0.23, font=\tikzfontsize] 
\foreach \y in {0,1,2} {
  \draw (0,-\y) -- (26,-\y);
}
\node[left] at (0,0) {$M_0$};
\node[left] at (0,-1) {$M_1$};
\node[left] at (0,-2) {$M_2$};

\foreach \x in {0,2,...,24} {
  \draw (\x,0.2) -- (\x,-2.8);
  \node[above] at (\x,0.2) {\x};
}

\filldraw[fill=red!20] (0,0) rectangle (2,-0.8) node[midway] {J2};
\filldraw[fill=orange!20] (2,0) rectangle (3,-0.8) node[midway] {J4};
\filldraw[fill=blue!20] (3,0) rectangle (6,-0.8) node[midway] {J1(1)};
\filldraw[fill=purple!20] (6,0) rectangle (10,-0.8) node[midway] {J5};
\filldraw[fill=green!20] (17,0) rectangle (19,-0.8) node[midway] {J3(3)};

\filldraw[fill=purple!20] (0,-1) rectangle (2,-1.8) node[midway] {J5};
\filldraw[fill=red!20] (3,-1) rectangle (5,-1.8) node[midway] {{\color{red}{J2}}};
\filldraw[pattern=north east lines, pattern color=gray] (5,-1) rectangle (8,-1.8);
\node at (6.5,-1.4) {\scriptsize {\color{red}\textbf{down}}};
\filldraw[fill=red!20] (8,-1) rectangle (12,-1.8) node[midway] {J2(3)};
\filldraw[fill=orange!20] (12,-1) rectangle (15,-1.8) node[midway] {J4(3)};
\filldraw[fill=green!20] (15,-1) rectangle (19,-1.8) node[midway] {J3(1)};
\filldraw[fill=blue!20] (19,-1) rectangle (21,-1.8) node[midway] {J1(2)};

\filldraw[fill=orange!20] (0,-2) rectangle (2,-2.8) node[midway] {J4};
\filldraw[fill=red!20] (2,-2) rectangle (3,-2.8) node[midway] {J2};
\filldraw[fill=purple!20] (10,-2) rectangle (11,-2.8) node[midway] {J5(3)};
\filldraw[fill=green!20] (14,-2) rectangle (17,-2.8) node[midway] {J3(2)};
\filldraw[fill=blue!20] (17,-2) rectangle (19,-2.8) node[midway] {J1(3)};
\end{tikzpicture}
\vspace{-.15in}
\caption{Repaire schedule after M1 failure at $t=5$}
\label{fig:maple-repair}
\end{subfigure}

\vspace{0.05in} 

\begin{subfigure}{0.47\textwidth}
\centering
\begin{tikzpicture}[yscale=0.33, xscale=0.23, font=\tikzfontsize] 
\foreach \y in {0,1,2} \draw (0,-\y) -- (26,-\y);
\node[left] at (0,0) {$M_0$};
\node[left] at (0,-1) {$M_1$};
\node[left] at (0,-2) {$M_2$};
\foreach \x in {0,2,...,24} {
  \draw (\x,0.2) -- (\x,-2.8);
  \node[above] at (\x,0.2) {\x};
}
\filldraw[fill=red!20] (0,0) rectangle (2,-0.8) node[midway] {J2};
\filldraw[fill=orange!20] (2,0) rectangle (3,-0.8) node[midway] {J4};
\filldraw[fill=blue!20] (3,0) rectangle (6,-0.8) node[midway] {J1(1)};
\filldraw[fill=purple!20] (6,0) rectangle (10,-0.8) node[midway] {J5(2)};
\filldraw[fill=green!20] (17,0) rectangle (19,-0.8) node[midway] {J3(3)};
\filldraw[pattern=north east lines, pattern color=gray!40] (5,-1) rectangle (8,-1.8);
\filldraw[fill=purple!20] (0,-1) rectangle (2,-1.8) node[midway] {J5};
\filldraw[fill=red!20] (8,-1) rectangle (12,-1.8) node[midway] {J2(3)};
\filldraw[fill=orange!20] (12,-1) rectangle (15,-1.8) node[midway] {J4(3)};
\filldraw[fill=green!20] (15,-1) rectangle (19,-1.8) node[midway] {J3(1)};
\filldraw[fill=blue!20] (19,-1) rectangle (21,-1.8) node[midway] {J1(2)};
\filldraw[fill=orange!20] (0,-2) rectangle (2,-2.8) node[midway] {J4};
\filldraw[fill=red!20] (2,-2) rectangle (3,-2.8) node[midway] {J2};
\filldraw[fill=purple!20] (10,-2) rectangle (11,-2.8) node[midway] {J5(3)};
\filldraw[fill=green!20] (19,-2) rectangle (22,-2.8) node[midway] {J3(2)};
\filldraw[fill=blue!20] (22,-2) rectangle (24,-2.8) node[midway] {J1(3)};
\end{tikzpicture}
\vspace{-.15in}
\caption{Schedule after delay notice for J3(2) on M2}
\label{fig:delay-notification}
\end{subfigure}
\hspace{0.01\textwidth}
\begin{subfigure}{0.47\textwidth}
\centering
\begin{tikzpicture}[yscale=0.33, xscale=0.23, font=\tikzfontsize] 
\foreach \y in {0,1,2} \draw (0,-\y) -- (26,-\y);
\node[left] at (0,0) {$M_0$};
\node[left] at (0,-1) {$M_1$};
\node[left] at (0,-2) {$M_2$};
\foreach \x in {0,2,...,24} {
  \draw (\x,0.2) -- (\x,-2.8);
  \node[above] at (\x,0.2) {\x};
}
\filldraw[fill=red!20] (0,0) rectangle (2,-0.8) node[midway] {J2};
\filldraw[fill=orange!20] (2,0) rectangle (3,-0.8) node[midway] {J4};
\filldraw[fill=blue!20] (3,0) rectangle (6,-0.8) node[midway] {J1(1)};
\filldraw[fill=purple!20] (6,0) rectangle (10,-0.8) node[midway] {J5(2)};
\filldraw[fill=green!20] (22,0) rectangle (26,-0.8) node[midway] {J3(3)};
\filldraw[pattern=north east lines, pattern color=gray] (5,-1) rectangle (8,-1.8);
\filldraw[fill=purple!20] (0,-1) rectangle (2,-1.8) node[midway] {J5};
\filldraw[fill=red!20] (8,-1) rectangle (12,-1.8) node[midway] {J2(3)};
\filldraw[fill=orange!20] (12,-1) rectangle (15,-1.8) node[midway] {J4(3)};
\filldraw[fill=green!20] (15,-1) rectangle (19,-1.8) node[midway] {J3(1)};
\filldraw[fill=blue!20] (19,-1) rectangle (21,-1.8) node[midway] {J1(2)};
\filldraw[fill=orange!20] (0,-2) rectangle (2,-2.8) node[midway] {J4};
\filldraw[fill=red!20] (2,-2) rectangle (3,-2.8) node[midway] {J2};
\filldraw[fill=purple!20] (10,-2) rectangle (11,-2.8) node[midway] {J5(3)};
\filldraw[fill=green!20] (19,-2) rectangle (22,-2.8) node[midway] {J3(2)};
\filldraw[fill=blue!20] (22,-2) rectangle (24,-2.8) node[midway] {J1(3)};
\end{tikzpicture}
\vspace{-.15in}
\caption{Schedule after delay notice for J3(3) on M0}
\label{fig:final-repair}
\end{subfigure}
\caption{LRCP Phase \#1 Local Compensation (makespan = 22):
(a) Static baseline schedule; 
(b) $M_1$ failure between $t = 5$–$8$; 
(c) $M_1$ notifies $M_2$ to delay $J3(2)$; 
(d) $M_2$ informs $M_0$ to push $J3(3)$ back.}
\label{fig:combined-scheduling}
\vspace{-.05in}
\end{figure}
\begin{figure}[t!]
\vspace{-.1in}
\begin{subfigure}{0.47\textwidth}  
\centering
\begin{tikzpicture}[yscale=0.33, xscale=0.23, font=\tikzfontsize]  
\foreach \y in {0,1,2} \draw (0,-\y) -- (26,-\y);
\node[left] at (0,0) {$M_0$};
\node[left] at (0,-1) {$M_1$};
\node[left] at (0,-2) {$M_2$};
\foreach \x in {0,2,...,24} {
  \draw (\x,0.2) -- (\x,-2.8);
  \node[above] at (\x,0.2) {\x};
}
\filldraw[fill=red!20] (0,0) rectangle (2,-0.8) node[midway] {J2};
\filldraw[fill=orange!20] (2,0) rectangle (3,-0.8) node[midway] {J4};
\filldraw[fill=blue!20] (3,0) rectangle (6,-0.8) node[midway] {J1(1)};
\filldraw[fill=purple!20] (6,0) rectangle (10,-0.8) node[midway] {J5(2)};
\filldraw[fill=green!20] (22,0) rectangle (26,-0.8) node[midway] {J3(3)};

\filldraw[pattern=north east lines, pattern color=gray!40] (5,-1) rectangle (8,-1.8);
\filldraw[pattern=north east lines, pattern color=red!90] (8,-1) rectangle (9,-1.8);
\filldraw[fill=purple!20] (0,-1) rectangle (2,-1.8) node[midway] {J5};
\filldraw[fill=green!20] (9,-1) rectangle (13,-1.8) node[midway] {J3(1)};
\filldraw[fill=blue!20] (13,-1) rectangle (15,-1.8) node[midway] {J1(2)};
\filldraw[fill=red!20] (15,-1) rectangle (19,-1.8) node[midway] {J2(3)};
\filldraw[fill=orange!20] (19,-1) rectangle (22,-1.8) node[midway] {J4(3)};
\filldraw[fill=orange!20] (0,-2) rectangle (2,-2.8) node[midway] {J4};
\filldraw[fill=red!20] (2,-2) rectangle (3,-2.8) node[midway] {J2};
\filldraw[fill=purple!20] (10,-2) rectangle (11,-2.8) node[midway] {J5(3)};
\filldraw[fill=green!20] (19,-2) rectangle (22,-2.8) node[midway] {J3(2)};
\filldraw[fill=blue!20] (22,-2) rectangle (24,-2.8) node[midway] {J1(3)};
\end{tikzpicture}
\vspace{-.15in}
\caption{Safe reordering, move $J4(3)$ down $J3(1)$ up}
\label{fig:queue-reorder-1}
\end{subfigure}
\hspace{0.005\textwidth}  
\begin{subfigure}{0.47\textwidth}  
\centering
\begin{tikzpicture}[yscale=0.33, xscale=0.23, font=\tikzfontsize]  
\foreach \y in {0,1,2} \draw (0,-\y) -- (26,-\y);
\node[left] at (0,0) {$M_0$};
\node[left] at (0,-1) {$M_1$};
\node[left] at (0,-2) {$M_2$};
\foreach \x in {0,2,...,24} {
  \draw (\x,0.2) -- (\x,-2.8);
  \node[above] at (\x,0.2) {\x};
}
\filldraw[pattern=north east lines, pattern color=red!90] (15,-0.8) rectangle (16,0);
\filldraw[fill=red!20] (0,0) rectangle (2,-0.8) node[midway] {J2};
\filldraw[fill=orange!20] (2,0) rectangle (3,-0.8) node[midway] {J4};
\filldraw[fill=blue!20] (3,0) rectangle (6,-0.8) node[midway] {J1(1)};
\filldraw[fill=purple!20] (6,0) rectangle (10,-0.8) node[midway] {J5(2)};
\filldraw[fill=green!20] (16,0) rectangle (20,-0.8) node[midway] {J3(3)};

\filldraw[pattern=north east lines, pattern color=gray!40] (5,-1) rectangle (8,-1.8);
\filldraw[pattern=north east lines, pattern color=red!90] (8,-1) rectangle (9,-1.8);
\filldraw[fill=purple!20] (0,-1) rectangle (2,-1.8) node[midway] {J5};
\filldraw[fill=green!20] (9,-1) rectangle (13,-1.8) node[midway] {J3(1)};
\filldraw[fill=blue!20] (13,-1) rectangle (15,-1.8) node[midway] {J1(2)};
\filldraw[fill=red!20] (15,-1) rectangle (19,-1.8) node[midway] {J2(3)};
\filldraw[fill=orange!20] (19,-1) rectangle (22,-1.8) node[midway] {J4(3)};
\filldraw[pattern=north east lines, pattern color=red!90] (12,-2) rectangle (13,-2.8);
\filldraw[fill=orange!20] (0,-2) rectangle (2,-2.8) node[midway] {J4};
\filldraw[fill=red!20] (2,-2) rectangle (3,-2.8) node[midway] {J2};
\filldraw[fill=purple!20] (10,-2) rectangle (11,-2.8) node[midway] {J5(3)};
\filldraw[fill=green!20] (13,-2) rectangle (16,-2.8) node[midway] {J3(2)};
\filldraw[fill=blue!20] (16,-2) rectangle (18,-2.8) node[midway] {J1(3)};
\end{tikzpicture}
\vspace{-.15in}
\caption{Resolving middle operations, $t_{\text{WIP}}$ masked}
\label{fig:queue-reorder-2}
\end{subfigure}
\caption{LRSP Phase \#2 Queue Reordering (makespan = 22): 
(a) Safe moves: moving last operations down, first operations forward with potential penalty;
(b) Resolving remaining operations.}
\label{fig:queue-reordering}
\vspace{-.15in}
\end{figure}
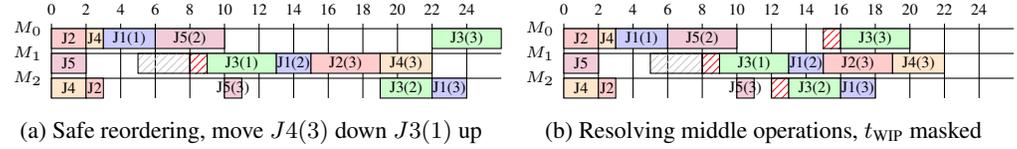

\vspace{-.05in}
\paragraph{Phase 2: Queue Reordering.} 
$\ALAS$ next employs queue reordering, which may incur a $t_{\text{WIP}}$ penalty for expedited job movement. Agents evaluate makespan improvement versus WIP penalties—unlike approaches optimizing solely for makespan \cite{wei2023larger}.
From Figure~\ref{fig:combined-scheduling}(d), $\ALAS$ follows Algorithm~\ref{alg:ALAS-LCSR-Full} to move J2(3) and J4(3) to M1's queue end. These terminal operations create no conflicts or WIP penalties since they're delayed (Figure~\ref{fig:queue-reordering}(a)).
This reordering creates a gap allowing J3(1) and J1(2) to advance by 7 units. J3(1) incurs a $t_{\text{WIP}} = 1$ penalty—a worthwhile trade. For J1(2), the agent verifies J1(1)'s completion time ($t = 6$), confirming feasibility with no WIP penalty as J3(2)'s timing masks its early staging. LRCP terminates after exhausting improvement options.
Figure~\ref{fig:queue-reordering}(b) shows the final 22-unit makespan, with only one unit of WIP movement and minimal messaging overhead. Given the 3-unit disruption from the 19-unit baseline, this demonstrates $\ALAS$'s effectiveness in disruption-aware targeted replanning.

\vspace{-.1in}
\section{Conclusion}
\label{sec:MAPLE-conclusion}
\vspace{-.1in}
\noindent
We presented $\ALAS$ (Adaptive LLM Agent System), a framework that mitigates core LLM limitations by decomposing planning into memory-augmented agents coordinated through modular templates. Unlike traditional solvers that excel in static settings, $\ALAS$ is designed for dynamic environments where global replanning is costly and disruptive.
By leveraging localized compensation and persistent state tracking, $\ALAS$ employs LRCP to adapt efficiently to runtime disturbances with minimal messaging overhead. Empirical results demonstrate significant improvements over both standalone LLMs and classical
operations research solutions in constraint satisfaction and disruption handling.

{\color{black}\textbf{Limitations.}} This work has three primary limitations. First, deployment and validation in a real-world job-shop setting is deferred to future work. Second, the agent factory assumes reliable code generation from prompts, which may require human oversight in complex or high-risk domains. Third, current $\ALAS$ relies on static statistical models (e.g., constant operation times per machine and $t_{\text{WIP}}$), which would need dynamic adjustment in real-world deployments.
Future work will explore scalable deployment, dynamic cost tuning, and automated agent refinement to broaden $\ALAS$'s applicability in time-sensitive and safety-critical settings.

\bibliographystyle{plain}
\bibliography{TemporalPlanning, EdwardChang, ToBeVerifiedReferences, TSP}

\newpage
\appendix


\section*{Appendices}
\begin{itemize}
    \item Appendix A: $\ALAS$ Architecture and Detailed Workflow
    \item Appendix B: LCSR Algorithm Specification and Lemma Proofs
    \item Appendix C: Urban Ride Assignment Problem: Extended Experimental Results
    \item Appendix D: Family Reunion Planning Problem: Extended Experimental Results
    \item Appendix E: Additional Case Studies and Analyses
\end{itemize}
\noindent Additional supplementary materials, including benchmark datasets, code implementations, and extended analyses, will be uploaded to the Open Review submission site.

\section{Supplemental Information for Section 3}
\label{app:AlasArchitecture}

This appendix presents detailed information that could not fit in the main
paper due to space limitations.
Figure~\ref{fig:ALAS-architecture} depicts the three-layer architecture of $\ALAS$.
Figure~\ref{fig:ALAS-Layer1} presents the three phases of the first layer.
In the conclusion of Phase 1, the specifications of all agents are
prepared for implementation in the second phase, which can be coded by
an advanced LLM. Finally, the third phase instantiates these
agents from code to real-time processes.

For detailed descriptions of each figure, please
refer to the main paper.

\subsection{Complete Meta-Planner Algorithm for Workflow Generation}
\label{app:ALAS-Algorithm-Full}

\begin{algorithm}[h!]
\caption{Workflow Template $\mathcal{W_\text{template}}$ Generation}
\label{alg:ALAS-Meta-Full}
\begin{small}
\vspace{-.2in}
\begin{multicols}{2}
\setlength{\multicolsep}{2pt}
\begin{algorithmic}[1]
\Require Problem specification $\mathcal{O}$, constraints $D = D_G \cup D_I \cup D_N$, performance metrics $\mathcal{M}$
\Statex \hspace{-1.9em} \textbf{Local Variables:}
\State Roles $\mathcal{R}$; Nodes $\mathcal{N}$; Edges $\mathcal{E}$
\State Log schemas $\mathcal{L}_{n_i}, \mathcal{L}_{e_{ij}}$
\State Agents, Comp Agents $\alpha_{n_i}, \alpha_{e_{ij}}$, $\alpha^{comp}_{n_i}, \alpha^{comp}_{e_{ij}}$
\Ensure Validated $\mathcal{W_\text{template}} = (\mathcal{N}, \mathcal{E})$
\Statex

\Statex \hspace{-1.5em} \textbf{Phase 1: Network Construction}(Sec.~\ref{sec:ALAS-Layer1})
\State $\mathcal{R} \gets \text{ExtractRoles}(\mathcal{O})$
\State $\{(n_i, \mathcal{P}_i)\} \gets \text{map}_{\text{role}}(\mathcal{O}, \mathcal{R})$ 
\State $\mathcal{N} \gets \{n_i\}$, $\mathcal{E} \gets \text{map}_{\text{dep}}(\mathcal{N}, D)$ 
\State $\mathcal{W_\text{template}} \gets (\mathcal{N}, \mathcal{E})$
\Statex

\Statex \hspace{-1.5em} \textbf{Phase 2: Agent Specification} (Sec.~\ref{sec:ALAS-Layer2})
\ForAll{$n_i \in \mathcal{N}$}
  \State $\mathcal{L}_{n_i} \gets \text{DefineLogSchema}(n_i, \mathcal{P}_{n_i})$
  \State $\alpha_{n_i} \gets \text{DefineNodeAgent}(n_i, \mathcal{L}_{n_i})$
  \State $\alpha^{comp}_{n_i} \gets \text{DefineCompAgent}(\alpha_{n_i}, \mathcal{L}_{n_i})$
\EndFor
\ForAll{$e_{ij} \in \mathcal{E}$}
  \State $\mathcal{L}_{e_{ij}} \gets \text{DefineLogSchema}(e_{ij}, \mathcal{P}_{e_{ij}})$
  \State $\alpha_{e_{ij}} \gets \text{DefineEdgeAgent}(e_{ij}, \mathcal{L}_{e_{ij}})$
  \State $\alpha^{comp}_{e_{ij}} \gets \text{DefineCompAgent}(\alpha_{e_{ij}}, \mathcal{L}_{e_{ij}})$
\EndFor
\Statex

\Statex \hspace{-1.5em} \textbf{Phase 3: Validation and Refinement} (Sec.~\ref{sec:ALAS-Layer3})
\State $\mathcal{W_\text{template}} \gets \text{UpdateWorkflow}(\mathcal{N}, \mathcal{E}, \alpha, \alpha^{comp})$
\While{not $\text{ValidateWorkflow}(\mathcal{W_\text{template}}, \mathcal{M})$}
  \State $\text{StructuralValidation}(\mathcal{W_\text{template}})$
  \State $\text{ConstraintValidation}(\mathcal{W_\text{template}}, D)$
  \State $\text{CompensationValidation}(\mathcal{W_\text{template}}), \{\alpha^{comp}\})$
  \State $\mathcal{W_\text{template}} \gets \text{RefineWorkflow}(\mathcal{W_\text{template}}, \mathcal{M})$
\EndWhile
\State \Return $\mathcal{W_\text{template}}$
\end{algorithmic}
\end{multicols}
\end{small}
\vspace{-0.15in}
\end{algorithm}

\subsection{ALAS Architecture}
\vspace{-.1in}
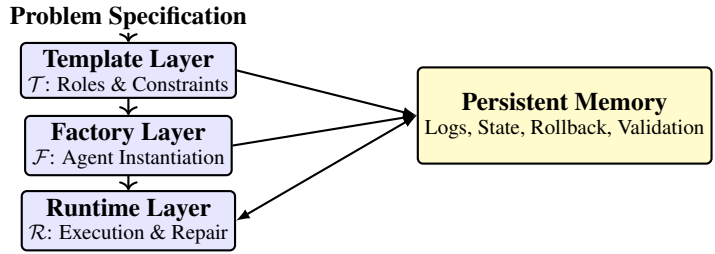
\begin{figure}[h!]
\centering
\begin{tikzpicture}[
  font=\normalsize,
  every node/.style={align=center, rounded corners=2pt, scale=0.8, every node/.style={scale=0.8, font=\normalsize}},
  layer/.style={draw, thick, fill=blue!10, minimum width=2.2cm, minimum height=0.8cm},
  mem/.style={draw, thick, fill=yellow!25, minimum width=2.0cm, minimum height=1.6cm},
  arrow/.style={->, thick}
]


\node[layer] (layer1) at (-3.4, 2.6) {\large \textbf{Template Layer} \\ \normalsize $\mathcal{T}$: Roles \& Constraints};
\node[layer] (layer2) at (-3.4, 1.6) {\large \textbf{Factory Layer} \\ \normalsize $\mathcal{F}$: Agent Instantiation};
\node[layer] (layer3) at (-3.4, 0.6) {\large \textbf{Runtime Layer} \\ \normalsize $\mathcal{R}$: Execution \& Repair};

\node[mem] (memory) at (2.4, 2.0) {\large \textbf{Persistent Memory} \\ \normalsize Logs, State, Rollback, Validation};

\draw[arrow] (layer1.south) -- (layer2.north);
\draw[arrow] (layer2.south) -- (layer3.north);

\draw[->, >=latex, thick] (layer1.east) -- (memory.west);
\draw[->, >=latex, thick] (layer2.east) -- (memory.west);
\draw[<->, >=latex, thick] (layer3.east) -- (memory.west);

\node at (-3.4, 3.3) {\large \textbf{Problem Specification}};

\draw[arrow] (-3.4,3.1) -- (layer1.north);

\end{tikzpicture}
\vspace{.1in}
\caption{ALAS architecture: a lightweight LLM-driven planner with layered decomposition. Persistent memory supports all layers by storing state, validating constraints, and enabling recovery.}
\label{fig:ALAS-architecture}
\end{figure}

\subsection{ALAS Layer \#1 Workflow}
\begin{figure}[ht!]
    \centering
    \begin{tikzpicture}[node distance=1.25cm, transform shape, scale=0.8, every node/.style={scale=0.8, font=\large}] 
        \node (input) [startstop] {Input Data};
        \node (state) [process, below of=input] {State Space Analysis};
        \node (node) [process, below left=.56cm and -0.84cm of state] {Node Identification};
        \node (dep) [process, below right=.56cm and -0.84cm of state] {Dependency Mapping};
        \node (graph) [startstop, below=1.75cm of state] {Network Graph Generation};
        \node (task) [miniprocess, below left=1.05cm and -.35cm of graph] {Task-Specific Agents};
        \node (common) [miniprocess, below=1.05cm of graph] {Common Agents};
        \node (edge) [miniprocess, below right=1.05cm and -.35cm of graph] {Edge Agents};
        \node (plan) [startstop, below=2.24cm of graph] {Initial Plan Generation};
        \node (validate) [process, below=1.05cm of plan] {Plan Validation};
        \node (decision) [decision, below=0.56cm of validate] {Valid Plan?};
        \node (refine) [process, left=0.49cm of decision, minimum width=1.26cm, minimum height=1.05cm, align=center] {Plan \\ Refinement};
        \node (final) [startstop, right=0.7cm of decision] {Final Plan};
        \draw [arrow] (input) -- (state);
        \draw [arrow] (state) -- (node);
        \draw [arrow] (state) -- (dep);
        \draw [arrow] (node) -- (graph);
        \draw [arrow] (dep) -- (graph);
        \draw [arrow] (graph) -- (task);
        \draw [arrow] (graph) -- (common);
        \draw [arrow] (graph) -- (edge);
        \draw [arrow] (task) -- (plan);
        \draw [arrow] (common) -- (plan);
        \draw [arrow] (edge) -- (plan);
        \draw [arrow] (plan) -- (validate);
        \draw [arrow] (validate) -- (decision);
        \draw [arrow] (decision) -- node[anchor=west, above=0.14cm] {No} (refine);
        \draw [arrow] (refine) |- (validate);
        \draw [arrow] (decision) -- node[anchor=south, above=0.14cm] {Yes} (final);
        \node [above=0.28cm of input, font=\bfseries\Large] {Phase 1: Workflow Template Construction};
        \node [above=0.28cm of common, fill=white, font=\bfseries\Large] {Phase 2: Agent Assignment};
        \node [above=0.35cm of validate, fill=white, font=\bfseries\Large] {Phase 3: Validation \& Refinement};
    \end{tikzpicture}
\caption{$\ALAS$ Planning Layer \#1 Architecture. \text{Color Scheme:} \textcolor{cyan!50}{cyan} for input/output/intermediate results, \textcolor{orange!50}{orange} for processes, and \textcolor{red!50}{red} for decisions. This figure illustrates how the meta-planner generates a planning workflow template $\mathcal{W_{\text{template}}}$.}
    \label{fig:ALAS-Layer1}
\end{figure}
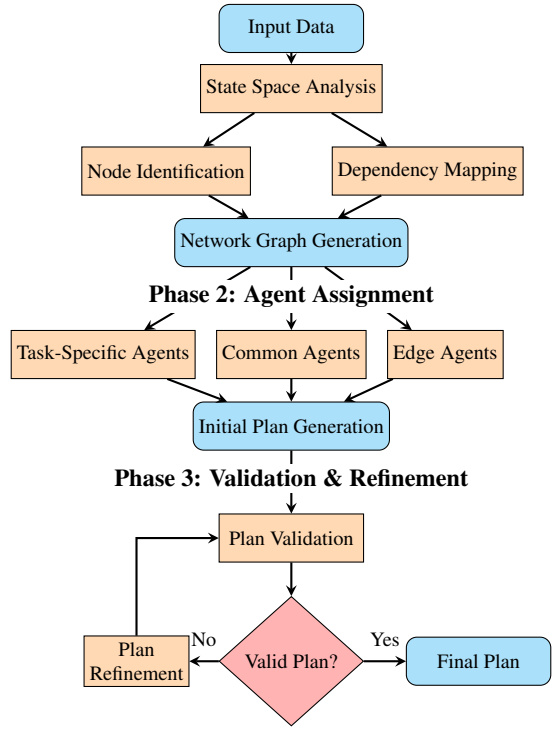

\subsection{Agent Factory Implementation Details}
\label{app:ALAS-Layer2}

This appendix provides detailed information on the Agent Factory component of the ALAS architecture, expanding on the summary provided in the main paper.

\subsubsection{Agent Factory Overview}

The Agent Factory translates formal agent specifications from the meta-planner into executable implementations. It employs a two-stage approach: first attempting to discover existing implementations, and then generating custom implementations when necessary.

\subsubsection{Agent Discovery Process}

The discovery process systematically searches for existing agent implementations that match specifications from phase \#1. For each agent specification $\alpha_i$, the discovery mechanism:

\begin{enumerate}[leftmargin=1.2em, topsep=0pt, itemsep=0pt, label=\arabic*.]
    \item Extracts the capability profile $\mathbf{c}_i$ and constructs a search query to identify potential implementations
    \item Retrieves candidate implementations from:
\begin{itemize}[leftmargin=1.0em, topsep=-.0em, parsep=-.0em, label=-]        \item Public agent repositories (e.g., GitHub, HuggingFace)
        \item API directories and marketplaces
        \item Pre-validated component libraries
        \item Domain-specific collections
    \end{itemize}
    \item Evaluates candidate suitability using multiple criteria:
\begin{itemize}[leftmargin=1.0em, topsep=-.0em, parsep=-.0em, label=-]
        \item Capability matching: Verifies that all required capabilities in $\mathbf{c}_i$ are supported
        \item Protocol compatibility: Ensures compatibility with the specified protocol buffer $\beta_i$
        \item Efficiency compliance: Validates performance against efficiency requirements $e_i$
        \item Context sizing: Confirms the implementation can operate within context window $w_i$
        \item Logging support: Verifies support for the required logging schema $\mathcal{L}_i$
    \end{itemize}
    \item Ranks candidates using a weighted scoring function $S(\alpha_i, I_j)$ where $I_j$ is a candidate: 
    \begin{equation}
        S(\alpha_i, I_j) = \sum_{k} w_k \cdot f_k(\alpha_i, I_j)
    \end{equation}
    where $w_k$ is the weight assigned to criterion $k$, and $f_k$ is an evaluation function for that criterion.
\end{enumerate}

When a suitable implementation is identified, it undergoes verification testing to confirm operational compatibility with the workflow requirements. Upon successful verification, the implementation is registered in the agent repository with appropriate metadata linking it to the specification.

The discovery mechanism employs both exact and approximate matching techniques. Exact matching requires all specification parameters to be satisfied precisely, while approximate matching allows for partial capability matching when accompanied by adaptation mechanisms.

\subsubsection{Agent Coding Mechanism}

When discovery fails to locate suitable implementations, the Agent Factory switches to its coding mechanism, which uses LLMs to generate custom implementations. The coding process follows a structured methodology:

\begin{enumerate}[leftmargin=1.2em, topsep=0pt, itemsep=0pt, label=\arabic*.]
    \item \textbf{Specification Translation}: The formal agent specification is translated into a natural language implementation brief that serves as the prompt for the LLM. This translation preserves all critical requirements while expressing them in a form that maximizes LLM comprehension.
    
    \item \textbf{LLM Selection}: An appropriate LLM is selected based on:
\begin{itemize}[leftmargin=1.0em, topsep=-.0em, parsep=-.0em, label=-]
        \item Domain expertise matching capability requirements in $\mathbf{c}_i$
        \item Demonstrated proficiency in generating the required implementation type
        \item Context window compatibility with the complexity of the specification
        \item Robustness against hallucination for critical components
    \end{itemize}
    
    \item \textbf{Implementation Generation}: The selected LLM generates implementation code with:
\begin{itemize}[leftmargin=1.0em, topsep=-.0em, parsep=-.0em, label=-]
        \item Embedded logging that conforms to schema $\mathcal{L}_i$
        \item Protocol handling for buffer $\beta_i$
        \item Optimizations for efficiency parameters $e_i$
        \item Adaptation to context window constraints $w_i$
    \end{itemize}
    
    \item \textbf{Implementation Validation}: The generated implementation undergoes validation to ensure:
\begin{itemize}[leftmargin=1.0em, topsep=-.0em, parsep=-.0em, label=-]
        \item Functional correctness against specification requirements
        \item Proper integration with the compensation mechanisms defined in $\alpha_i^{comp}$
        \item Robustness against edge cases and exceptional conditions
        \item Compliance with system-wide constraints and protocols
    \end{itemize}
\end{enumerate}

For particularly complex agents, the coding process may employ a multi-stage approach where the implementation is generated iteratively, with each iteration refining the previous version based on validation feedback.

\subsubsection{Compensation Agent Generation}

Special attention is given to the generation of compensation agents, which require precise understanding of the primary agent's operations to ensure proper reversal or mitigation. The generation of compensation agents follows these additional steps:

\begin{enumerate}[leftmargin=1.2em, topsep=0pt, itemsep=0pt, label=\arabic*.]
    \item Extraction of the primary agent's state-modifying operations
    \item Analysis of operation dependencies and sequencing constraints
    \item Determination of appropriate compensation strategies (e.g., undo, retry, escalate)
    \item Generation of the recovery sequence $\Gamma_i$ that defines the steps for returning to a consistent state
\end{enumerate}

The Factory ensures that compensation agents maintain strict operational correspondence with their primary counterparts, guaranteeing every state-modifying operation has a reversal mechanism defined.

\subsubsection{Deployment Artifact Production}

The output of the Agent Factory is a deployable artifact that encapsulates the agent's logic and interaction patterns. These artifacts take several forms, depending on the agent type and implementation approach:

\begin{itemize}[leftmargin=1.0em, topsep=-.0em, parsep=-.0em, label=-]
    \item \textbf{Code Snippets}: Executable code implementing the agent's functionality, typically for computationally intensive or specialized tasks
    \item \textbf{Prompt Templates}: Structured prompts that guide LLMs to implement the specified behavior at runtime, used for cognitively complex or reasoning-intensive tasks
    \item \textbf{API Configurations}: Parameter sets and endpoint specifications for interacting with external services or pre-existing agents
    \item \textbf{Hybrid Implementations}: Combined approaches that leverage both code and LLM prompting for different aspects of the agent's functionality
\end{itemize}

Each artifact is accompanied by metadata that defines its:

\begin{itemize}[leftmargin=1.0em, topsep=-.0em, parsep=-.0em, label=-]
    \item Execution requirements (e.g., runtime environment, dependencies)
    \item Interface specifications for input/output handling
    \item State persistence requirements and mechanisms
    \item Monitoring hooks for runtime observation
    \item Recovery points for compensation handling
\end{itemize}

\subsubsection{Factory Design Pattern Implementation}

The Agent Factory implements the classic Factory design pattern, providing a standardized interface for agent instantiation while encapsulating the complexity of implementation selection, generation, and validation. This pattern enables:

\begin{itemize}[leftmargin=1.0em, topsep=-.0em, parsep=-.0em, label=-]
    \item Decoupling of agent specifications from implementation details
    \item Support for heterogeneous implementation technologies
    \item Runtime substitution of agents when needed for recovery or optimization
    \item Maintenance of a growing repository of reusable components
\end{itemize}

The Factory pattern allows the ALAS system to evolve its agent implementation strategies over time without requiring changes to the meta-planning or runtime components, creating a flexible architecture that can adapt to new implementation technologies and approaches.

\subsubsection{Implementation Efficiency Considerations}

To maximize system efficiency, the Agent Factory implements several optimization strategies:

\begin{enumerate}[leftmargin=1.2em, topsep=0pt, itemsep=0pt, label=\arabic*.]
    \item \textbf{Caching}: Previously generated implementations are cached and indexed by their specifications to avoid redundant generation
    \item \textbf{Component Reuse}: Complex implementations are decomposed into reusable components that can be shared across multiple agents
    \item \textbf{Incremental Refinement}: When similar agents have been previously implemented, the Factory uses delta-based generation to create variants rather than generating entirely new implementations
    \item \textbf{Resource Scaling}: Implementation generation resources are allocated proportionally to the complexity and criticality of the agent
\end{enumerate}

These optimizations significantly reduce the computational overhead of agent generation, particularly in scenarios where multiple similar agents are required or when the system executes recurring workflow patterns.

\subsubsection{Theoretical Foundations}

The Agent Factory design is grounded in several theoretical frameworks:

\begin{itemize}[leftmargin=0.8em, topsep=-.0em, parsep=-.0em, label=-]
    \item \textit{Program Synthesis}: Formal methods for generating programs from specifications
    \item \textit{Component-Based Software Engineering}: Principles of component composition and reuse
    \item \textit{LLM Prompt Engineering}: Techniques for directing LLM behavior through structured prompts
    \item \textit{Agent-Oriented Software Engineering}: Methodologies for developing autonomous software agents
\end{itemize}

These foundations provide a rigorous basis for the Factory's approach to transforming abstract agent specifications into concrete, executable implementations.
\section{LCSR Specifications and Lemma Proofs}
\label{app:lcsr}

\begin{algorithm}[H]
\caption{$\ALAS$ Reactive Planning: LCSR with Cascading Repair and Queue Reordering}
\label{alg:ALAS-LCSR-Full}
\begin{footnotesize}
\begin{algorithmic}[1]
\Require Global tracker $\mathcal{T}$; queues $\{Q_k\}$; disruption time $t_d$; machine $M_k$ down for duration $\delta t$
\Ensure Updated execution tracker $\mathcal{T}$ and queues $\{Q_k\}$

\vspace{0.2em}
\Statex \textit{Phase I: Status Update}
\For{each $(J_i, M_k, s_{ij}, e_{ij}, j) \in \mathcal{T}$}
  \If{$e_{ij} < t_d$}
    \State $\ell_{ij} \gets 2$ \Comment{Completed}
  \ElsIf{$s_{ij} \le t_d < e_{ij}$}
    \State $\ell_{ij} \gets 1$ \Comment{In progress (WIP)}
    \State Reschedule $O_{ij}$ to $t_d + \delta t$
  \Else
    \State $\ell_{ij} \gets 0$ \Comment{Waiting}
  \EndIf
\EndFor

\vspace{0.2em}
\Statex \textit{Phase II: Delay Propagation}
\For{each rescheduled $O_{ij}$ on $M_k$}
  \State Update $s_{ij}, e_{ij}$ in $Q_k$ and $\mathcal{T}$
  \State Send \texttt{DELAY\_NOTIFY}$(J_i, j, e_{ij})$ to the agent of $M_{k'}$
\EndFor

\vspace{0.2em}
\Statex \textit{Phase III: Local Queue Optimization}
\For{each pair $(O_a, O_b)$ in $Q_k$, where $a < b$}
  \If{$O_b$ is the final operation of its job $J_i$}
    \State $\Delta t \gets \text{EvalSwap}(O_a, O_b)$ \Comment{Makespan impact}
    \If{$\Delta t < 0$}
      \State Swap $O_a \leftrightarrow O_b$ in $Q_k$
      \State Update $s$, $e$ in $\mathcal{T}$
    \EndIf
  \Else
    \State Skip (non-terminal operations are less flexible)
  \EndIf
\EndFor

\vspace{0.2em}
\Statex \textit{Phase IV: Cascading Delay Handling}
\While{message queue not empty}
  \State Receive \texttt{DELAY\_NOTIFY}$(J_i, j, e_{ij})$
  \State Let $O_{i(j+1)} = (M_{k'}, s, e, d)$
  \If{$s < e_{ij}$}
    \State $s \gets e_{ij}$; $e \gets s + d$
    \State Update $Q_{k'}$ and $\mathcal{T}$
    \If{$\ell_{i(j+1)} \ne 2$}
      \State Send \texttt{DELAY\_NOTIFY}$(J_i, j+1, e)$
    \EndIf
  \EndIf
\EndWhile

\vspace{0.2em}
\State \Return Updated tracker $\mathcal{T}$ and queues $\{Q_k\}$
\end{algorithmic}
\end{footnotesize}
\end{algorithm}

\begin{lemma}[Generalized LCSR Complexity]
\mbox{}\\[0.5ex]
For a system with:
\begin{itemize}[leftmargin=1.2em, topsep=-.15pt, itemsep=-.15pt]
    \item $J$ jobs
    \item $M$ machines 
    \item At most $O_{\max}$ operations per job
    \item $S$ average swap evaluations per queue ($1 \leq S \leq J$)
\end{itemize}
The worst-case time complexity is:
\begin{equation}
\mathcal{O}\left(\frac{J^2 O_{\max}^2}{M} + J M O_{\max}\right)
\end{equation}
\end{lemma}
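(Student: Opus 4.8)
The plan is to establish the bound by a phase-by-phase accounting of Algorithm~\ref{alg:ALAS-LCSR-Full}, charging each of its four phases to one of the two additive terms in the claimed complexity. Two preliminary facts I would fix first: (i) the total number of operations held in the tracker $\mathcal{T}$ is at most $J O_{\max}$, since there are $J$ jobs with at most $O_{\max}$ operations each; and (ii) under the standard JSSP convention that each job visits each machine at most once, these operations are spread across the $M$ machine queues, so a balanced queue holds $L_k \approx J O_{\max}/M$ operations with $\sum_k L_k \le J O_{\max}$.

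First I would dispatch Phases I and II (status update and delay propagation). Each performs a single linear scan over the operations in $\mathcal{T}$, doing $O(1)$ work per operation (a status comparison, a local reschedule, or the emission of one \texttt{DELAY\_NOTIFY} message), so each costs $O(J O_{\max})$. These are dominated by both target terms and will be absorbed at the end. The first principal term, $J^2 O_{\max}^2/M$, I would charge to Phase III: the algorithm scans candidate swap pairs $(O_a,O_b)$ within each machine queue $Q_k$, of which there are $O(L_k^2)$ in a queue of length $L_k$, and each $\text{EvalSwap}$ makespan-impact test is $O(1)$ given the cached start/end times in $\mathcal{T}$. Summing over the $M$ queues and using the balanced bound gives $\sum_k O(L_k^2) = M \cdot O\!\left((J O_{\max}/M)^2\right) = O\!\left(J^2 O_{\max}^2/M\right)$.

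The second principal term, $J M O_{\max}$, I would charge to the cascading propagation executed by Phases II and IV. Each of the triggered reschedules (at most $S$ per queue, hence $O(SM)$ in total, and $S \le J$) launches a delay wave along a job chain of length at most $O_{\max}$, and maintaining global consistency requires touching the affected machine queues; bounding the re-processing of each operation across machines yields $O(S M O_{\max}) = O(J M O_{\max})$. Combining the four phase budgets and discarding the dominated $O(J O_{\max})$ terms then gives $\mathcal{O}\!\left(J^2 O_{\max}^2/M + J M O_{\max}\right)$.

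The main obstacle I anticipate is Phase IV: the \texttt{while} loop can in principle re-delay the same operation multiple times as waves arrive from different predecessors, so a naive count overstates the work. The key lemma I would prove is that delays propagate \emph{monotonically}---each \texttt{DELAY\_NOTIFY} only ever pushes a start time later, and each start time is non-decreasing and bounded above---so every operation is finalized after a bounded number of updates, giving the clean $O(J M O_{\max})$ budget rather than an unbounded cascade. A secondary subtlety is the balanced-queue assumption used for Phase III: for $O_{\max} < M$ an adversary could concentrate operations on a few machines and inflate $\sum_k L_k^2$, but in the benchmark regime each job visits every machine exactly once, so $O_{\max} = M$, every queue has exactly $J$ operations, and $\sum_k L_k^2 = M J^2 = J^2 O_{\max}^2/M$ holds with equality, making the stated bound tight in the setting of interest.
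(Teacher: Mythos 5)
Your proposal is correct and follows essentially the same route as the paper's own proof: a phase-by-phase accounting of Algorithm~\ref{alg:ALAS-LCSR-Full} that charges Phases~I--II to dominated $\mathcal{O}(J O_{\max})$ terms, Phase~III (queue optimization over all operation pairs per machine) to $\mathcal{O}\bigl(J^2 O_{\max}^2 / M\bigr)$, and Phase~IV (cascading delays through all machines) to $\mathcal{O}(J M O_{\max})$. In fact your treatment is more careful than the paper's, which simply asserts the per-phase costs; your balanced-queue derivation of the Phase~III term (and the observation that it requires $O_{\max} \approx M$ or a uniformity assumption to hold against adversarial concentration) and your monotone-propagation lemma bounding re-delays in Phase~IV fill in exactly the steps the paper leaves implicit.
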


\begin{proof}
The complexity derives from four components:

\begin{enumerate}[leftmargin=1.5em, topsep=-.15pt, itemsep=-.15pt]
    \item \textit{Status Update}: $\mathcal{O}(J O_{\max})$ \\
    Must check all operations of all jobs
    
    \item \textit{Delay Propagation}: $\mathcal{O}(J O_{\max})$ \\
    Each job's operation chain may have $O_{\max}$ elements
    
    \item \textit{Queue Optimization}:
    \begin{itemize}[leftmargin=1.2em, topsep=-.15pt, itemsep=-.15pt]
        \item Full analysis: $\mathcal{O}\left(\frac{J^2 O_{\max}^2}{M}\right)$ \\
        All operation pairs on all machines
        \item Practical bound: $\mathcal{O}(S J O_{\max})$ \\
        When swaps are limited to $S$ evaluations
    \end{itemize}
    
    \item \textit{Cascading Delay}: $\mathcal{O}(J M O_{\max})$ \\
    Worst-case propagation through all machines
\end{enumerate}

The dominant terms combine to give the final complexity:
\[
\underbrace{\frac{J^2 O_{\max}^2}{M}}_{\text{queue optimization}} + \underbrace{J M O_{\max}}_{\text{cascading delays}}
\]
\end{proof}

\begin{corollary}[Special Cases]
\begin{itemize}[leftmargin=1.2em, topsep=-.15pt, itemsep=-.15pt]
    \item \textit{Single-operation jobs} ($O_{\max}=1$): 
    $\mathcal{O}(J^2/M + JM)$
    
    \item \textit{Fully parallel systems} ($M \approx J$): 
    $\mathcal{O}(J O_{\max}^2 + J^2 O_{\max})$
    
    \item \textit{Swap-limited implementations}: 
    $\mathcal{O}(S J O_{\max} + J M O_{\max})$
\end{itemize}
\end{corollary}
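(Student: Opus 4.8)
The plan is to treat the corollary as three specializations of the general bound $\mathcal{O}\!\left(J^2 O_{\max}^2/M + J M O_{\max}\right)$ established in the Lemma. Recall that the Lemma's proof decomposes the total running time into four phase contributions: status update $\mathcal{O}(J O_{\max})$, delay propagation $\mathcal{O}(J O_{\max})$, queue optimization $\mathcal{O}(J^2 O_{\max}^2/M)$ (or, under swap limiting, its practical variant $\mathcal{O}(S J O_{\max})$), and cascading delay $\mathcal{O}(J M O_{\max})$. For each special case I would substitute the relevant parameter regime into this four-term sum and then discard the asymptotically dominated terms, reading off the two surviving ``headline'' terms quoted in the corollary.

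For the \emph{single-operation} regime I would set $O_{\max}=1$. The queue-optimization and cascading terms collapse to $J^2/M$ and $JM$, while the status-update and delay-propagation terms shrink to $\mathcal{O}(J)$; since $JM \ge J$ whenever $M \ge 1$, the latter are absorbed, leaving $\mathcal{O}(J^2/M + JM)$. For the \emph{fully parallel} regime I would set $M = \Theta(J)$, so that $J^2 O_{\max}^2/M = \Theta(J O_{\max}^2)$ and $J M O_{\max} = \Theta(J^2 O_{\max})$; the two $\mathcal{O}(J O_{\max})$ terms are dominated because $O_{\max}\ge 1$ forces $J O_{\max}^2 \ge J O_{\max}$, giving $\mathcal{O}(J O_{\max}^2 + J^2 O_{\max})$.

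The \emph{swap-limited} case is the one that is not a pure parameter substitution. Here I would invoke the alternative queue-optimization bound $\mathcal{O}(S J O_{\max})$ already derived in the Lemma's proof under the assumption that each machine queue performs at most $S$ swap evaluations, replacing the full term $J^2 O_{\max}^2/M$ with it while leaving the cascading-delay term $J M O_{\max}$ untouched. Because $S \ge 1$ and $M \ge 1$ imply $S J O_{\max} \ge J O_{\max}$ and $J M O_{\max} \ge J O_{\max}$ respectively, the status and propagation terms are again dominated, yielding $\mathcal{O}(S J O_{\max} + J M O_{\max})$.

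The derivations are short, so the only real obstacle is bookkeeping rather than any deep argument. I must state the background assumptions $O_{\max}\ge 1$, $M \ge 1$, and $S \ge 1$ explicitly, since each of the three domination steps relies on exactly one of them; and in the swap-limited case I must be careful to cite the practical queue-optimization bound $\mathcal{O}(S J O_{\max})$ from the Lemma's proof rather than silently re-deriving it. A final sanity check is to confirm that, within each fixed regime, the two retained terms are genuinely the asymptotic maxima over all four phase contributions, which the inequalities above establish.
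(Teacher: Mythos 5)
Your proposal is correct and takes exactly the route the paper intends: the paper offers no separate proof for the corollary, treating it as a direct specialization of the Lemma's four-component bound (status update and delay propagation $\mathcal{O}(J O_{\max})$, queue optimization $\mathcal{O}(J^2 O_{\max}^2/M)$ with practical variant $\mathcal{O}(S J O_{\max})$, cascading delay $\mathcal{O}(J M O_{\max})$), which is precisely your substitution-and-domination argument. Your explicit tracking of the assumptions $O_{\max}\ge 1$, $M\ge 1$, $S\ge 1$ for absorbing the lower-order terms is a modest tightening of what the paper leaves implicit.
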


\vspace{0.5em}
\textit{Key Observations}:
\begin{itemize}[leftmargin=1.2em, topsep=-.15pt, itemsep=-.15pt]
    \item Complexity is quadratic in job count and operations
    \item Machine count appears both in numerator (delays) and denominator (parallelization)
    \item Practical implementations can achieve better bounds through swap heuristics
\end{itemize}

\section{Urban Ride Assignment Problem}
\label{app:ALAS-URAP}

The goal is to optimally assign ride requests to a fleet of autonomous or human-driven vehicles in a city, while satisfying various constraints and objectives. The key elements are the following.

\begin{itemize}[leftmargin=1.0em, topsep=-.0em, parsep=-.0em, label=*]
\item \textbf{City Map:} A graph $G = (V, E)$ where $V$ is the set of locations and $E$ is the set of roads connecting them, with associated distances and travel times.
\item \textbf{Ride Requests:} A set of requests $R$, where each request $r_i \in R$ is characterized by:
\begin{itemize}[leftmargin=1.0em, topsep=-.0em, parsep=-.0em, label=-]
    \item Passenger ID $p_i$
    \item Pickup location $v_{p_i} \in V$ 
    \item Drop-off location $v_{d_i} \in V$
    \item Desired pickup time window $[t_{p_i}^{min}, t_{p_i}^{max}]$
    \item Desired drop-off time window $[t_{d_i}^{min}, t_{d_i}^{max}]$
\end{itemize}

\item \textbf{Vehicles:} A set of vehicles $K$, where each vehicle $k_j \in K$ has:
\begin{itemize}[leftmargin=1.0em, topsep=-.0em, parsep=-.0em, label=-]
    \item Vehicle ID $k_j$ 
    \item Current location $v_{k_j} \in V$
    \item Battery/fuel level $b_{k_j} \in [0, 1]$
    \item Passenger capacity $c_{k_j} \in \mathbb{Z}^+$
    \item Speed $s_{k_j} \in \mathbb{R}^+$
\end{itemize}

\end{itemize}

\subsection{A Simplified URS Problem Statement}

Table~\Ref{fig:appDURSSpec} in the main text depicts a URS problem
with three drivers and four passengers.
Using this problem, we walk through how $\ALAS$ works.

\subsection{Generating Planner W* Walkthrough}

Given the problem statement of URS, $\ALAS$ generates a planning template $\mathcal{W_\text{template}}$.

\begin{table*}[ht!]
\caption{Agent Specifications and Protocols}
\begin{footnotesize}
\begin{tabular}{|p{1.8cm}|p{3.4cm}|p{3.3cm}|p{3.5cm}|}
\hline
\textbf{Agent Type} & \textbf{Input Protocol} & \textbf{Output Protocol} & \textbf{Key Functions} \\
\toprule
\hline
\multicolumn{4}{|c|}{\textbf{Task-Specific Agents}} \\
\hline
Route \newline Planning & 
- Location map $G(V,E)$ \newline
- Travel times matrix \newline
- Vehicle positions & 
- Optimized routes \newline
- Distance calculations \newline
- Path sequences & 
- Path optimization \newline
- Distance minimization \newline
- Route feasibility checks \\
\hline
Scheduling & 
- Required arrival times \newline
- Travel duration estimates \newline
- Vehicle availability & 
- Pickup schedule \newline
- Timing constraints \newline
- Buffer allocations & 
- Schedule generation \newline
- Timing verification \newline
- Buffer management \\
\hline
Capacity \newline Management & 
- Passenger requests \newline
- Vehicle capacities \newline
- Route timing & 
- Passenger groupings \newline
- Vehicle  \newline
- Capacity utilization & 
- Group optimization \newline
- Capacity verification \newline
- Load balancing \\
\hline
\midrule
\multicolumn{4}{|c|}{\textbf{Common Agents}} \\
\hline
Temporal \newline Constraint & 
- Schedule requirements \newline
- Time windows \newline
- Buffer needs & 
- Timing validations \newline
- Constraint satisfaction \newline
- Buffer adequacy & 
- Time verification \newline
- Constraint checking \newline
- Buffer analysis \\
\hline
Resource \newline Allocation & 
- Vehicle inventory \newline
- Request demands \newline
- Location data & 
- Resource assignments \newline
- Utilization plans \newline
- Coverage maps & 
- Resource optimization \newline
- Coverage verification \newline
- Efficiency analysis \\
\hline
Distance \newline Optimization & 
- Route options \newline
- Distance matrix \newline
- Time constraints & 
- Optimized paths \newline
- Distance metrics \newline
- Efficiency scores & 
- Path optimization \newline
- Distance reduction \newline
- Efficiency maximization \\
\hline
\midrule
\multicolumn{4}{|c|}{\textbf{Validation Agents}} \\
\hline
Plan Validator & 
- Complete plan \newline
- System constraints \newline
- Quality metrics & 
- Validation results \newline
- Constraint checks \newline
- Performance scores & 
- Plan verification \newline
- Constraint validation \newline
- Quality assessment \\
\hline
Refinement Agent & 
- Validation results \newline
- Improvement options \newline
- Performance metrics & 
- Refinement suggestions \newline
- Update priorities \newline
- Optimization paths & 
- Plan improvement \newline
- Update sequencing \newline
- Performance optimization \\
\hline
\end{tabular}
\end{footnotesize}
\label{tab:appAgentSpecs}
\end{table*}

\subsubsection{State-Space Analysis}

Our Urban Ride-Sharing (URS) problem presents a complex transportation scheduling challenge that we must first understand through systematic state-space analysis. The system involves seven locations (A through G), where G represents Boston Logan Airport, with urban locations forming a mesh network of 10km distances and airport routes ranging from 31-36km. Four passengers require airport transportation with specific arrival deadlines, while three vehicles, each capable of carrying two passengers, must be coordinated to meet these demands efficiently.

Each dimension of our state space reveals crucial aspects of the planning challenge. In the \emph{Who} dimension, we track four passenger requests ($r_1$ through $r_4$) and three vehicles ($k_1$ through $k_3$). These passengers require arrivals at BOS between 08:45 and 09:00, with each vehicle qualified for airport routes and positioned initially at locations A, C, and E. 

The \emph{Where} dimension maps our network topology, distinguishing between urban segments with uniform 10km distances and airport routes varying from 31-36km. This spatial arrangement, combined with the \emph{When} dimension's speed constraints (60km/h urban, 100km/h airport routes), creates our fundamental timing framework. Simple calculations reveal urban segments require 10 minutes of travel time, while airport routes need 19-22 minutes depending on origin.

Our \emph{What} dimension monitors vehicle resources throughout plan execution, ensuring we respect the two-passenger capacity limit while maximizing sharing opportunities. The \emph{Why} dimension establishes our optimization objectives: ensuring on-time airport arrivals while minimizing total distance traveled. The \emph{How} dimension defines our execution protocols, including pickup sequencing and route navigation strategies.

\subsubsection{Phase 1: Network Construction}

Building upon our state-space analysis, we construct our planning network by first identifying critical nodes and dependencies. Our node set $\mathcal{N}$ comprises:

Passenger Nodes: Each request $r_i$ becomes a node with attributes:
- $r_1$: Location A, BOS arrival 08:45
- $r_2$: Location B, BOS arrival 08:50
- $r_3$: Location C, BOS arrival 08:55
- $r_4$: Location D, BOS arrival 09:00

Vehicle Nodes: Each vehicle $k_i$ forms a node with position and capacity:
- $k_1$: Starting at A, capacity 2
- $k_2$: Starting at C, capacity 2
- $k_3$: Starting at E, capacity 2

Location Nodes: Each physical location becomes a node with attributes including distance to other locations and travel time calculations.

Our dependency set $\mathcal{E}$ captures relationships between these nodes through several categories:

Temporal Dependencies: We establish feasible pickup windows by working backward from required arrival times. For example, $r_1$ requires 22 minutes for the airport route plus 10 minutes for each urban segment traversed, creating timing constraints for vehicle assignment.

Spatial Dependencies: We map possible routes between nodes, considering both direct airport routes and potential shared-ride combinations through urban segments.

Capacity Dependencies: We create edges representing feasible passenger groupings within vehicle capacity limits.

\subsubsection{Phase 2: Agent Assignment}

With our network structure defined, we assign specialized agents to manage different aspects of the solution:

Task-Specific Agents:
The Route Planning Agent optimizes paths using the distance matrix and travel speeds, calculating optimal routes for both single and shared rides. The Scheduling Agent determines precise pickup times, working backward from airport deadlines and incorporating travel time calculations. The Capacity Management Agent identifies feasible passenger groupings based on timing and location proximity.

Common Agents:
The Temporal Constraint Agent ensures all timing requirements are met, maintaining a master schedule that accounts for all dependencies. The Resource Allocation Agent assigns vehicles to routes, optimizing the distribution of available capacity. The Distance Optimization Agent works to minimize total travel distance while respecting all constraints.

Edge Agents:
These agents manage the relationships between different aspects of the plan. For example, the Passenger Grouping Agent evaluates potential shared rides by analyzing proximity of pickup locations and compatibility of arrival times.

\subsubsection{Phase 3: Validation and Refinement}

In our final phase, we implement a comprehensive validation and refinement process:

Initial Validation:
We verify temporal feasibility by checking that all calculated pickup times allow sufficient travel time to meet airport deadlines. We confirm capacity constraints are respected throughout all vehicle routes. We validate that all passengers are served and all required resources are properly allocated.

Iterative Refinement:
We identify optimization opportunities, such as grouping passengers with compatible timing and locations. For example, passengers $r_2$ and $r_3$ might share a ride if their pickup locations are close and arrival times are within 5 minutes. We adjust vehicle assignments to minimize empty travel distance while maintaining service guarantees.

Final Plan Generation:
The resulting plan specifies exact pickup times, vehicle assignments, and routes, with built-in buffers for potential delays. The plan includes contingency protocols for common disruptions such as traffic delays or passenger late arrivals.

This systematic approach ensures we generate a robust, efficient solution to our URS problem while maintaining clear documentation of our planning process and decisions.

\subsubsection{Output}

Table~\ref{tab:appAgentSpecs} the list of required agents and
their functional specifications and protocols.

\begin{table*}[ht!]
    \centering
    \caption{Agent Placement in the Urban Ride Sharing Network}
        \begin{footnotesize}
        \begin{tabular}{|l|l|p{8cm}|}
        \hline
        \textbf{Location} & \textbf{Type} & \textbf{Agents and Their Responsibilities} \\
        \toprule \hline
        A--F & Nodes & \textbf{Resource Allocation Agent}: Manages vehicle assignments and passenger pickups at urban locations \\
        \hline
        G (Airport) & Node & 
        \begin{minipage}[t]{8cm}
        \textbf{Plan Validator Agent}: Verifies arrival times and  plan feasibility\\
        \textbf{Temporal Constraint Agent}: Ensures all arrival deadlines met
        \end{minipage} \\
        \hline
        A--F edges & Urban Routes & 
        \begin{minipage}[t]{8cm}
        \textbf{Route Planning Agent}: Optimizes urban route segments (10 min travel time)\\
        \textbf{Scheduling Agent}: Coordinates pickup sequences and timing
        \end{minipage} \\
        \hline
        (A,...,F)--G & Airport Routes & 
        \begin{minipage}[t]{8cm}
        \textbf{Capacity Management Agent}: Ensures vehicle capacity constraints during airport trips\\
        \textbf{Distance Optimization Agent}: Minimizes total travel distance
        \end{minipage} \\
        \hline
        Network-wide & Global & 
        \begin{minipage}[t]{8cm}
        \textbf{Refinement Agent}: Iteratively improves solutions based on validation results\\
        Monitors and adjusts both urban and airport route segments
        \end{minipage} \\
        \hline
    \end{tabular}
    \end{footnotesize}
    \vspace{-.1in}
    \label{tab:agent-placement}
\end{table*}

\subsection{From Workflow Template to Execution Workflow}

Once the template $\mathcal{W_\text{template}}$ is defined, it serves as a structured blueprint that outlines how the problem should be approached. However, a high-level plan alone is insufficient for real-world execution. The next step is to transform the planning workflow into a \textit{real execution workflow} $\mathcal{W_\text{exec}}$, where abstract roles and dependencies are resolved into concrete actionable tasks based on real-world data.

To clarify this transition, consider the difference between $\mathcal{W_\text{template}}$ and $\mathcal{W_\text{exec}}$ in our ride-sharing scenario. In the planning phase, roles such as \texttt{Driver} and \texttt{Passenger} are defined as abstract entities. The template workflow $\mathcal{W_\text{template}}$ specifies how these entities interact, matching drivers with passengers, optimizing routes, and scheduling pickups, without assigning real-world counterparts yet. 

In contrast, the execution workflow $\mathcal{W_\text{exec}}$ performs \textbf{role resolution}, mapping abstract roles to real-world instances. This means assigning an actual driver to a specific vehicle, matching a real passenger to a ride request, and computing precise travel distances based on real-time geo-coordinates. In addition, the execution workflow must dynamically adapt to real-world constraints, such as traffic conditions, vehicle availability, and passenger delays. 

In this process, the meta-planner generates $\mathcal{W_\text{exec}}$, a directed graph where nodes correspond to concrete actions (e.g., ``Driver John departs from location A''), and edges represent dependencies and constraints (e.g., ``Driver John must reach location B before 10:30 AM''). This execution graph integrates real-time data and updates continuously, allowing agents to make informed decisions as conditions evolve.

Thus, the \textbf{template workflow} $\mathcal{W_\text{template}}$ structures how to plan, while the \textbf{execution workflow} $\mathcal{W_\text{exec}}$ governs how real-world actions are performed. Transformation from one to the other is a critical step in $\MP$, ensuring that strategic reasoning is translated into actionable real-time operations.

Now, based on the URS problem specified in Table~\Ref{fig:appDURSSpec} of Section~\ref{sec:ALAS-evaluation} in the main text, 
the list of agents required and
their functional specifications and protocols in Table~\ref{tab:appAgentSpecs}, $\MP$ proceeds generating an execution workflow $\mathcal{W_\text{exec}}$.

\subsubsection{Observation on Sequential Planning}

Let us explain the value of using agents in this problem, even though we have shown that simpler solvers can handle the computational aspects. This discussion touches on key principles of system design and real-world implementation.  

While our Monte Carlo solver effectively found good solutions for this specific instance, $\MP$ offers several advantages that become particularly valuable in real-world ride-sharing systems.  

First, $\MP$ helps manage complexity in dynamic environments. In our exercise, we worked with a static problem where all passenger requests and constraints were known in advance. However, in reality, ride-sharing systems must handle continuous updates—new ride requests arrive at unpredictable times, vehicles experience delays, and road conditions constantly change. With $\MP$, each agent operates independently, monitoring and reacting to changes in its own domain. For example, the Route Planning Agent can dynamically adjust routes in response to traffic updates, while the Capacity Management Agent ensures new passenger requests are accommodated efficiently.  

Second, $\MP$ enables distributed decision-making and parallel processing. Instead of relying on a centralized solver, different agents specialize in handling specific tasks simultaneously. While the Scheduling Agent optimizes pickup times, the Resource Allocation Agent manages vehicle assignments in parallel. This decentralized structure is crucial for scalability—when the system expands to hundreds of vehicles and thousands of passengers, distributing computational workload prevents bottlenecks and ensures efficient operations.  

Third, $\MP$ provides modularity, allowing the system to evolve naturally. Ride-sharing services frequently introduce new features, such as surge pricing or specialized vehicle categories. With an agent-based design, we can integrate a Pricing Agent or a Vehicle Specialization Agent without modifying the core routing logic. Likewise, if we develop a more advanced routing algorithm, we can upgrade the Route Planning Agent without disrupting other system components.  

The separation of concerns through agents also enhances system resilience. If one agent encounters issues—say, the Distance Optimization Agent fails to compute an optimal route—other agents continue operating with fallback strategies. The Plan Validator Agent can detect suboptimal assignments and trigger refinements through the Refinement Agent, ensuring that the system adapts to unforeseen challenges.  

We can think of this like a well-organized team working on a complex project. While a single individual might handle everything, a structured team of specialists—each with clear roles and defined communication protocols—is often more effective, robust, and scalable. In this way, while our Monte Carlo solver demonstrates what is mathematically possible, the agent-based architecture of $\MP$ shows how we can implement it reliably in real-world systems.

\subsection{Reactive Planning under Disruptions}
\label{app:MAPLE-URAP-Disruptions}

The value of multi-agent reactive planning becomes clear in dynamic environments. For example, consider a sudden road closure between locations B and C. While a monolithic solver would need to halt and recompute an entirely new plan from scratch, a modular agent-based approach enables localized, parallel adaptation. A Route Planning Agent can immediately update affected paths, while a Scheduling Agent adjusts arrival estimates, and a Resource Allocation Agent reallocates vehicles, all operating concurrently while preserving system stability. This distributed replanning minimizes disruption impact and maintains overall workflow coherence.

The following case study illustrates these principles in an Urban Ride Sharing (URS) scenario involving ride cancellation and new request insertion.

\paragraph{URS Disruption Handling.}
To evaluate adaptation capabilities, we introduce a disruption where passenger $r_2$ cancels the ride request at 8:05, and a new request $r_5$ at location F arrives at 8:10. $\MP$ replans dynamically, adjusting vehicle assignments while preserving all passenger deadlines. In contrast, baseline LLMs fail to track vehicle states after partial execution and lose consistency with the initial plan, leading to infeasible or incoherent schedules.

\section{Family Reunion Planning Problem}
\label{app:ALAS-MeetingProblem}

Table~\ref{apptab:ThanksgivingDinner} presents the specification of the problem.  The participating LLMs and their configurations
are depicted in Section~\ref{sec:ALAS-exp-case2} of the main text.



\begin{table}[ht!]
\centering
\caption{Thanksgiving Dinner Coordination Problem}
\begin{small}
\renewcommand{\arraystretch}{1.1}
\fbox{
\begin{minipage}{0.65\textwidth}
\textbf{Objective:} Coordinate family arrivals and dinner preparation for 6:00 PM dinner in Boston

\textbf{Family Members and Arrivals:}
\begin{itemize}[leftmargin=1em, topsep=-.1pt, itemsep=-.1pt, label=-]
\item Sarah (Mom): Host, at home
\item James (Dad): Lands at BOS 1:00 PM from SF
\item Emily (Sister): Lands at BOS 2:30 PM from Chicago
\item Michael (Brother): Driving, arrives 3:00 PM from NY
\item Grandma: Needs pickup from suburban Boston
\end{itemize}

\textbf{Cooking Requirements:}
\begin{itemize}[leftmargin=1em, topsep=-.1pt, itemsep=-.1pt, label=-]
\item Turkey: 4 hours cooking time
\item Side dishes: 2 hours preparation
\item Someone must stay home during cooking
\end{itemize}

\textbf{Transportation Constraints:}
\begin{itemize}[leftmargin=1em, topsep=-.1pt, itemsep=-.1pt, label=-]
\item James must rent car after landing
\item Emily requires airport pickup
\item Travel times:
   \begin{itemize}
   \item Home to BOS Airport: 60 min
   \item BOS Airport to Grandma's: 60 min
   \item Home to Grandma's: 30 min
   \end{itemize}
\end{itemize}

\textbf{Key Requirements:}
\begin{itemize}[leftmargin=1em, topsep=-.1pt, itemsep=-.1pt, label=-]
\item All family members at home for 6:00 PM dinner
\item Turkey and sides ready by dinner time
\item All pickups completed with available drivers
\item Cooking supervision maintained
\end{itemize}
\end{minipage}
}
\end{small}
\label{apptab:ThanksgivingDinner}
\end{table}

\subsection{Phase 1: Network Construction}

\subsubsection{Node (Role) Specifications}

First, meta-planner $\mathcal{MP}$ of $\ALAS$ extracts roles ($\mathcal{N}$) with their required qualifications:
\begin{itemize}[leftmargin=1.5em, topsep=-.15em, parsep=-.15em]
    \item $n_{\text{cook}}$: capability to prepare dinner
    \item $n_{\text{driver1}}$: capability to drive, pick up from airport
    \item $n_{\text{driver2}}$: capability to drive, pick up grandma
    \item $n_{\text{supervisor}}$: capability to monitor oven
\end{itemize}

\subsubsection{Edge (Dependency) Specifications}
Next, $\mathcal{MP}$ identifies dependencies ($\mathcal{E}$) between roles:
\begin{equation}
\mathcal{E} = \{e_{\text{temporal}}, e_{\text{spatial}}, e_{\text{safety}}\}
\end{equation}

The critical dependencies include:
\begin{itemize}[leftmargin=1.5em, topsep=-.15em, parsep=-.15em]
    \item $e_{\text{temporal}}$: 
        - Turkey (4 hours) must finish by 6:00 PM
        - Side dishes (2 hours) must finish by 6:00 PM
        - Airport pickups must align with landing times
    \item $e_{\text{spatial}}$: 
        - Driver-passenger location matching
        - Travel time constraints between locations
    \item $e_{\text{safety}}$:
        - Continuous oven supervision requirement
\end{itemize}


\subsection{Phase 2: Agent Assignments}

After constructing the network structure, $\mathcal{MP}$  selects and assigns agents to monitor both the roles and dependencies.

\subsubsection{Node (Role) Agent Assignment}
For each role, $\mathcal{MP}$ selects monitoring agents with the required capabilities:

\begin{equation}
f_{\text{role}}: \mathcal{N} \rightarrow \mathbf{A}
\end{equation}

The role monitoring agents include:
\begin{itemize}[leftmargin=1.5em, topsep=-.15em, parsep=-.15em]
\item Cook Monitor: Tracks cooking timeline, coordinates meal components
\item Driver Monitor: Validates driver availability 
\item Supervisor Monitor: Ensures oven supervision
\item Resource Monitor: Manages vehicle assignments and actor schedules
\end{itemize}

\subsubsection{Edge (Dependency) Agent Assignment}
For the identified dependencies, $\mathcal{MP}$ assigns specialized monitoring agents:

\begin{equation}
f_{\text{edge}}: \mathcal{E} \rightarrow \mathbf{A}
\end{equation}

Dependencies require these monitoring agents:
\begin{itemize}[leftmargin=1.5em, topsep=-.15em, parsep=-.15em]
\item Temporal Agent: Manages timing constraints (cooking durations, travel times, arrival schedules)
\item Spatial Agent: Tracks location constraints (airport-home-grandma routes)
\item Safety Agent: Ensures oven supervision constraint remains satisfied
\end{itemize}

The resulting agent assignments create a complete monitoring system where:
\begin{itemize}[leftmargin=1.5em, topsep=-.15em, parsep=-.15em]
\item Role agents track individual actor assignments and qualifications
\item Edge agents monitor interactions and dependencies between roles
\item All agents coordinate to maintain global constraint satisfaction
\end{itemize}

\begin{table}[th!]
\centering
\caption{Node and Edge Monitoring Agent Requirements}
\vspace{-.05in}
\begin{minipage}{0.48\textwidth}
\centering
\label{tab:node_agents}
\caption*{(a) Node (Role) Monitoring Agent}
{\fontsize{7pt}{9pt}\selectfont
\begin{tabular}{|p{0.15\textwidth}|p{0.32\textwidth}|p{0.38\textwidth}|}
\hline
\textbf{Agent} & \textbf{Input Protocol} & \textbf{Output Protocol} \\
\toprule \hline
Cook \newline Monitor & Role: cook \newline Qualifications: skills \newline Time: prep and cook & Status: progress \newline Alerts: timing issues! \newline Updates: completed? \\
\hline
Driver \newline Monitor & Role: driver \newline Qs: license, rest \newline Where: current GPS & Status: availability \newline Alerts: fatigue warnings \newline Updates: new GPS \\
\hline
Supervisor \newline Monitor & Role: supervisor \newline Location: house \newline Duration: cover time & Status: covered? \newline Alerts: coverage gaps! \newline Updates: role transitions \\
\hline
\end{tabular}
}
\end{minipage}
\hfill
\begin{minipage}{0.48\textwidth}
\centering
\caption*{(b) Edge (Dependency) Monitoring Agent}
\label{tab:edge_agents}
{\fontsize{7pt}{9pt}\selectfont
\begin{tabular}{|p{0.12\textwidth}|p{0.34\textwidth}|p{0.30\textwidth}|}
\hline
\textbf{Agent} & \textbf{Input Protocol} & \textbf{Output Protocol} \\
\toprule \hline
Temporal & Start times \newline Durations \newline Deadlines & Schedule conflicts \newline Timing violations \newline Schedule updates \\
\hline
Spatial & Locations \newline Routes \newline Travel time (variations) & Route violations \newline Location conflicts \newline Path updates \\
\hline
Safety & Critical constraints \newline Resource states \newline Coverage requirements & Safety violations \newline Resource conflicts \newline Mitigation plans \\
\hline
\end{tabular}
}
\end{minipage}
\vspace{-.1in}
\end{table}

\subsubsection{Common Sense Constraint Analysis (Performed by an LLM)}
\label{app:ThanksGiving-common-sense}

A common sense agent identifies the following implicit constraints that can affect Thanksgiving dinner planning.
This list is generated by Claude given the problem statement.

\begin{itemize}[leftmargin=1.5em, topsep=-.15em, parsep=-.15em]
\item \textit{Physical Processing Times:}
   \begin{itemize}
   \item Airport luggage claim: 30 minutes
   \item Car rental procedures: 30 minutes
   \item Holiday traffic variations
   \item Winter weather considerations
   \end{itemize}

\item \textit{Human Factors:}
   \begin{itemize}
   \item Driver fatigue after long trips
   \item Cooking preparation overhead
   \item Optimal turkey baking tips (non-disruptive baking and ready 30 minutes before eating)
   \item Task switching delays
   \item Required rest periods
   \end{itemize}

\item \textit{Resource Dependencies:}
   \begin{itemize}
   \item Vehicle passenger capacity
   \item Oven temperature management
   \item Kitchen workspace limits
   \item Shared resource coordination
   \end{itemize}

\item \textit{Social Considerations:}
   \begin{itemize}
   \item Personal preferences for interactions
   \item Family dynamics in assignments
   \item Post-travel guest comfort
   \item Host preparation requirements
   \end{itemize}
\end{itemize}

\begin{table*}[th!]
\caption{Complete Workflow Specification: Nodes, Edges, and Agent Assignments}
{\fontsize{7pt}{9pt}\selectfont
\begin{tabular}{|p{0.08\textwidth}|p{0.12\textwidth}|p{0.2\textwidth}|p{0.30\textwidth}|p{0.16\textwidth}|}
\hline
\textbf{Type} & \textbf{Component} & \textbf{Requirements} & \textbf{Agent Protocol} & \textbf{Dependencies} \\
\hline
\multicolumn{5}{|l|}{\textit{Node Components (Roles)}} \\
\hline
Node & Cook Role \newline (Sarah) & 
- Turkey (4hr) \newline
- Side dishes (2hr) \newline
- Kitchen management \newline
- Time management &
Input: schedule, resources, recipes \newline
Output: task progress, completion \newline
Monitor: kitchen\_state() → status \newline
Validate: cooking\_constraints() &
Connected to: \newline
- Supervisor \newline
- Resource edges \\
\hline
Node & Driver1 \newline (James/Michael) &
- Valid license \newline
- Airport navigation \newline
- Car rental capable \newline
- Rest state adequate &
Input: flight times, routes \newline
Output: location, ETA \newline
Monitor: driver\_state() → status \newline
Validate: driver\_constraints() &
Connected to: \newline
- Airport pickup \newline
- Travel edges \\
\hline
Node & Driver2 \newline (Flexible) &
- Valid license \newline
- Local navigation \newline
- Availability window \newline
- Rest state adequate &
Input: pickup schedule, route \newline
Output: location, ETA \newline
Monitor: driver\_state() → status \newline
Validate: driver\_constraints() &
Connected to: \newline
- Grandma pickup \newline
- Travel edges \\
\hline
Node & Supervisor \newline (Flexible) &
- Home presence \newline
- Oven monitoring \newline
- Safety awareness \newline
- Time commitment &
Input: cooking schedule, rules \newline
Output: supervision status \newline
Monitor: safety\_state() → status \newline
Validate: safety\_constraints() &
Connected to: \newline
- Cook role \newline
- Safety edges \\
\hline
\multicolumn{5}{|l|}{\textit{Edge Components (Dependencies)}} \\
\hline
Edge & Temporal &
- Schedule tracking \newline
- Buffer management \newline
- Sequence logic \newline
- Critical path &
Input: timestamps, durations \newline
Output: schedule conflicts \newline
Monitor: schedule\_state() → alerts \newline
Optimize: timeline\_adjust() &
Connects: \newline
- All roles \newline
- All activities \\
\hline
Edge & Spatial &
- Location tracking \newline
- Route optimization \newline
- Traffic updates \newline
- Distance constraints &
Input: locations, routes \newline
Output: travel updates \newline
Monitor: location\_state() → alerts \newline
Optimize: route\_adjust() &
Connects: \newline
- Drivers \newline
- Locations \\
\hline
Edge & Resource &
- Vehicle allocation \newline
- Kitchen resources \newline
- People availability \newline
- Capacity limits &
Input: resource demands \newline
Output: allocation status \newline
Monitor: resource\_state() → alerts \newline
Optimize: resource\_adjust() &
Connects: \newline
- All roles \newline
- All resources \\
\hline
Edge & Safety &
- Oven monitoring \newline
- Driving safety \newline
- Food safety \newline
- Critical rules &
Input: safety requirements \newline
Output: violation alerts \newline
Monitor: safety\_state() → alerts \newline
Enforce: safety\_rules() &
Connects: \newline
- All roles \newline
- Critical tasks \\
\hline
\end{tabular}}
\label{tab:workflow_spec}
\end{table*}

\subsubsection{Common Sense Constraint Analysis and Verification (Human in the Loop)}
The common sense constraints identified above require different verification approaches:

\paragraph{Agent-Required Information}
These constraints need specialized agents to verify and quantify:
\begin{itemize}[leftmargin=1.5em, topsep=-.15em, parsep=-.15em]
\item \textit{Airport Operations}
   \begin{itemize}
   \item United Airlines' average luggage delivery time at BOS Terminal B
   \item Terminal B to rental car center: shuttle schedule, walking options
   \item Historical flight delay patterns for November at BOS
   \end{itemize}

\item \textit{Weather and Traffic}
   \begin{itemize}
   \item Boston weather forecast for the event date
   \item Historical traffic patterns on Thanksgiving days
   \item Impact on airport-city-suburb travel times
   \end{itemize}

\item \textit{Task Dependencies}
   \begin{itemize}
   \item Kitchen workflow analysis for parallel cooking tasks
   \item Resource contention in meal preparation
   \item Critical path identification in cooking timeline
   \end{itemize}
\end{itemize}

\paragraph{Human Verification}
Certain constraints require explicit human input to ensure that the planning process takes into account subtle interpersonal and individual considerations. These include:

\begin{itemize}[leftmargin=1.5em, topsep=-.15em, parsep=-.15em]
    \item \textit{Family Dynamics}
    \begin{itemize}[leftmargin=1.5em, topsep=-.1em, parsep=-.1em]
        \item Preferred pickup arrangements for Grandma.
        \item Optimal relationship-based task pairings.
        \item Social comfort factors in assignments (e.g., Sarah and Grandma do not share a kitchen).
    \end{itemize}

    \item \textit{Personal Capabilities}
    \begin{itemize}[leftmargin=1.5em, topsep=-.1em, parsep=-.1em]
        \item Individual cooking experience levels.
        \item Driver comfort with airport navigation.
        \item Multi-tasking abilities of participants.
    \end{itemize}
\end{itemize}

This separation ensures that agents focus on collecting quantifiable data while humans provide essential social and personal insights. $\mathcal{MP}$ can then integrate both types of information into the final workflow design.

\subsection{Agent Requirements and Assignments}

The $\mathcal{MP}$ requires two categories of agents. $\mathcal{MP}$ specifies their requirements in the protocol buffer format in Table~\ref{tab:node_agents} for the nodes and
Table~\ref{tab:edge_agents} for the edges, respectively.

Each agent must implement these protocols to participate in the workflow. The meta-planner selects agents from the pool based on their ability to satisfy these interface requirements. During execution, agents communicate through these standardized protocols while maintaining their specialized monitoring functions.

\subsection{Monitoring Protocols and Dynamic Adjustments}

The workflow monitoring operates through a hierarchical protocol system that enables both routine supervision and dynamic adjustments.

\paragraph{Basic Monitoring Protocol}
Each agent maintains a continuous monitoring cycle:
\begin{equation}
\text{monitor}: \text{State} \rightarrow \{\text{normal, warning, violation}\}
\end{equation}

For example, the temporal agent tracks schedule adherence:
\begin{equation}
\Delta t = t_{\text{planned}} - t_{\text{actual}}
\begin{cases}
   \text{normal} & \text{if } |\Delta t| < \text{buffer} \\
   \text{warning} & \text{if } \text{buffer} \leq |\Delta t| < \tau \\
   \text{violation} & \text{if } |\Delta t| \geq \text{ threshold } \tau 
\end{cases}
\end{equation}

\paragraph{Dynamic  Adjustment Mechanism}
When deviations occur, the system initiates a three-phase response:

1. \textit{Impact Assessment}:
\begin{equation}
\text{impact}(e) = \sum_{n \in \text{affected}(e)} \text{severity}(n) \times \text{urgency}(n)
\end{equation}

2. \textit{Solution Generation}:
\begin{equation}
S^* = \argmin_{s \in \text{Solutions}} \{\text{cost}(s) | \text{feasible}(s)\}
\end{equation}

3. \textit{Coordination Protocol}:
\begin{equation}
\text{update}: (W_{\text{current}}, S^*) \rightarrow W_{\text{new}}
\end{equation}

For instance, if James's flight is delayed:
\begin{itemize}[leftmargin=1.5em, topsep=-.15em, parsep=-.15em]
\item Spatial agent detects arrival time change
\item Temporal agent calculates ripple effects
\item Role agents evaluate reassignment options
\item Safety agent verifies continued supervision coverage
\end{itemize}

The meta-planner $\mathcal{MP}$ coordinates these responses while maintaining global constraint satisfaction.

\subsection{Integrated Workflow Network}

Table~\ref{tab:workflow_spec} presents the
resulting workflow network $\mathbf{W^*}$, which includes
all nodes and edges, and their assigned agents and protocols.

\begin{enumerate}[leftmargin=1.2em, topsep=-.15em, parsep=-.15em]
\item \textit{Role Nodes:}
    \begin{itemize}[leftmargin=1.0em, topsep=-.15em, parsep=-.15em]
    \item Cook1: Sarah (primary) or Grandma (if at home) with 4-hour turkey + 2-hour sides
    \item Driver1: James (after car rental) or Michael
    \item Driver2: Available person after initial pickups
    \item Supervisor: Must be present while turkey cooks
    \end{itemize}

\item \textit{Dependencies:}
    \begin{itemize}[leftmargin=1.0em, topsep=-.15em, parsep=-.15em]
    \item Temporal: Verified airport processing + travel times
    \item Spatial: Traveling routes with traffic consideration
    \item Safety: Continuous oven supervision requirement
    \end{itemize}

\item \textit{Agent Monitoring:}
    \begin{itemize}[leftmargin=1.0em, topsep=-.15em, parsep=-.15em]
    \item Temporal Agent: Schedules with verified buffer times
    \item Spatial Agent: Real-time location and route mgmt.
    \item Safety Agent: Role coverage for supervision
    \end{itemize}
\end{enumerate}

\subsection{Agent Interaction Specifications}

Please, see Table~\ref{tab:agent_protocols}.

\begin{table*}[th!]
\caption{Agent Interaction Protocols and State Transitions}
{\fontsize{7pt}{9pt}\selectfont
\begin{tabular}{|p{0.14\textwidth}|p{0.27\textwidth}|p{0.27\textwidth}|p{0.20\textwidth}|}
\hline
\textbf{Interaction Type} & \textbf{Protocol} & \textbf{State Transitions} & \textbf{Validation Rules} \\
\hline
\multicolumn{4}{|l|}{\textit{Node-to-Node Interactions}} \\
\hline
Cook$\leftrightarrow$ Supervisor & 
Protocol: cooking\_handoff() \newline
Message: (task, duration, reqs.) &
States: prep → cooking → comp. \newline
Trigger: task\_state\_change() &
Validate: coverage() \newline
Alert: coverage\_gap() \\
\hline
Driver1 $\leftrightarrow$ Driver2 &
Protocol: pickup\_handoff() \newline
Message: (location, time, passenger) &
States: available → enroute → comp. \newline
Trigger: location\_change() &
Validate: timing\_feasible() \newline
Alert: schedule\_conflict() \\
\hline
\multicolumn{4}{|l|}{\textit{Edge Agent Operations}} \\
\hline
Temporal Agent &
Protocol: schedule\_monitor() \newline
Message: (event, time, dependencies) &
States: scheduled → active → comp. \newline
Trigger: time\_milestone() &
Validate: timing\_feasible() \newline
Alert: delay\_impact() \\
\hline
Spatial Agent &
Protocol: location\_track() \newline
Message: (actor, position, dest.) &
States: idle → moving → arrived \newline
Trigger: position\_update() &
Validate: route\_feasible() \newline
Alert: travel\_delay() \\
\hline
\end{tabular}}
\label{tab:agent_protocols}
\vspace{-.1in}
\end{table*}

\subsection{Augmented Problem Statement Revised with \textbf{W*}}
\label{app:ThanksGivingAugmentPS}

Given the $\mathbf{W^*}$ generated by $\MP$'s meta-planner $\mathcal{MP}$, the Thanksgiving Dinner Planning problem statement is revised as follows: \\
\newline
\noindent \underline{\textit{Initial Setup:}}
\begin{itemize}[leftmargin=1.5em, topsep=-.15em, parsep=-.15em]
\item Mom (Sarah) is hosting Thanksgiving dinner at 6:00 PM in Boston. The following family members are traveling:
\item Dad (James) flying from San Francisco, landing at 1:00 PM Eastern time.
\item Sister (Emily) flying from Chicago, landing at 2:30 PM
\item Brother (Michael) driving from New York, estimated arrival 3:00 PM at home
\item Grandma is healthy and needs to be picked up from her home in suburban Boston
\end{itemize}

\noindent \underline{{\color{red}{\textbf{*}}} \textbf{Common Sense Augmented Constraints:}}
\begin{itemize}[leftmargin=1.5em, topsep=-.15em, parsep=-.15em]
\item The airport luggage pickup time after landing is 30 minutes.
\item Renting a car takes 30 minutes.
\item One person can simultaneously prepare turkey and side dishes.
\item Grandma prefers Michael to pick her up, provided that it does not cause the dinner time delay (soft constraint).
\item Grandma and Sarah prefer not to cook together in the kitchen.
\item The best turkey receipt and baking instructions included.
\item Traffic congestion is not factored into current planning.
\end{itemize}

\vspace{.1in}
\noindent \underline{\textit{Planning Validation Set:}}
\begin{enumerate}[leftmargin=1.5em, topsep=-.15em, parsep=-.15em]
\item All tasks and dependencies must be strictly
observed in the plan, or the plan fails.
\item Dinner time is strictly at 6:00 PM, all tasks must
be completed by then (redundancy).
\item Account for the idle time of each person.
\item The schedule consists of three columns: time, task, and assigned person(s).
\end{enumerate}

\begin{table}[t!]
\centering
\caption{DeepSeek's Plan, Two Iterations}
{\fontsize{7pt}{9pt}\selectfont
\begin{tabular}{|l|p{0.22\textwidth}|l|}
\hline
\textbf{Time} & \textbf{Task} & \textbf{Assigned} \\ 
\toprule \hline
1:00 PM & James lands at Boston & James \\ \hline
1:00--1:30 PM & James picks up luggage & James \\ \hline
1:30--2:00 PM & James rents a car & James \\ \hline
2:00 PM & {Turkey in oven} (4 hours; requires monitoring) & Sarah \\ \hline
2:00--3:00 PM & James waits at airport (idle) & James \\ \hline
2:30 PM & Emily lands at Boston & Emily \\ \hline
2:30--3:00 PM & Emily waits for luggage & Emily \\ \hline
3:00 PM & James picks up Emily & James \\ \hline
3:00 PM & Michael arrives home & Michael \\ \hline
3:00 PM & Michael departs to Grandma & Michael \\ \hline
3:30 PM & Michael picks up Grandma & Michael \\ \hline
3:30--4:00 PM & Michael drives back home with Grandma & Michael \\ \hline
3:00--4:00 PM & James drives Emily home (airport to home: 1 hour) & James \\ \hline
4:00 PM & James and Emily home & James \\ \hline
4:00 PM & M. and Grandma home & Michael \\ \hline
4:00--6:00 PM & Sarah prepares side dishes & Sarah \\ \hline
6:00 PM & Thanksgiving dinner begins & All \\ \hline
\end{tabular}}
\label{tab:DeepSeekPlan1}
\vspace{-.1in}
\end{table}

\subsection{Experiment \#1: Sequential Planner with Common Sense}
\label{app:SequentialPlanExperiment}
The first experiment utilized the augmented problem specification with common sense reasoning, incorporating realistic constraints such as luggage claim time and rental car pickup time.

We evaluated four standalone LLMs alongside $\ALAS$.
Both $\ALAS$ and Gemini consistently generated feasible schedules similar to Table~\ref{tab:DeepSeekPlan1}, while other LLMs encountered significant challenges.  

Upon analyzing the number of iterations required for a feasible plan, DeepSeek and Claude each needed one revision (two iterations), while GPT4o required two revisions (three iterations). In terms of scheduling quality—measured by slack time, total driving distance, and load balance—DeepSeek (Table~\ref{tab:DeepSeekPlan1}) outperformed both GPT4o (Table~\ref{tab:GPT4oPlan1}) and Claude (Table~\ref{tab:ClaudePlan1}). DeepSeek optimized efficiency by having James wait at the airport for 30 minutes to pick up Emily. In contrast, Claude inefficiently scheduled James to drive home and then return to the airport for Emily, creating unnecessary travel. GPT4o assigned James to return home and tasked Michael with separately picking up Emily and then Grandma, resulting in suboptimal load distribution. A more efficient solution would have scheduled Michael to collect Emily first, then proceed with her to Grandma's home, allowing all three to return together—saving 30 minutes of driving time while enhancing Grandma's experience of seeing both grandchildren simultaneously.

\begin{table}[ht!]
\centering
\begin{minipage}{0.48\textwidth}
    \centering
    \caption{GPT4o's Plan, Three Iterations}
    {\fontsize{7pt}{9pt}\selectfont
    \begin{tabular}{|l|l|l|}
    \hline
    \textbf{Time}        & \textbf{Task} & \textbf{Assigned}      \\ 
\toprule \hline
    1:00 PM              & Land at BOS Airport                      & James                     \\ \hline
    1:00-1:30 PM         & Luggage pickup                           & James                     \\ \hline
    1:30-2:00 PM         & Rent car                                 & James                     \\ \hline
    2:00 PM              & Start turkey                             & Sarah                     \\ \hline
    2:00-3:00 PM         & Drive home                               & James                     \\ \hline
    2:30 PM              & Land at BOS Airport                      & Emily                     \\ \hline
    3:00 PM              & Arrive home                              & Michael                   \\ \hline
    3:00-4:00 PM         & Drive to airport, pick up Emily          & Michael                   \\ \hline
    4:00-5:00 PM         & Return home with Emily                   & Michael                   \\ \hline
    5:00-5:30 PM         & Drive to Grandma's                       & Michael                   \\ \hline
    5:30-6:00 PM         & Return with Grandma                      & Michael                   \\ \hline
    4:00-6:00 PM         & Prepare side dishes                      & Sarah                     \\ \hline
    6:00 PM              & Dinner served                            & All                       \\ \hline
    \end{tabular}}
    \label{tab:GPT4oPlan1}
\end{minipage}%
\hfill
\begin{minipage}{0.48\textwidth}
    \centering
    \caption{Claude's Plan, Two Iterations}
    {\fontsize{7pt}{9pt}\selectfont
    \begin{tabular}{|l|l|l|}
    \hline
    \textbf{Time} & \textbf{Task} & \textbf{Assigned} \\
    \toprule \hline
    1:00 PM & Land at BOS Airport & James \\ \hline
    1:00-1:30 PM & Luggage pickup & James \\ \hline
    1:30-2:00 PM & Rent car & James \\ \hline
    2:00 PM & Start turkey & Sarah \\ \hline
    2:00-3:00 PM & Drive home & James \\ \hline
    2:30 PM & Land at BOS Airport & Emily \\ \hline
    3:00 PM & Arrive home & Michael \\ \hline
    3:00-4:00 PM & Drive to airport, pick up Emily & James \\ \hline
    4:00-5:00 PM & Return home with Emily & James \\ \hline
    4:30-5:00 PM & Drive to Grandma's & Michael \\ \hline
    5:00-5:30 PM & Return with Grandma & Michael \\ \hline
    4:00-6:00 PM & Prepare side dishes & Sarah \\ \hline
    6:00 PM & Dinner served & All \\
    \hline
    \end{tabular}}
    \label{tab:ClaudePlan1}
\end{minipage}
\vspace{-.1in}
\end{table}


\subsubsection{Observations of Errors in Standalone LLMs}
\label{app:exp1-observations}

Although DeepSeek and Claude eventually produced feasible static plans in their second iterations, Tables~\ref{tab:reunion-Claude} and \ref{tab:reunion-DeepThink} highlight critical errors in their initial attempts.

These errors included misestimated travel times (calculating 60-minute trips as 45 or 30 minutes, highlighted in shaded red) and implausible scheduling decisions, such as beginning turkey preparation at 10 AM or allowing James to depart the airport without Emily. 

\begin{table}[ht!]
\vspace{-.15in}
\centering
\begin{minipage}{0.485\textwidth}
    \centering
    \caption{Family Reunion - Claude. Gray indicates questionable, red wrong time calculation.}
    {\fontsize{7pt}{9pt}\selectfont
    \begin{tabular}{|p{0.1\textwidth}|p{0.56\textwidth}|p{0.15\textwidth}|}
    \hline
    \textbf{Time} & \textbf{Task} & \textbf{Assigned} \\
    \toprule
    \hline
    \colorbox{yellow} 10:00 & Put turkey in oven (4-hour) & S \\
    \colorbox{white} 13:00 & James lands at BOS & J \\
    \colorbox{white} 13:30 & James rents car at BOS & J \\
    \colorbox{yellow} 14:00 & James leaves airport for home & J \\
    \colorbox{white} 14:30 & Emily lands at BOS & E \\
    \colorbox{white} 15:00 & James \& Michael arrive home;  & J, M \\
    \colorbox{white} 15:15 & M leaves to pick up E from BOS & M \\
    \colorbox{white} 15:30 & S begins side dishes (2-hour) & S \\
    \colorbox{white} 16:15 & M arrives at BOS to pick up E & M, E \\
    \colorbox{white} 16:30 & James leaves to pick up Grandma & J \\
    \colorbox{white} 17:00 & J arrives at G's house & J, G \\
    \colorbox{lightred} 17:00 & Michael and Emily arrive home & M, E \\
    \colorbox{white} 17:30 & J and G home; side dish done & J, G, S \\
    \colorbox{white} 18:00 & Turkey complete, dinner ready & All \\
    \hline
    \end{tabular}
    }
    \label{tab:reunion-Claude}
\end{minipage}%
\hfill
\begin{minipage}{0.486\textwidth}
    \centering
    \caption{Family Reunion - DeepSeek. Red indicates wrong time calculation.}
    {\fontsize{7pt}{9pt}\selectfont
    \begin{tabular}{|p{0.1\textwidth}|p{0.60\textwidth}|p{0.15\textwidth}|}
    \hline
    \textbf{Time} & \textbf{Task} & \textbf{Assigned} \\
    \toprule
    \hline
    \colorbox{white} 12:00 & S starts cooking the turkey (4 hours, ready by 16:00) & S \\
    \colorbox{white} 13:00 & J lands at BOS and rents a car & J \\
    \colorbox{white} 13:30 & James drives from BOS to Grandma's & J \\
    \colorbox{white} 14:30 & James picks up Grandma, Emily lands & J, G \\
    \colorbox{white} 15:00 & Michael home (from NY) & M \\
    \colorbox{white} 15:15 & James and Grandma arrive back home (30 min drive) & J, G \\
    \colorbox{lightred} 15:15 & M leaves to pick up E from BOS (60 min round trip) & M \\
    \colorbox{white} 16:00 & Turkey done; S starts side dishes (2 hours) & S \\
    \colorbox{lightred} 16:15 & M and E return home from BOS & M, E \\
    \colorbox{white} 18:00 & Family reunion dinner begins with all members present & All \\
    \hline
    \end{tabular}
    }
    \label{tab:reunion-DeepThink}
\end{minipage}
\vspace{-.1in}
\end{table}


Such failures result from context erosion in extended prompts~\cite{liu-etal-2024-lost, xiao2024attentionsink} and expanding context windows. Research demonstrates that extended contexts accelerate information loss~\cite{park2023context, wei2023larger}, leading to constraint violations. $\ALAS$ circumvents these issues through its modular architecture, where specialized agents process only domain-relevant information while independent validation mechanisms ensure constraint adherence.

\paragraph{Handling Long Dependencies}
Complex scheduling problems reveal cascading errors when dependencies overlap. Critical constraints, particularly those involving multiple factors, frequently get dropped during iterative problem-solving. \\
\textbf{Reason}: Cognitive limitations restrict simultaneous constraint tracking, making exhaustive verification challenging in single processing passes. \\
\textbf{Solution Framework}:
\begin{itemize}[leftmargin=1.2em, topsep=-.15em, parsep=-.15em]
\item Isolate and systematically enumerate atomic task dependencies.
\item Implement comprehensive verification of global constraint satisfaction.
\item Develop robust mechanisms for systematic conflict resolution.
\end{itemize}

\paragraph{Stale Memory and Iterative Revisions}
Iterative solutions risk propagating errors due to incomplete constraint resets. \\
\textit{Reason}: Excessive reliance on previous solutions without comprehensive constraint re-evaluation leads to compounding errors. \\
\textbf{Relation to Gödel's Incompleteness}:
\begin{itemize}[leftmargin=1.2em, topsep=-.15em, parsep=-.15em]
\item Formal systems capable of arithmetic necessarily contain unprovable truths.
\item Similarly, inherited solution errors inhibit consistent constraint satisfaction.
\item Clean-state resets become essential for systematic error prevention.
\end{itemize}

\paragraph{Implementation Strategy}
Reset to a clean baseline state for each iteration, thoroughly re-evaluating all constraints. \\
\textit{Core Challenges}:
\begin{itemize}[leftmargin=1.2em, topsep=-.15em, parsep=-.15em]
\item Effective management of nested dependencies.
\item Prevention of residual errors across iterations.
\item Maintenance of cross-iteration consistency.
\end{itemize}

\begin{table}[ht]
\centering
\caption{Sequential Planning. (\# = iterations)}
{\fontsize{7pt}{9pt}\selectfont
\begin{tabular}{|l|c|p{0.60\linewidth}|}
\hline
\textbf{LLM} & \textbf{\#}  & \textbf{Notable Features} \\
\toprule \hline
\textbf{DeepSeek} & 2  & Optimized airport wait time for James; balanced workload \\
\hline
Claude & 2 &  Unnecessary travel between pickup tasks (no need to go home before next pickup) \\
\hline
GPT4o & 3 &  Extra travel for Michael; suboptimal load balance \\
\hline
\end{tabular}}
\label{tab:sequential_results}
\end{table}

Table~\ref{tab:sequential_results} synthesizes the detailed schedules documented in Tables~\ref{tab:DeepSeekPlan1}, \ref{tab:GPT4oPlan1}, and \ref{tab:ClaudePlan1}. 
DeepSeek demonstrated good scheduling efficiency by optimizing James's airport wait time for Emily's pickup, requiring only two iterations for convergence. Although GPT4o eventually produced a valid solution after three iterations, it created suboptimal travel patterns with redundant trips by Michael. Claude's solution, though feasible in two iterations, incorporated unnecessary travel between pickup tasks. 
In contrast to the inconsistent performance of standalone LLMs, $\ALAS$ consistently generated feasible and efficient plans in all test runs.

\subsection{Experiment \#2: Reactive Planner for Flight Delay}
\label{app:ReactivePlanExperiment}


This disruption scenario was stated in the main body of the paper in
Section~\ref{sec:ALAS-exp-case2}. Under the prompt for
reactive planning, it states the disruption as:
``At noon, James' flight is delayed until 4:00 PM. Update the schedule to meet the deadline at 6:00 pm while meeting all constraints.''

We tested only $\ALAS$, DeepSeek and Claude 3.7 for disturbance handling, because they survived sequential planning. 
$\ALAS$ successfully generated a feasible reactive plan, whereas
DeepSeek and Claude 3.7 failed.

The challenge: James (assigned to pick up Emily) is delayed until 4:00 PM. Both DeepSeek and Claude failed in seven out of ten runs by: (1) scheduling Emily for a taxi, violating family pick-up constraints; (2) delaying or missing Grandma's pickup; or (3) missing the dinner deadline. An example failed schedule (the name of the LLM is purposely concealed) is presented in Table~\ref{tab:failed-reactive-planning}, where one can see that when all family members are ready to eat the delicious turkey, ``oops, Grandma was not picked up.''

\begin{table}[htb]
\centering
\caption{Failed Reactive Planning Schedule with Flight Delay Disruption}
\label{tab:failed-reactive-planning}
{\fontsize{7pt}{9pt}\selectfont
\begin{tabular}{|p{1.5cm}|p{4.2cm}|p{6.2cm}|}
\hline
\textbf{Time} & \textbf{Task} & \textbf{Explanation} \\
\toprule
\hline
(Before Noon) & Tasks like initial prep are underway & The disruption is received at noon. \\
\hline
2:00 PM & Sarah puts turkey in the oven. & \textbf{Unchanged:} Critical path for food. Sarah must stay home to supervise (2:00 PM - 6:00 PM). \\
\hline
2:30 PM & Emily lands at BOS. & \textbf{Unchanged:} Sister's flight arrives as scheduled. \\
\hline
3:00 PM & Michael arrives home. & \textbf{Unchanged:} Brother arrives from NY. \textbf{New Role:} He is now the designated driver for Emily's pickup. \\
\hline
$\sim$3:15 PM & Michael departs home for BOS Airport. & Michael must leave soon after arriving to pick up Emily. (60 min travel time to BOS). \\
\hline
4:00 PM & James lands at BOS. & \textbf{Disruption Effect:} Dad arrives 3 hours later than planned. \\
\hline
4:00 PM & Sarah starts cooking side dishes. & \textbf{Unchanged:} Side dishes still need 2 hours. Sarah starts them as she is the only one home. \\
\hline
4:00 - 5:00 PM & James deplanes, gets luggage, rents a car. & \textbf{Delayed:} James needs to rent a car after landing. \\
\hline
$\sim$4:15 PM & Michael arrives at BOS Airport. & Michael reaches the airport to meet Emily. \\
\hline
$\sim$4:15 - 4:45 & Michael and Emily meet at BOS & Allow time for Emily to get luggage and meet Michael. \\
\hline
$\sim$4:45 PM & Michael and Emily depart BOS for home. & They begin the 60-minute drive home. \\
\hline
$\sim$5:00 PM & James departs BOS for home (in rental car). & James starts his 60-minute drive home after getting rental car. \\
\hline
$\sim$5:45 PM & Michael and Emily arrive home. & Brother and Sister are home. \\
\hline
6:00 PM & James arrives home. & Dad arrives home just in time for the planned dinner. \\
\hline
6:00 PM & Dinner Time & Turkey and side dishes are ready. Sarah, Michael, Emily, and James are home. \\
\hline
\colorbox{red!15} POST 6:00 PM & Grandma's Pickup - Unresolved & \textbf{Major Impact:} With James arriving at 4:00 PM and needing an hour for car rental (leaving BOS at 5:00 PM), he cannot pick up Grandma and return by 6:00 PM. Michael is busy with Emily's pickup until 5:45 PM and cannot make the $\sim$1.5 hr round trip for Grandma before 6:00 PM. Sarah must supervise the oven. \textbf{Under these constraints, Grandma cannot join the 6:00 PM dinner.} \\
\hline
\end{tabular}}
\end{table}

These failures are due to greedy rescheduling and increased context loss during replanning. Studies confirm that longer contexts can worsen reliability~\cite{wei2023larger, zhang2023careful}. $\ALAS$ generated feasible plans in all runs by: (1) updating James's state in persistent memory; (2) detecting conflicts systematically; (3) evaluating alternatives; and (4) validating all constraints. Following LRCP (Local Reactive Compensation Protocol, depicted in Algorithm~\ref{alg:ALAS-LCSR-Full}), $\ALAS$ produced four distinct solution patterns, all feasible. While requiring 2.5× the computation time (12.1s vs. 4.8s), this overhead ensures feasibility, crucial in time-sensitive domains.

\subsection{ALAS's Reactive Plan and Stateful Explanation}
$\ALAS$ proposes a simple yet effective reroute: Michael drives straight to Boston Airport rather than stopping at home first.  
This common-sense spatial adjustment—overlooked by the other LLMs—originates from $\mathcal{MP}$'s state-aware reasoning module.  
After collecting Emily, Michael proceeds directly to Grandma's house, trimming roughly 30 minutes of travel and giving Grandma the pleasant surprise of seeing two grandchildren arrive together.
Table \ref{tab:ALASReactivePlan} lists the resulting feasible schedule.  
The critical advantage is that $\ALAS$'s continuous tracking of each participant's state and history enables timely compensations and preserves the on-time family reunion.

\begin{table}[ht!]
\vspace{-.1in}
\centering
\caption{$\ALAS$ Reactive Plan: Optimized routing via persistent state history and compensation.}
\vspace{.05in}
{\fontsize{7pt}{9pt}\selectfont
\begin{tabular}{|l|p{0.42\textwidth}|l|}
\hline
\textbf{Time} & \textbf{Task} & \textbf{Assigned} \\
\toprule
\hline
12:00 PM & James' delay known, Michael on his way from NYC to home, is rerouted to Boston airport to meet Emily (4-hour drive). & Michael \\
\hline
2:00 PM & Start cooking turkey & Sarah \\
\hline
2:30 PM & Emily lands at Boston & Emily \\
\hline
3:00 PM & Emily gets her luggage & Emily \\
\hline
3:00 PM & Michael arrives at Logan airport, picks up Emily. & Michael \\
\hline
3:00–4:00 PM & Michael drives Emily home & Michael \\
\hline
4:00 PM & Michael departs for Grandma & Michael \\
\hline
4:00 PM & James lands at Boston Airport & James \\
\hline
4:00–4:30 PM & James picks up luggage & James \\
\hline
4:30–5:00 PM & James rents car (30 minutes). & James \\
\hline
4:30 PM & Michael arrives at Grandma's & Michael \\
\hline
5:00 PM & Michael \& Grandma arrive home. & Michael, Grandma \\
\hline
5:00–6:00 PM & James drives home from BOS & James \\
\hline
4:00–6:00 PM & Sarah prepares side dishes (overlaps with turkey). & Sarah \\
\hline
6:00 PM & James arrives home. Dinner served. & All \\
\hline
\end{tabular}}
\label{tab:ALASReactivePlan}
\vspace{-.1in}
\end{table}

\section{Additional JSSP Results and Analysis}
\label{app:ALAS-JSSP}
This appendix augments our core experimental findings with the full prompt specification, failure rate statistics, and pointers to supplementary visualizations. 

\vspace{-.1in}
\subsection{LLM Prompt Design}
Table \ref{tab:JSSP-prompt} shows the \emph{standardized} prompt issued to every standalone LLM and to the $\ALAS$ meta-planner (Phase 1 on Figure~\ref{fig:ALAS-architecture}).  
Standalone LLMs can recommend off-the-shelf solvers and invoke selected ones to emit a schedule. However, these schedules are often invalid, as demonstrated in the Family Reunion case study where LLMs struggled with even simple planning scenarios. Even when a valid static plan can be obtained through LLM-recommended solvers, this solution merely completes \textbf{Phase 1 / Layer 1} of the $\ALAS$ framework—essentially just generating a workflow template $\mathcal{W}_{\text{template}}$.

By contrast, $\ALAS$ feeds this preliminary plan into \emph{Phases 2–3 / Layer 1} (validation \& refinement) to yield a validated $\mathcal{W}_{\text{template}}$. Layers 2–3 then \emph{instantiate and run} a network of code-generated agents, denoted
as $\mathcal{W}_{\text{exec}}$. At runtime, the Local Reactive Compensation Protocol (LRCP) continuously logs state, detects disruptions, and triggers local repairs, capabilities that static schedules fundamentally lack (see architecture recap in Section \ref{sec:ALAS-architecture}).

\begin{table}[t]
\centering
\caption{JSSP Scheduling Prompt.  
Given a JSSP benchmark instance, the LLM first \emph{searches} for
candidate algorithms, selects the one that yields the \emph{minimum
makespan}, and returns both the algorithm’s key hyper-parameters and the resulting plan $\mathcal{W}_{\text{template}}$. \textbf{Note} that feasibility validation of $\mathcal{W}_{\text{template}}$ is yet to be performed.}
\vspace{0.1in}
\label{tab:JSSP-prompt}
{\fontsize{9pt}{9pt}\selectfont
\begin{tabular}{|>{\raggedright\arraybackslash}p{0.96\linewidth}|}
\toprule 
\hline
\textbf{Role.} You are a scheduling supervisor tasked with producing an
\emph{optimal} job-shop schedule.\\
\hline
\textbf{Objective.} Report the \textbf{minimum makespan}, the
\textbf{algorithm} used, and a \textit{schedule} $\mathcal{W}_{\text{template}}$ that achieves
this makespan. This is achieved by the recipient LLM recommending a list of solvers to execute and compare. (Execution and comparison
can be performed by the LLM or locally.) \\\hline

\textbf{Constraints.}
\begin{minipage}{0.95\linewidth}
\begin{enumerate}[leftmargin=1.2em,noitemsep]
  \item \emph{Job order}: operations of each job follow the given
        sequence.
  \item \emph{Machine capacity}: a machine processes only one operation
        at any time.
\end{enumerate}
\end{minipage}\\\hline

\textbf{Input.} A list of jobs, each as \texttt{(machine,\,duration)}
pairs.  
\textit{Example:}\\
\texttt{Job1: [(M\_A,3), (M\_B,5), (M\_C,2)]}\\
\texttt{Job2: [(M\_B,4), (M\_A,6)]}\\\hline

\textbf{Output.} Return
\begin{minipage}{0.95\linewidth}
\begin{itemize}[leftmargin=1.2em,noitemsep]
  \item \texttt{makespan} (integer)
  \item \texttt{algorithm} (string)
  \item \texttt{params} (JSON object of key hyper-parameters)  
  \item \texttt{schedule} $\mathcal{W}_{\text{template}}$: list of operations
        \{job, step (1-based), machine, start, end\};
\end{itemize}
\end{minipage}\\
\textit{Example:}\\
\begin{minipage}{0.95\linewidth}
\small\texttt{[
\{"job":"Job1","step":1,"machine":"M\_A","start":0,"end":3\},\\
\{"job":"Job2","step":1,"machine":"M\_B","start":0,"end":4\},\\
\{"job":"Job1","step":2,"machine":"M\_B","start":4,"end":9\},
\ldots ]} \\
\end{minipage}\\\hline
\bottomrule
\end{tabular}}
\end{table}

\vspace{-.1in}
\subsection{LLM Heuristic Baselines (sampled)}
For each benchmark instance we asked every standalone LLM to choose an
off-the-shelf optimization method, list key hyperparameters (when
provided), and report the makespan it expected to achieve.
Table~\ref{tab:llm-heuristics-merged} shows a \emph{five-instance sample}
per LLM; the full tables appear in the supplemental material.  
From $\ALAS$'s perspective this delivers only \textbf{Phase 1 of Layer 1}
(see Fig.~7): a static schedule $\mathcal{W}_{\text{template}}$.
Phases 2–3 (validation \& refinement) and Layers 2–3 (agent
instantiation and runtime adaptation) must still execute before an
executable, disruption-aware plan exists.

\begin{table}[ht!]
\vspace{-.05in}
\centering
\caption{Sampled LLM-proposed heuristics (5 rows per model).}
\vspace{.05in}
\label{tab:llm-heuristics-merged}
{\fontsize{7pt}{9pt}\selectfont
\begin{tabular}{lll}
\toprule
\textbf{Model} & \textbf{Dataset} & \textbf{Heuristic Strategy / Parameters} \\
\midrule
Claude 3.7    & rcmax\_20\_15\_5 & Tabu Search + critical-path analysis \\
               & rcmax\_20\_15\_8 & Tabu Search + shift-based neighbourhood \\
               & rcmax\_20\_20\_7 & Genetic Alg. + critical-path optimisation \\
               & rcmax\_30\_15\_5 & Constraint Prog. (precedence relaxation) \\
               & rcmax\_40\_15\_8 & Tabu Search + job-insertion strategy \\
\addlinespace
Gemini 2.5    & rcmax\_20\_15\_5 & “Gemini-optimised” Tabu Search \\
               & rcmax\_20\_20\_7 & Gemini-guided Simulated Annealing \\
               & rcmax\_30\_15\_5 & Gemini Constraint-Programming heuristic \\
               & rcmax\_40\_15\_10 & Gemini Shifting-Bottleneck dispatch \\
               & rcmax\_50\_20\_6 & Gemini Hybrid GA–Tabu \\
\addlinespace
GPT-4o        & rcmax\_20\_15\_5 & Genetic Alg. + adaptive mutation \\
               & rcmax\_20\_20\_8 & Particle Swarm Opt. (inertia weight=0.7) \\
               & rcmax\_30\_15\_5 & Ant Colony Opt. (pheromone $\alpha\!=\!1.0$) \\
               & rcmax\_40\_15\_10 & Bee Algorithm + neighbourhood search \\
               & rcmax\_50\_15\_4 & Simulated Annealing (adaptive $T$) \\
\addlinespace
DeepSeek-R1   & rcmax\_20\_15\_5 & GA + priority rules (pop.=50, iters=200) \\
               & rcmax\_20\_20\_8 & Simulated Annealing ($T_0\!=\!100$, cool=0.95) \\
               & rcmax\_30\_15\_4 & Ant Colony Opt. (ants=40, $\rho\!=\!0.1$) \\
               & rcmax\_40\_15\_10 & Iterated Greedy (destroy=30\%) \\
               & rcmax\_50\_20\_9 & Adaptive Large-Neighbourhood Search \\
\bottomrule
\end{tabular}}
\vspace{-.05in}
\end{table}
\vspace{-.05in}
\subsection{ALAS's Validation Replan Iterations}
Given the initial $\mathcal{W}_{\text{template}}$, $\ALAS$
completes its Layer 1 operation by executing a
validation-replan iteration cycle until a valid plan
is obtained. In our experiments, this convergence
typically requires up to 5 iterations on all benchmark
datasets, as depicted 
in Figure~\ref{fig:alas_convergence_ub}.

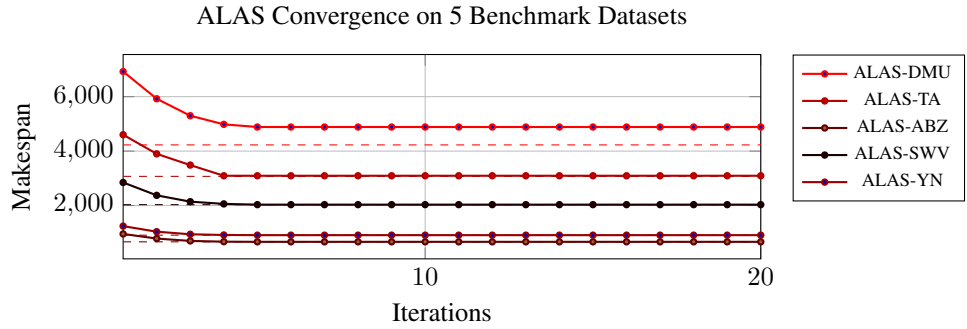
\begin{figure}[t!]
\centering
\begin{tikzpicture}
\begin{axis}[
    width=0.72\textwidth,
    height=4.3cm,
    xlabel={Iterations},
    ylabel={Makespan},
    xmin=1, xmax=20,
    legend style={at={(1.05,1)}, anchor=north west, font=\scriptsize},
    grid=major,
    ymajorgrids=true,
    xtick={0,10,20,30,40,50},
    every axis plot/.append style={thick},
    title={ALAS Convergence on 5 Benchmark Datasets}
]

\addplot+[red, mark=*, mark size=1pt] coordinates {
(1,6922.625) (2,5923.125) (3,5299.5625) (4,4983.0625) (5,4884.9375)
(6,4884.9375) (7,4884.9375) (8,4884.9375) (9,4884.9375) (10,4884.9375)
(11,4884.9375) (12,4884.9375) (13,4884.9375) (14,4884.9375) (15,4884.9375)
(16,4884.9375) (17,4884.9375) (18,4884.9375) (19,4884.9375) (20,4884.9375)
(21,4884.9375) (22,4884.9375) (23,4884.9375) (24,4884.9375) (25,4884.9375)
(26,4884.9375) (27,4884.9375) (28,4884.9375) (29,4884.9375) (30,4884.9375)
(31,4884.9375) (32,4884.9375) (33,4884.9375) (34,4884.9375) (35,4884.9375)
(36,4884.9375) (37,4884.9375) (38,4884.9375) (39,4884.9375) (40,4884.9375)
(41,4884.9375) (42,4884.9375) (43,4884.9375) (44,4884.9375) (45,4884.9375)
(46,4884.9375) (47,4884.9375) (48,4884.9375) (49,4884.9375) (50,4884.9375)
};
\addlegendentry{ALAS-DMU}

\draw[dashed, red] (axis cs:1,4227.44) -- (axis cs:50,4227.44);
\node[red, anchor=west, font=\scriptsize] at (axis cs:50,4884.9375) {+19.09\%};

\addplot+[red!70!black, mark=*, mark size=1pt] coordinates {
(1,4601.39) (2,3899.22) (3,3486.51) (4,3094) (5,3094)
(6,3094) (7,3094) (8,3094) (9,3094) (10,3094)
(11,3094) (12,3094) (13,3094) (14,3094) (15,3094)
(16,3094) (17,3094) (18,3094) (19,3094) (20,3094)
(21,3094) (22,3094) (23,3094) (24,3094) (25,3094)
(26,3094) (27,3094) (28,3094) (29,3094) (30,3094)
(31,3094) (32,3094) (33,3094) (34,3094) (35,3094)
(36,3094) (37,3094) (38,3094) (39,3094) (40,3094)
(41,3094) (42,3094) (43,3094) (44,3094) (45,3094)
(46,3094) (47,3094) (48,3094) (49,3094) (50,3094)
};
\addlegendentry{ALAS-TA}
\draw[dashed, red!70!black] (axis cs:1,3072) -- (axis cs:50,3072);
\node[red!70!black, anchor=west, font=\scriptsize] at (axis cs:50,3094) {+0.86\%};

\addplot+[red!40!black, mark=*, mark size=1pt] coordinates {
(1,956.6667) (2,787) (3,700.6667) (4,672.3333) (5,667)
(6,667) (7,667) (8,667) (9,667) (10,667)
(11,667) (12,667) (13,667) (14,667) (15,667)
(16,667) (17,667) (18,667) (19,667) (20,667)
(21,667) (22,667) (23,667) (24,667) (25,667)
(26,667) (27,667) (28,667) (29,667) (30,667)
(31,667) (32,667) (33,667) (34,667) (35,667)
(36,667) (37,667) (38,667) (39,667) (40,667)
(41,667) (42,667) (43,667) (44,667) (45,667)
(46,667) (47,667) (48,667) (49,667) (50,667)
};
\addlegendentry{ALAS-ABZ}
\draw[dashed, red!40!black] (axis cs:1,667) -- (axis cs:50,667);
\node[red!40!black, anchor=west, font=\scriptsize] at (axis cs:50,667) {+0\%};

\addplot+[red!10!black, mark=*, mark size=1pt] coordinates {
(1,2847.47) (2,2374.47) (3,2143.27) (4,2060.27) (5,2032.73)
(6,2032.73) (7,2032.73) (8,2032.73) (9,2032.73) (10,2032.73)
(11,2032.73) (12,2032.73) (13,2032.73) (14,2032.73) (15,2032.73)
(16,2032.73) (17,2032.73) (18,2032.73) (19,2032.73) (20,2032.73)
(21,2032.73) (22,2032.73) (23,2032.73) (24,2032.73) (25,2032.73)
(26,2032.73) (27,2032.73) (28,2032.73) (29,2032.73) (30,2032.73)
(31,2032.73) (32,2032.73) (33,2032.73) (34,2032.73) (35,2032.73)
(36,2032.73) (37,2032.73) (38,2032.73) (39,2032.73) (40,2032.73)
(41,2032.73) (42,2032.73) (43,2032.73) (44,2032.73) (45,2032.73)
(46,2032.73) (47,2032.73) (48,2032.73) (49,2032.73) (50,2032.73)
};
\addlegendentry{ALAS-SWV}
\draw[dashed, red!10!black] (axis cs:1,2032.73) -- (axis cs:50,2032.73);
\node[red!10!black, anchor=west, font=\scriptsize] at (axis cs:50,2032.73) {+0\%};

\addplot+[red!60!black, mark=*, mark size=1pt] coordinates {
(1,1243.75) (2,1039.25) (3,944.5) (4,918.25) (5,912)
(6,912) (7,912) (8,912) (9,912) (10,912)
(11,912) (12,912) (13,912) (14,912) (15,912)
(16,912) (17,912) (18,912) (19,912) (20,912)
(21,912) (22,912) (23,912) (24,912) (25,912)
(26,912) (27,912) (28,912) (29,912) (30,912)
(31,912) (32,912) (33,912) (34,912) (35,912)
(36,912) (37,912) (38,912) (39,912) (40,912)
(41,912) (42,912) (43,912) (44,912) (45,912)
(46,912) (47,912) (48,912) (49,912) (50,912)
};
\addlegendentry{ALAS-YN}
\draw[dashed, red!60!black] (axis cs:1,912) -- (axis cs:50,912);
\node[red!60!black, anchor=west, font=\scriptsize] at (axis cs:50,912) {+0\%};

\end{axis}
\end{tikzpicture}
\caption{Convergence of ALAS on five datasets and UB. RCMax from DMU Benchmark Set (DMU), Taillard Job Shop Problems (TA), Adams, Balas \& Zawack Job Shop (ABZ), Swv Job Shop Benchmark Set (SWZ), and Yamada and Nakano Benchmark Set (YN). The percentage of gaps for each dataset is 19. 09\%, 0. 86\%, 0\%, 0\%, and 0\%.}
\label{fig:alas_convergence_ub}
\end{figure}

\vspace{-.1in}
\subsection{Examples of Makespan at Convergence}

After plan generation, validation, and local replanning, we obtained optimized makespan results for various benchmark instances. Figure~\ref{fig:gantt_charts} presents a selection of four representative Gantt charts showing JSSP instances of different sizes and complexity.

\newcommand{\ganttHeightScale}{0.88}  

\begin{figure}[b!]
    \centering
    \begin{subfigure}[b]{0.49\textwidth}
        \centering
        \includegraphics[width=\textwidth,height=\ganttHeightScale\textwidth]{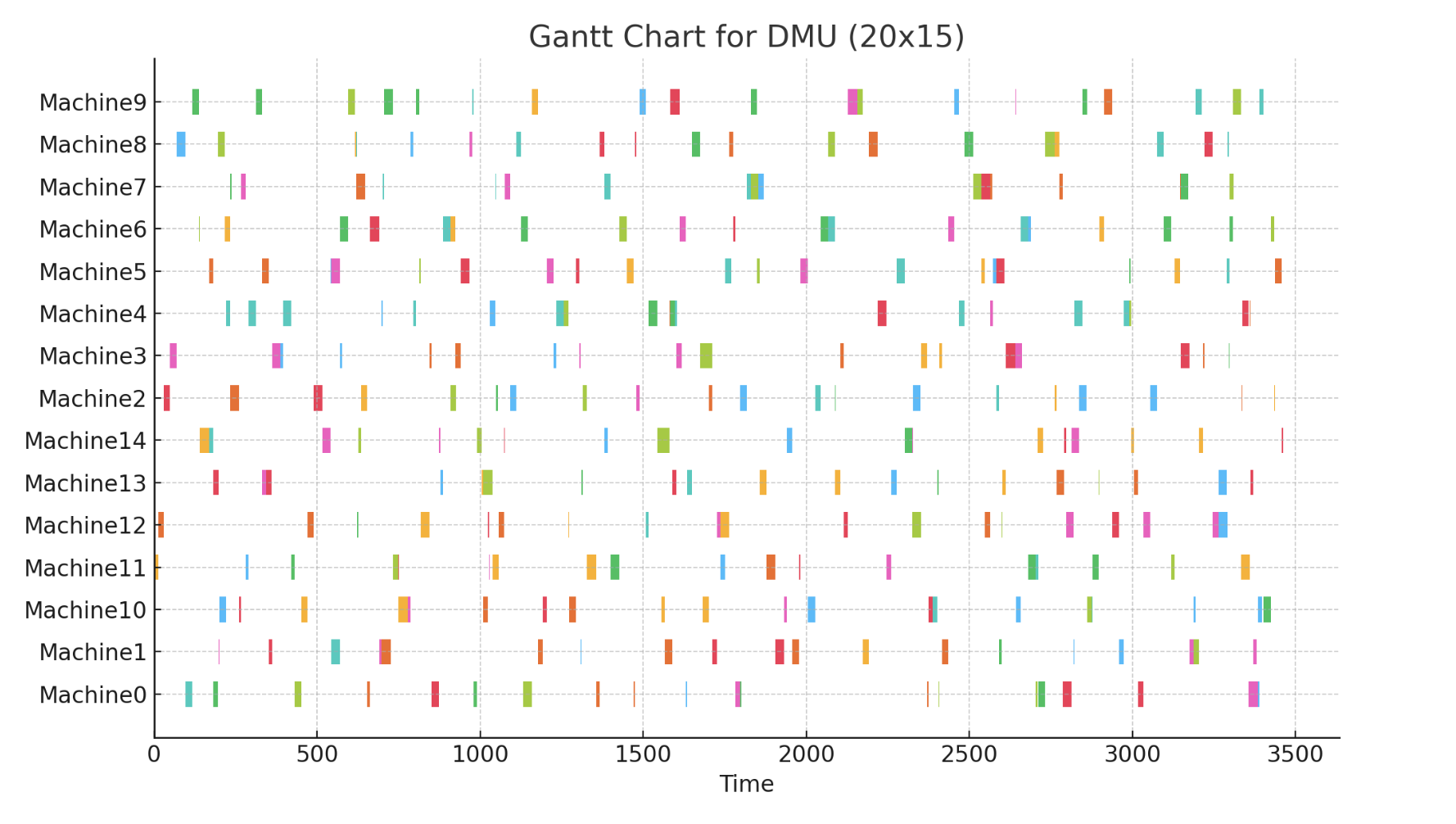}
        \caption{\textbf{rcmax\_20\_15\_5} (J=20, M=15)}
        \label{fig:gantt_rcmax}
    \end{subfigure}
    \hspace{-.2in}
    \begin{subfigure}[b]{0.49\textwidth}
        \centering
        \includegraphics[width=\textwidth,height=\ganttHeightScale\textwidth]{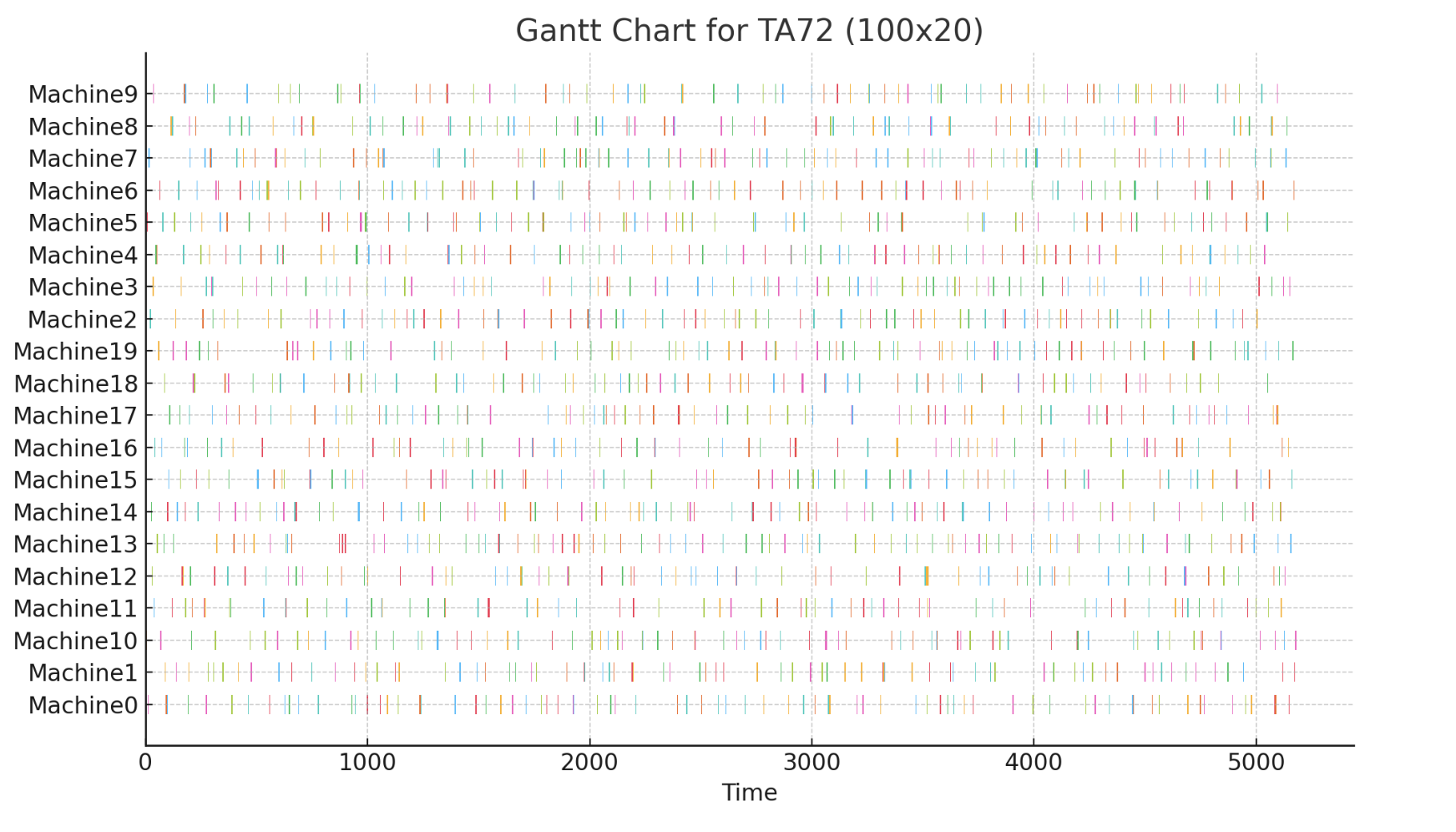}
        \caption{\textbf{Abz07} (J=20, M=15)}
        \label{fig:gantt_abz07}
    \end{subfigure}
    
    \vspace{0.3cm}
    
    \begin{subfigure}[b]{0.49\textwidth}
        \centering
        \includegraphics[width=\textwidth,height=\ganttHeightScale\textwidth]{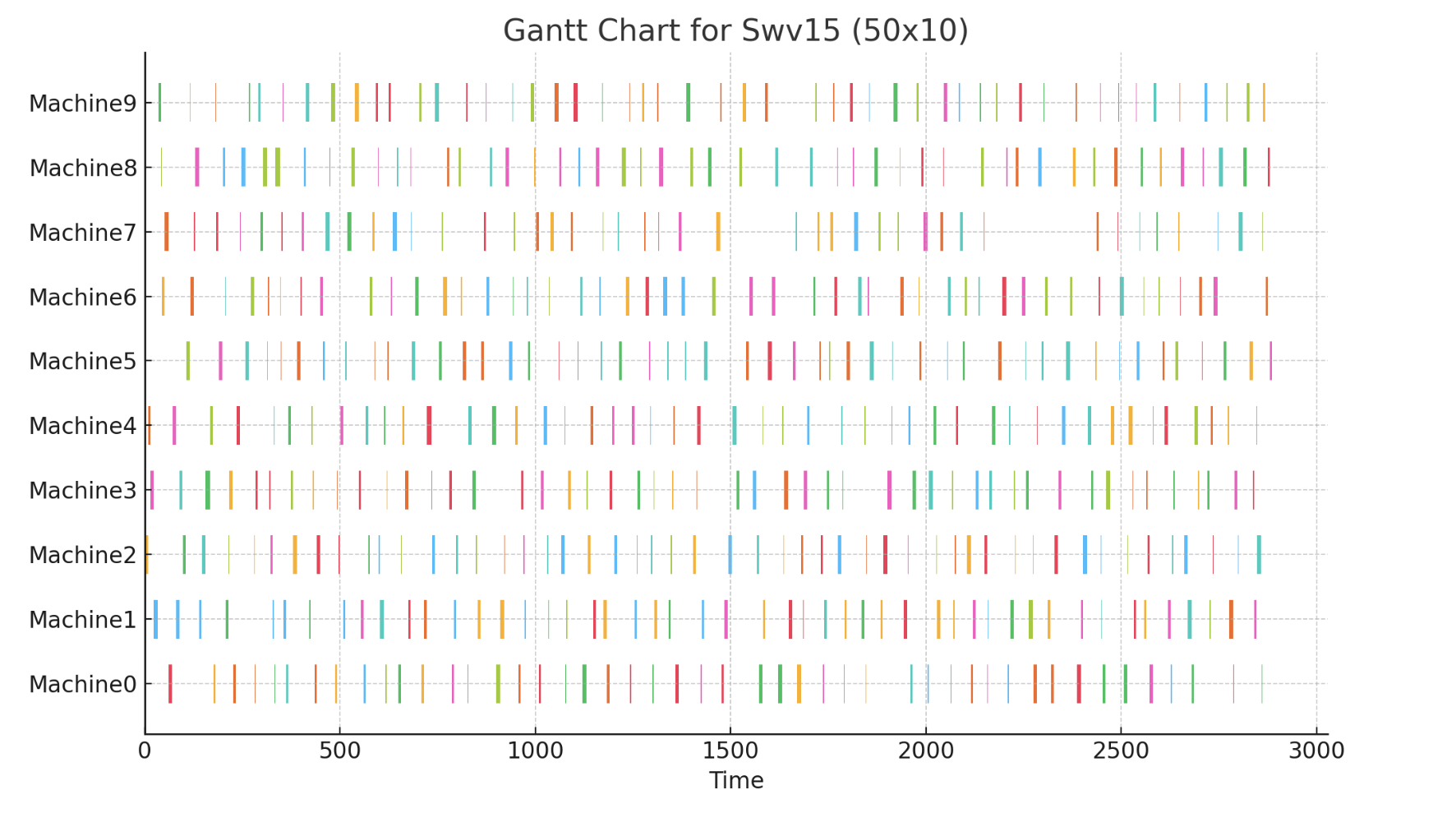}
        \caption{\textbf{Swv15} (J=50, M=10)}
        \label{fig:gantt_swv15}
    \end{subfigure}
    \hspace{-.2in}
    \begin{subfigure}[b]{0.49\textwidth}
        \centering
        \includegraphics[width=\textwidth,height=\ganttHeightScale\textwidth]{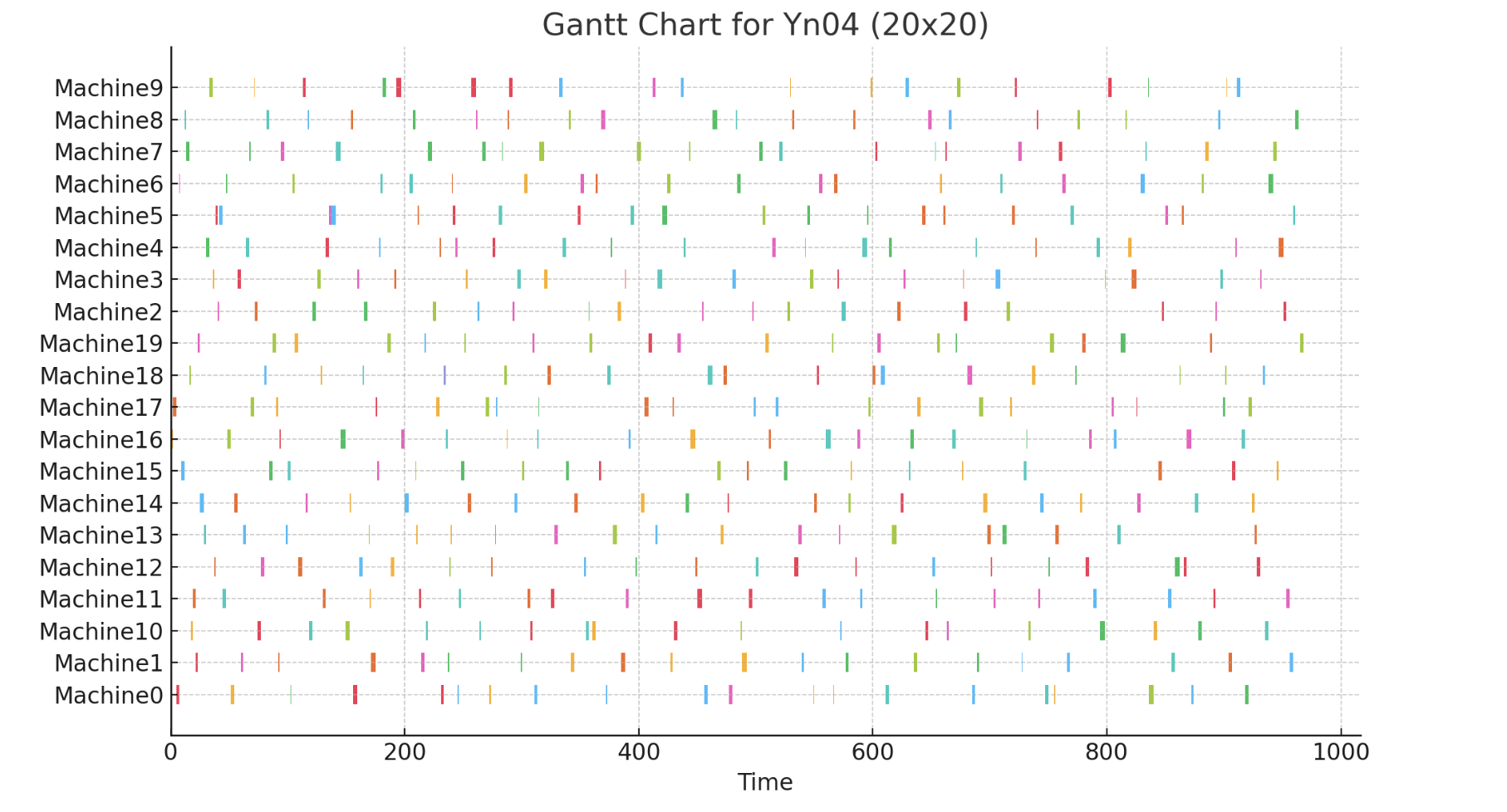}
        \caption{\textbf{Yn04} (J=20, M=20)}
        \label{fig:gantt_yn04}
    \end{subfigure}
    
    \caption{Gantt charts of optimized schedules produced by $\ALAS$ for four representative JSSP benchmark instances with varying job and machine counts. These visualizations demonstrate $\ALAS$'s ability to efficiently allocate resources and minimize makespan across different problem scales. The larger instance TA72 (J=100, M=20) is available in the supplementary materials.}
    \label{fig:gantt_charts}
\end{figure}

\vspace{-.1in}
\subsection{Additional Experimental Results and Analysis}

Figure~\ref{fig:JSSP-DMU-TA-trendline} extends the results presented in Figure~\ref{fig:ALAS-JSSP-exp1} (Section~\ref{sec:ALAS-exp-case3}), showing that both $\ALAS$ and $\ALAS$+LRCP consistently outperform competing methods in virtually all JSSP instances in the DMU and TA benchmarks. While $\ALAS$ provides effective static sequential scheduling, $\ALAS$+LRCP further enhances performance through strategic local job exchanges. Since LRCP only implements rescheduling when viable opportunities exist, it guarantees strict makespan improvements.

\begin{figure}[h!]
\vspace{-.1in}
\centering
\begin{tikzpicture}
\begin{axis}[
    width=0.72\textwidth,
    height=6.5cm,
    xlabel={Instance},
    ylabel={Makespan},
    xtick=data,
    xticklabels={DMU03, DMU04, DMU08, DMU09, DMU13, DMU14, DMU18, DMU19, DMU23, DMU24, DMU28, DMU29, DMU33, DMU34, DMU38, DMU39},
    x tick label style={rotate=45, anchor=east, font=\tiny},
    legend style={font=\scriptsize, at={(1.05,1)}, anchor=north west},
    ymajorgrids=true,
    grid style=dashed,
    every axis plot/.append style={thick},
    title={(a) ALAS and ALAS+LRCP Comparison on DMU Dataset}
]

\addplot+[mark=*, color=blue] coordinates {(0,3827) (1,3889) (2,4228) (3,4094) (4,5451) (5,5306) (6,5326) (7,5174) (8,5948) (9,6078) (10,6737) (11,6602) (12,6890) (13,7523) (14,7685) (15,8097)};
\addlegendentry{Random}

\addplot+[mark=*, color=orange] coordinates {(0,4592) (1,4047) (2,4551) (3,4511) (4,5580) (5,5591) (6,5810) (7,5787) (8,7045) (9,6484) (10,7322) (11,7386) (12,8779) (13,7991) (14,9051) (15,8514)};
\addlegendentry{LPT}

\addplot+[mark=*, color=green] coordinates {(0,3630) (1,3541) (2,4714) (3,4283) (4,4813) (5,4583) (6,6231) (7,5126) (8,6250) (9,5503) (10,6558) (11,6565) (12,7361) (13,7026) (14,7954) (15,7592)};
\addlegendentry{SPT}

\addplot+[mark=*, color=cyan] coordinates {(0,4232) (1,4642) (2,4459) (3,4690) (4,5207) (5,4811) (6,5480) (7,5203) (8,6521) (9,6595) (10,7697) (11,7690) (12,7631) (13,7740) (14,8555) (15,8908)};
\addlegendentry{STPT}

\addplot+[mark=*, color=purple] coordinates {(0,3435) (1,3355) (2,3999) (3,3869) (4,4759) (5,4238) (6,5003) (7,4930) (8,5383) (9,5358) (10,5927) (11,6107) (12,6282) (13,6359) (14,7604) (15,6953)};
\addlegendentry{MPSR}

\addplot+[mark=*, color=teal] coordinates {(0,3303) (1,3321) (2,4098) (3,3753) (4,4708) (5,4124) (6,4800) (7,4837) (8,5240) (9,5319) (10,5948) (11,5824) (12,6458) (13,6284) (14,7275) (15,6776)};
\addlegendentry{DRL-Liu}

\addplot+[mark=*, color=brown] coordinates {(0,3540) (1,3406) (2,3802) (3,4196) (4,4765) (5,4289) (6,4696) (7,4666) (8,5391) (9,5560) (10,6017) (11,6236) (12,6109) (13,6327) (14,7267) (15,6941)};
\addlegendentry{GP}

\addplot+[mark=*, color=violet] coordinates {(0,3651) (1,3499) (2,4023) (3,4136) (4,4812) (5,4213) (6,4917) (7,5245) (8,5595) (9,5458) (10,6142) (11,6224) (12,6081) (13,6279) (14,7501) (15,7124)};
\addlegendentry{GEP}

\addplot+[mark=*, color=black] coordinates {(0,3462) (1,3235) (2,3728) (3,3857) (4,4658) (5,3980) (6,4724) (7,4715) (8,5151) (9,5226) (10,5838) (11,5941) (12,6029) (13,6148) (14,7168) (15,6693)};
\addlegendentry{SeEvo(GLM3)}

\addplot+[mark=*, color=black!70] coordinates {(0,3238) (1,3212) (2,3728) (3,3828) (4,4709) (5,3980) (6,4724) (7,4816) (8,5258) (9,5316) (10,5944) (11,5825) (12,6029) (13,6146) (14,7170) (15,6590)};
\addlegendentry{SeEvo(GPT3.5)}

\addplot+[mark=*, color=red, dashed] coordinates {(0,3462) (1,3235) (2,3728) (3,3857) (4,4658) (5,3980) (6,4724) (7,4715) (8,5151) (9,5226) (10,5838) (11,5941) (12,6029) (13,6148) (14,7168) (15,6693)};
\addlegendentry{ALAS+LRCP (Claude-3.7)}

\addplot+[mark=*, color=red, dash dot] coordinates {(0,3462) (1,3235) (2,3728) (3,3857) (4,4658) (5,3980) (6,4724) (7,4715) (8,5151) (9,5226) (10,5838) (11,5941) (12,6029) (13,6148) (14,7168) (15,6693)};
\addlegendentry{ALAS (Claude-3.7)}
\end{axis}
\end{tikzpicture}

\begin{tikzpicture}
\begin{axis}[
    width=0.88\textwidth,
    height=6cm,
    xlabel={Instance},
    ylabel={Makespan},
    symbolic x coords={TA01, TA02, TA51, TA52, TA61, TA71, TA72},
    xtick=data,
    xticklabel style={rotate=45, anchor=east, font=\tiny},
    legend style={at={(0.5,-0.3)}, anchor=north, legend columns=3, font=\tiny},
    ymajorgrids=true,
    grid style={dashed,gray!30},
    title={(b) ALAS and ALAS+LRCP Comparison on TA Dataset}
]

\addplot+[mark=*, thick, blue] coordinates {
    (TA01,1957) (TA02,1759) (TA51,3844) (TA52,3715) (TA61,4188) (TA71,6754) (TA72,6674)};
\addlegendentry{LSO}

\addplot+[mark=triangle*, thick, blue] coordinates {
    (TA01,1664) (TA02,1538) (TA51,3768) (TA52,3588) (TA61,3752) (TA71,6705) (TA72,6351)};
\addlegendentry{SPT/TWKR}

\addplot+[mark=square*, thick, blue] coordinates {
    (TA01,1711) (TA02,1639) (TA51,3762) (TA52,3511) (TA61,3633) (TA71,6321) (TA72,6232)};
\addlegendentry{DRL-Chen}

\addplot+[mark=diamond*, thick, blue] coordinates {
    (TA01,1433) (TA02,1544) (TA51,3599) (TA52,3341) (TA61,3654) (TA71,6452) (TA72,5695)};
\addlegendentry{DRL-Zhang}

\addplot+[mark=*, thick, cyan] coordinates {
    (TA01,1492) (TA02,1425) (TA51,3608) (TA52,3524) (TA61,3548) (TA71,6289) (TA72,6002)};
\addlegendentry{DRL-Liu}

\addplot+[mark=triangle*, thick, cyan] coordinates {
    (TA01,1547) (TA02,1565) (TA51,3603) (TA52,3346) (TA61,3685) (TA71,6305) (TA72,5776)};
\addlegendentry{GP}

\addplot+[mark=square*, thick, cyan] coordinates {
    (TA01,1547) (TA02,1486) (TA51,3668) (TA52,3324) (TA61,3642) (TA71,6278) (TA72,5625)};
\addlegendentry{GEP}

\addplot+[mark=diamond*, thick, gray] coordinates {
    (TA01,1427) (TA02,1465) (TA51,3364) (TA52,3286) (TA61,3529) (TA71,6071) (TA72,5604)};
\addlegendentry{SeEvo(GLM3)}

\addplot+[mark=otimes*, thick, gray] coordinates {
    (TA01,1427) (TA02,1437) (TA51,3412) (TA52,3245) (TA61,3537) (TA71,6099) (TA72,5575)};
\addlegendentry{SeEvo(GPT3.5)}

\addplot+[mark=*, thick, red] coordinates {
    (TA01,1243) (TA02,1252) (TA51,2766) (TA52,2819) (TA61,2905) (TA71,5478) (TA72,5198)};
\addlegendentry{ALAS+LRCP}

\addplot+[mark=*, thick, red, dashed] coordinates {
    (TA01,1231) (TA02,1244) (TA51,2760) (TA52,2756) (TA61,nan) (TA71,5464) (TA72,nan)};
\addlegendentry{ALAS}
\end{axis}
\end{tikzpicture}
\vspace{.1in}
\caption{Performance of $\ALAS$ and $\ALAS$+LRCP compared to baseline methods across DMU and TA benchmarks. Lower makespan values indicate better performance.
The size of JSSP instances grows by the instance number, and therefore the
makespan values.}
\label{fig:JSSP-DMU-TA-trendline}
\end{figure}
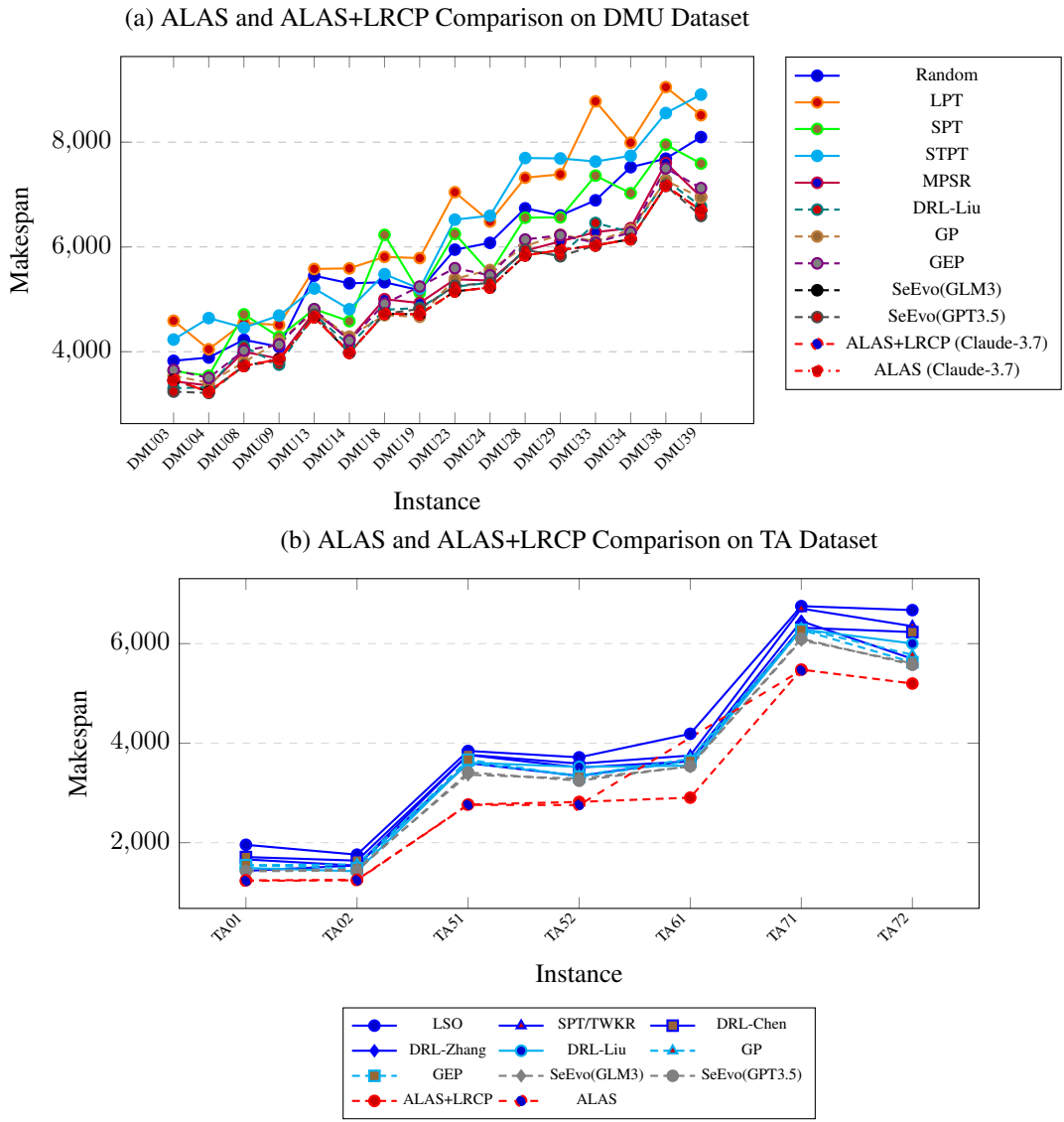

\end{document}